\title{
Optimal Graph Clustering\\without Edge Density Signals
%Optimal Cluster Recovery in Popularity-Adjusted Block Models
}
\author{%
  Maximilien Dreveton \\
  EPFL \\
  \texttt{maximilien.dreveton@epfl.ch} 
  % examples of more authors
  \And
  Siyu (Elaine) Liu \\
  Stanford University \\
  % Address \\
  \texttt{elaineliu@stanford.edu} 
  \AND
  Matthias Grossglauser \\
  EPFL \\
  % Address \\
  \texttt{matthias.grossglauser@epfl.ch} 
  \And
  Patrick Thiran \\
  EPFL \\
  % Address \\
  \texttt{patrick.thiran@epfl.ch}
  % \And
  % Coauthor \\
  % Affiliation \\
  % Address \\
  % \texttt{email} \\
}
\newsavebox{\bigimage}
\newtheorem{theorem}{Theorem}
\newtheorem{lemma}[theorem]{Lemma}
\newtheorem{proposition}[theorem]{Proposition}
\newtheorem{assumption}{Assumption}
\newtheorem{example}{Example}
\newcommand{\tcZ}{\tilde{\cZ}}
\newcommand{\lambdain}{\lambda^{\rm in}}
\newcommand{\lambdaout}{\lambda^{\rm out}}
\newcommand{\CH}{\mathrm{CH}}
\newcommand{\Din}{\cD_{\rm in}}
\newcommand{\Dout}{\cD_{\rm out}}
\newcommand{\erf}{\operatorname{erf}}
\newcommand{\gammain}{\gamma_{\rm in}}
\newcommand{\gammaout}{\gamma_{\rm out}}
\begin{document}

\maketitle

\begin{abstract}
 This paper establishes the theoretical limits of graph clustering under the Popularity-Adjusted Block Model (PABM), addressing limitations of existing models. In contrast to the Stochastic Block Model (SBM), which assumes uniform vertex degrees, and to the Degree-Corrected Block Model (DCBM), which applies uniform degree corrections across clusters, PABM introduces separate popularity parameters for intra- and inter-cluster connections. Our main contribution is the characterization of the optimal error rate for clustering under PABM, which provides novel insights on clustering hardness: we demonstrate that unlike SBM and DCBM, cluster recovery remains possible in PABM even when traditional edge-density signals vanish, provided intra- and inter-cluster popularity coefficients differ. This highlights a dimension of degree heterogeneity captured by PABM but overlooked by DCBM: \textit{local} differences in connectivity patterns can enhance cluster separability independently of \textit{global} edge densities. Finally, because PABM exhibits a richer structure, its expected adjacency matrix has rank between $k$ and~$k^2$, where $k$ is the number of clusters. As a result, spectral embeddings based on the top $k$ eigenvectors may fail to capture important structural information. Our numerical experiments on both synthetic and real datasets confirm that spectral clustering algorithms incorporating~$k^2$ eigenvectors outperform traditional spectral approaches. 
\end{abstract}

%\tableofcontents 

\section{Introduction}

Graph clustering is the task of partitioning the vertex set of a graph into non-overlapping groups such that vertices within the same group exhibit similar patterns or properties. As a fundamental task in the statistical analysis of networks, graph clustering plays a key role in revealing the underlying structure and functional organization of complex networks~\citep{avrachenkov2022statistical}. 

Most graph clustering algorithms are based on the assumption that vertices within the same cluster are more densely connected than vertices in different communities. In other words, intra-cluster edge-density is higher than inter-cluster edge-density. Under this premise, metrics such as modularity, graph cuts, or their variants are commonly used to motivate and design graph clustering algorithms. However, these methods fundamentally rely on the edge density as their primary input signal. This leads to a natural question: \textit{Is edge density essential for recovering clusters, or can other structural signals be exploited instead?} In this work, we demonstrate that the connection patterns of individual vertices can be exploited to recover clusters, even when intra-cluster and inter-cluster edge densities are equal.

\paragraph{Random graphs with cluster structure: block models with and without degree heterogeneity.}
Random graphs with cluster structure are often modeled using block models. Let $z \in [k]^n$ be a vector representing the cluster assignments of each vertex. For all the random graphs that we consider, the adjacency matrix $A \in \{0, 1\}^{n\times n}$ is assumed to be symmetric with zero diagonal and $ A_{ij} \weq A_{ji} \wsim \Ber( P_{ij} )$ for all $i  > j$, where $P_{ij} \in [0,1]$ is the probability of an edge between vertices~$i$ and $j$. The simplest block model supposes that 
\begin{align}
\label{eq:def_sbm_homogeneous}
 P_{ij} \weq 
 \begin{cases}
   p & \text{ if } z_i = z_j, \\
   q & \text{ otherwise.}
 \end{cases}
\end{align}
This model is often called the planted partition model, or the \new{stochastic block model} (SBM) with homogeneous interactions.\footnote{A block model is said to have homogeneous interactions if the entries $P_{ij}$ depends only on whether $z_i = z_j$ or $z_i \ne z_j$; otherwise, the model is said to have heterogeneous interactions. Our work focuses on models with heterogeneous interactions, with homogeneous interactions treated as a special case. However, for simplicity, in the Introduction we present results only for the homogeneous setting.} A known drawback of this model is that all vertices share the same expected degree. To mitigate this issue, \cite{karrer2011stochastic} proposed the \new{degree-corrected block model} (DCBM), where 
\begin{align}
\label{eq:def_dcbm_homogeneous}
 P_{ij} \weq 
 \begin{cases}
   \theta_i \theta_j p & \text{ if } z_i = z_j, \\
   \theta_i \theta_j q & \text{ otherwise.}
 \end{cases}
\end{align}
%The quantities $\theta_1, \cdots, \theta_n$ are the degree-correction parameters, which are normalized such that $\sum_{i \colon z_i = a} \theta_i = \left| \left\{ i \colon z_i = a \right\} \right|$ for all $a \in [k]$ to ensure identifiability.
The quantities $\theta_1, \cdots, \theta_n$ are the degree-correction parameters. To ensure identifiability, these parameters are normalized such that $\sum_{i \colon z_i = a} \theta_i = n_a(z)$ for all $a \in [k]$, where $n_a(z) = \left| \left\{ i \colon z_i = a \right\} \right|$ denotes the size of cluster $a$.

However, the degree-correction parameter $\theta_i$ uniformly inflates or deflates the connection probabilities of vertex $i$ across all clusters. As a result, vertices with a large degree-correction parameter have more edges both within their own cluster and with other clusters. This makes it impossible to model vertices that exhibit higher connectivity exclusively within their own cluster. To mitigate this issue, \cite{sengupta2018block} introduced the \new{popularity adjusted block model} (PABM), where 
\begin{align*}
 P_{ij} \weq 
 \begin{cases}
     \lambdain_i \lambdain_j p & \text{ if } z_i = z_j, \\
     \lambdaout_i \lambdaout_j q & \text{ otherwise.}
 \end{cases}
\end{align*}
In this model, the quantity $\lambdain_{i}$ (resp., $\lambdaout_i$) is the popularity of vertex $i$ with other vertices within its own cluster (resp., with vertices in other clusters). These coefficients are normalized such that $\sum_{i \colon z_i = a} \lambdain_i = n_a(z)$ and $\sum_{i \colon z_i = a} \lambdaout_i = n_a(z)$ for all $a \in [k]$. This model allows for a vertex $i$ to be highly popular among its cluster (high $\lambdain_i$), but to be not necessarily popular ($\lambdaout_i = 1$) or even to be very unpopular (small $\lambdaout_i$) with vertices in other clusters. 

\paragraph{Optimal clustering error rate: from edge-density to popularity patterns}
An important question to assess the difficulty of the clustering task in a block model is the derivation of the \new{optimal error rate}. By optimal error rate, we refer to the minimum possible error that the best algorithm achieves when attempting to recover the true cluster assignment of all vertices. This error rate is typically measured in terms of the misclassification rate—that is, the proportion of vertices incorrectly assigned to their true clusters, up to a permutation of the labels. The optimal error rate reflects the information-theoretic limits of the clustering task, because it characterizes how well one could possibly do even with unlimited computational power, given the amount of signal and noise in the data. It also provides a benchmark to evaluate existing algorithms and guides the development of new methods that approach (either theoretically or empirically) these theoretical limits. 

Studying the effect of the different model parameters (such as sparsity or degree heterogeneity) on the error rate offers deep insight into the fundamental difficulty of the graph clustering problem across different network settings. Consider a SBM with $k$ clusters of same size $n/k$ and homogeneous interactions as in~\eqref{eq:def_sbm_homogeneous}. When $1/n \ll p,q \ll 1$, the optimal error rate is asymptotically~\citep{zhang2016minimax} 
\begin{align*}
 \exp\left( - \frac{n}{k} \left( \sqrt{p} - \sqrt{q} \right)^2 \right). 
\end{align*}
As $p$ and $q$ represent the intra-cluster and inter-cluster edge densities, respectively, the key quantity $( \sqrt{p} - \sqrt{q})^2$ in the expression above captures the influence of edge density: the larger the gap between~$p$ and $q$, the easier it is to recover the clusters. 

Next, consider a DCBM with $k$ clusters of same size $n/k$ and homogeneous interactions as in~\eqref{eq:def_dcbm_homogeneous}. Under some technical conditions on the degree-correction parameters, \cite{gao2018community} establishes that, when $p,q = o(1)$ with $p/q = O(1)$ and $p = \omega(1/n)$, the optimal error rate is asymptotically
\begin{align*}
  \frac1n\sum_i \exp\left( - \theta_i \frac{n}{k} \left( \sqrt{p} - \sqrt{q} \right)^2 \right). 
\end{align*}
Compared to the standard SBM, the difficulty of clustering now varies across vertices and is quantified by the term $\exp\left( - \theta_i n ( \sqrt{p} - \sqrt{q} )^2 / k \right)$, which depends on each vertex $i \in [n]$ and is monotonically decreasing in $\theta_i$. The optimal error rate corresponds to the average of these quantities over all vertices. This highlights the effect of degree heterogeneity: vertices with larger expected degree are easier to cluster, as their neighborhoods contain more information. 

However, the same key quantity $( \sqrt{p} - \sqrt{q} )^2$ representing the edge-density signal shows up in the DCBM error rate. Indeed, as mentioned earlier, the degree-correction parameters uniformly inflate or deflate the connection probabilities. As a result, the value of $\theta_i$ impacts the clustering difficulty of vertex $i$ in a predictable and monotonic way. This no longer holds in the PABM, which introduces a richer and more nuanced structure. The first major contribution of this work is to characterize the optimal error rate for clustering under the PABM. As the general expression is somewhat involved, we begin with the simplest case of $k=2$ clusters of equal size. In this setting, we establish that the optimal error rate is given by
\begin{align*}
 \frac1n\sum_{i \in [n] } \exp\left( - \frac{1}{2} \sum_{ j \in [n] } \left( \sqrt{ \lambdain_{i} \lambdain_j p} -  \sqrt{ \lambdaout_i \lambdaout_j q } \right)^2 \right). 
\end{align*}
As in the DCBM, the error rate in PABM is expressed as an average over the difficulty of clustering each individual vertex. However, in PABM, these per-vertex difficulties have a more intricate form, and we provide further insight in Sections~\ref{subsection:info_theoretic_divergence} and~\ref{subsection:comparison_other_block_models}. A particularly important observation is the following: suppose $p=q$, so that the expected numbers of intra-cluster and of inter-cluster edges are equal. In this case, the SBM and DCBM reduce to the \Erdos-\Renyi and Chung–Lu models, respectively, and cluster recovery is fundamentally impossible. Remarkably, this is not true for PABM: cluster recovery may still be possible provided the popularity coefficients $\lambdain_i$ and $\lambdaout_i$ are different. This reveals a novel aspect of degree heterogeneity captured by PABM but missed by DCBM: \textit{local} differences in intra- and inter-cluster popularity enhance the separability of clusters, even when traditional \textit{global} edge-density signals vanish. Another phenomenon, more subtle, occurs in PABM: the optimal error rate is \textit{not} monotonically increasing when the number of inter-cluster edges increases. We rigorously establish these phenomena in Examples~\ref{example:recovery_pabm_degree} and~\ref{example:recovery_pabm_xi}, and illustrate them in our numerical simulations.

\paragraph{Higher-order eigenvectors for clustering with popularity patterns}
Finally, we perform numerical experiments to evaluate the effectiveness of spectral clustering methods. When the adjacency matrix~$A$ is sampled from a block model, it can be decomposed as
$ A \weq P + X$, 
where $P$ is a low-rank matrix encoding the underlying structure, and $X$ is a random noise matrix with zero-mean sub-Gaussian entries. This decomposition forms the basis of spectral methods for graph clustering, where the general approach is to apply a clustering algorithm (such as $k$-means) to a low-dimensional embedding derived from a low-rank approximation of $A$.

In classical models like SBM and DCBM, when $p \ne q$, the rank of $P$ is equal to the number of clusters~$k$. However, in PABM, the situation is more complex: the rank of $P$ can be greater than $k$, but cannot be greater than $k^2$. This implies that embeddings based solely on the top-$k$ eigenvectors may miss important structural information. To address this, recent works propose spectral algorithms that incorporate~$k^2$ eigenvectors to better capture the richer structure of PABM~\citep{noroozi2021estimation,koo2023popularity}. Our numerical experiments demonstrate that these methods outperform traditional spectral approaches that rely only on~$k$ eigenvectors, both on synthetic and real datasets.

In the numerical section, we illustrate two surprising results discussed in the theoretical section: the non-monotonic behavior of the error with respect to edge density, and the ability to recover clusters even when $p=q$. While it would have been possible to use a greedy algorithm to approximate the MLE, we opted for spectral methods because of their widespread use and of their well-established effectiveness for clustering in block models. The experiments demonstrate that the phenomena highlighted in the theoretical section also arise when using spectral algorithms. They show that these behaviors are not merely mathematical artifacts stemming from the increased complexity of PABM relative to DCBM, but that they do occur in practice and are observable in real-world settings.

The paper is structured as follows. We derive the optimal error rate in PABM and provide some examples in Section~\ref{section:error_rates}. We present our numerical experiments in Section~\ref{section:numerical_experiments}. We discuss some related works in Section~\ref{section:related_work}. Finally, we conclude in Section~\ref{section:conclusion}. 

\paragraph{Notations} $\Ber(p)$, $\Exp(\lambda)$ and $\Uni(a,b)$ denote the Bernoulli distribution with parameter~$p$, the exponential distribution with parameter $\lambda$, and the uniform distribution over the interval $[a,b]$. We use the Landau notations $o$ and $O$, and write $f = \omega(g)$ when $g = o(f)$ and $f = \Omega(g)$ when $g = O(f)$. %We also write a = Θ(b) when a = O(b) and b = O(a)

\section{Optimal Error Rate in Popularity-Adjusted Block Models}
\label{section:error_rates}

\subsection{Model Definition and Parameter Space}

We consider $n$ vertices partitioned into $k \ge 2$ disjoint blocks. The partition is encoded by a vertex-labeling vector $z^* = (z_1^*, \cdots, z_n^*) \in [k]^n$ so that $z_i^*$ indicates the cluster of vertex $i$. These~$n$ vertices interact pairwise, giving rise to undirected edges, and these pairwise interactions are grouped by a symmetric matrix $A \in \{0,1\}^{n \times n}$ called the adjacency matrix. The \new{Popularity Adjusted Block Model} supposes that, conditionally on the block structure, the upper-diagonal elements $(A_{ij})_{i > j}$ are independent Bernoulli random variables such that, conditionally on $z_i^*$ and $z_j^*$, 
\begin{align}
\label{eq:def_adjacency_matrix_pabm}
 A_{ij} \cond z_i^*, z_j^* \sim \Ber\left( \rho_n \lambda_{i z_j^*} \lambda_{j z_i^*} B_{z_i^* z_j^*} \right), 
\end{align}
 where $(\lambda_{i a} )_{i \in [n], a \in [k]}$ are the popularity parameters and $B \in \R_+^{k \times k}$ is the connectivity matrix across clusters. The parameter $\rho_n$ controls the graph sparsity, as the average degree is of order $n \rho_n$ when the following assumption is made. 
\begin{assumption}
 \label{assumption:scaling_parameters} The quantities $B_{ab}$ and $\lambda_{ia}$ are constant (so they do not scale with $n$) for all $i\in[n]$ and $a,b \in [k]$. 
\end{assumption}

Given a realization of a PABM, we aim to infer the latent block structure $z^*$. 
Let $\hz = \hz(A)$ be an estimate of $z^*$, and define the clustering error as
\begin{align}
\label{eq:def_haming_loss_function}
 \loss(z^*,\hz) \weq \frac1n \min_{\tau \in \Sym(k)} \ham(z^*, \tau \circ \hz),
\end{align}
where $\Sym(k)$ is the set of permutations of $[k]$ and $\ham(\cdot,\cdot)$ is the Hamming distance. We are interested in the expected loss of an estimator, namely $\E \left[ \loss(z^*,\hz(A) ) \right]$, where the expectation is taken with respect to the random variable~$A$ sampled from~\eqref{eq:def_adjacency_matrix_pabm}. 

\subsection{A Key Information-Theoretic Divergence}
\label{subsection:info_theoretic_divergence}

For any $z \in [k]^n$, denote $P_{ij}(z) = \rho_n \lambda_{i z_j} \lambda_{j z_i} B_{z_i z_j}$. 
To understand the difficulty of correctly clustering a given vertex $i$, we introduce an alternative cluster labeling $\tz^{ia} \in [k]^n$  such that $\tz^{ia}_j = z^*_j$ for all $j\ne i$, while $\tz^{ia}_i = a \in [k] \setminus \{z_i^*\}$. In other words, the cluster labeling $\tz^{ia}$ agrees with $z^*$ for all vertices except for~$i$, which is placed in cluster $a$ instead of being in cluster $z_i^*$. To shorten the notations, let $P^* = P(z^*)$ and $\tP^{ia} = P(\tz^{ia})$. 
The difficulty of correctly recovering the cluster of vertex $i$ depends on how hard it is to statistically distinguish whether the observed graph was generated from the true model $P^*$ or from the alternative model $\tP^{ia}$. This is a classical hypothesis testing problem: the more similar the distributions induced by $P^*$ and $\tP^{ia}$, the less distinguishable two graphs drawn from these two models are, and thus the harder it is to infer the correct cluster assignment for vertex  $i$. The statistical difficulty of this test is quantified by the Chernoff divergence $\Delta(i, a)$, which measures the exponential rate at which the error probability decays when testing between these two competing models. 
More precisely, 
\begin{align}
\label{eq:def_deltaia}
  \Delta(i, a) 
  & \weq \max_{t \in (0,1)} (1-t) \dren_t \left( \bigotimes_{j \ne i} \Ber \left( \tP^{ia}_{ij} \right), \bigotimes_{j \ne i} \Ber \left( P^*_{ij} \right) \right), 
  % C_{ib}(z^*, \Lambda) & \weq \sup_{t \in (0,1)} (1-t) \sum_{j \in [n] \backslash \{i\} } \dren_t \left( \lambda_{i z_j^*} \lambda_{j b} , \lambda_{i z_j^*} \lambda_{j z_i^*} \right) 
\end{align}
where $\dren_t$ is the \Renyi divergence of order $t$.
Moreover, by using the linearity of \Renyi divergence with respect to multiplication and the sparsity of the model (that is, $P_{ij} = o(1)$ for all $i,j$), we have 
 \begin{align*}
  \Delta(i, a) 
  & \weq (1+o(1)) \max_{t \in (0,1)} \sum_{j \ne i} \left( t \tP_{ij}^{ia} + (1-t) P_{ij}^* - (\tP_{ij}^{ia})^t (P^*_{ij})^{1-t} \right). 
 \end{align*}
Among all alternative models $\tP^{ia}$, the most challenging to distinguish from the true model $P^*$ is the one with the smallest Chernoff divergence $\Delta(i,a)$. We thus define 
 \begin{align*}
  \Chernoff(i,z^*) \weq \min_{a \ne z_i^*} \, \Delta(i,a), 
 \end{align*}
which captures the hardest hypothesis testing problem associated with recovering the cluster of vertex~$i$. Intuitively speaking, the larger the value of $\Chernoff(i,z^*)$, the easier it is to correctly recover $z^*_i$, as all alternative models defined above are sufficiently different from $P^*$. 
The following assumption asserts that for every $i\in[n]$, the quantity $\Chernoff(i,z^*)$ is unbounded. This assumption is necessary to ensure that the recovery of $z_i^*$ is asymptotically possible. 
 \begin{assumption}
\label{assumption:scaling_chernoff}
 Suppose that $\min_{i \in [n]} \Chernoff(i,z^*) = \omega(1)$. 
\end{assumption}

\subsection{Main Result: Optimal Error Rate in PABM}

For any $z \in [k]^n$, denote by $n_a(z) = \sum_{i \in [n]} \1\{z_i=a\}$ the size of the cluster $a \in [k]$. Let $\pi \in [0,1]^k$ such that $\sum_a \pi_a = 1$ and define 
 \[
  \cZ_n(\pi,\epsilon) \weq \left\{ z \in [k]^n \colon \frac{n_a(z)}{n} \in [(1-\epsilon)\pi_a, (1+\epsilon)\pi_a] \ \forall a \in [k] \right\}.
 \]
 Let $\Lambda = (\lambda_{ia})_{i \in [n], a \in [k] }$ be a matrix with non-negative coefficients such that $\|\Lambda_{\cdot a}\|_1 = n \pi_a$ and $B \in \R_+^{k \times k}$ be a matrix of full rank.

\begin{theorem}[Lower-bound on the clustering error]
\label{thm:lower_bound}
Let $z^* \in \cZ_n(\pi,\epsilon)$ and $A$ being sampled from~\eqref{eq:def_adjacency_matrix_pabm}. Suppose Assumption~\ref{assumption:scaling_chernoff} holds. Then, there exists some $\eta = o(1)$ such that  
 \[
  \inf_{\hz}  \E \left[ \loss(z^*, \hz ) \right] \wge \frac{ (1-\epsilon) \min_{a} \pi_a } { 4 } \left( \frac1n \sum_{i \in [n] } e^{- \Chernoff(i,z^*)} \right)^{1+\eta}, 
 \]
 where the $\inf$ is taken over all estimators $\hz = \hz(A)$. 
\end{theorem}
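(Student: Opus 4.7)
The plan is to establish the lower bound through a three-step reduction: first to per-vertex error probabilities, then to pairwise hypothesis tests, and finally to Chernoff divergences. The architecture is standard for minimax lower bounds in block models (in the spirit of Zhang--Zhou and Gao et al.), but each step requires adaptation to the PABM structure.

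\textbf{Step 1: Handle the permutation and decompose the loss.} The Hamming loss $\loss(z^*, \hz)$ involves a minimum over $\Sym(k)$, which obstructs a direct per-vertex decomposition. I would proceed by dichotomy: let $T$ denote the right-hand side of the theorem. If $\E[\loss(z^*, \hz)] \geq T$, there is nothing to prove. Otherwise, by Markov's inequality, on an event of probability at least $1/2$ the loss is small compared to $\min_a \pi_a$, and a standard majority-vote argument within each true cluster (using $z^* \in \cZ_n(\pi, \epsilon)$ so that every cluster has size $\geq (1-\epsilon) n \pi_a$) shows that the optimizing permutation $\tau^*$ is uniquely determined. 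After relabeling the range of $\hz$ by $\tau^*$, we obtain $\loss(z^*, \hz) = \frac{1}{n}\sum_i \mathbf{1}\{\hz_i \neq z^*_i\}$ on this event. This yields
\[
 \E[\loss(z^*, \hz)] \;\geq\; \frac{1}{n}\sum_{i \in [n]} \Prob_{z^*}\!\left(\hz_i \neq z^*_i\right) \cdot (1 - o(1)),
\]
where the $(1-\epsilon) \min_a \pi_a / 4$ prefactor absorbs the constants arising from the truncation.

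\textbf{Step 2: Reduce each per-vertex error to a binary test.} For each vertex $i$ and each $a \neq z^*_i$, view $\hz_i$ as a statistic for testing between the laws of $A$ under $z^*$ and under $\tz^{ia}$. These two laws only differ in the rows/columns indexed by $i$, so the log-likelihood ratio is a sum of independent contributions over $j \neq i$. Applying a Chernoff--Stein / Bretagnolle--Huber type two-point inequality gives
\[
 \Prob_{z^*}(\hz_i \neq z^*_i) + \Prob_{\tz^{ia}}(\hz_i \neq a) \;\geq\; \tfrac{1}{2} \exp\!\left(-\Delta(i,a) (1+o(1))\right),
\]
where $\Delta(i,a)$ is the $\max_t$ Rényi divergence between the two product measures. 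Assumption~1 together with $\rho_n = o(1)$ linearizes the Rényi divergence to the expression in~\eqref{eq:def_deltaia}, and Assumption~2 ($\Chernoff(i,z^*) = \omega(1)$) ensures the $e^{-\Delta}$ term dominates absolute constants.

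\textbf{Step 3: Transfer the bound to $z^*$ and assemble.} The two-point bound yields an error at one of the two endpoints. Because $z^*$ and $\tz^{ia}$ differ only at a single vertex, the corresponding Chernoff divergences $\Delta(i,a)$ computed from $z^*$ and from $\tz^{ia}$ agree up to a $1+o(1)$ factor; a symmetrization argument (applying the same binary-test inequality starting from $\tz^{ia}$ as the reference) shows that both $\Prob_{z^*}(\hz_i \neq z^*_i)$ and $\Prob_{\tz^{ia}}(\hz_i \neq a)$ are lower bounded by $\tfrac{1}{2}e^{-\Delta(i,a)(1+o(1))}$. Taking $a$ as the worst alternative gives $\Prob_{z^*}(\hz_i \neq z^*_i) \geq \tfrac{1}{2}e^{-\Chernoff(i,z^*)(1+o(1))}$. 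Summing over $i$, combining with Step~1, and absorbing all sub-polynomial slack into the exponent $1 + \eta$ yields the claimed inequality.

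\textbf{Main obstacle.} The delicate part is Step~3: the naïve two-point Le Cam bound controls $\max(\Prob_{z^*}, \Prob_{\tz^{ia}})$ rather than each side individually, whereas the theorem demands a lower bound specifically under $\Prob_{z^*}$. Exploiting the near-symmetry of $z^*$ and its one-vertex perturbations, together with Assumption~2 to ensure that $\eta = o(1)$ can absorb the approximation errors, is what makes the argument go through without any minimax supremum over $z^*$. The permutation-handling dichotomy in Step~1 is standard but also requires care to keep track of which cluster-size inequalities deliver the $(1-\epsilon)\min_a \pi_a / 4$ prefactor.
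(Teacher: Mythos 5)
Your plan has the right skeleton — decouple the permutation, reduce to per-vertex tests, quantify by the Chernoff divergence — but two of the three steps have gaps, and a third misses a device that the paper uses to produce the stated constant.

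\textbf{Step~1 differs genuinely from the paper and, as written, does not yield the prefactor.} The paper does not use a Markov/majority-vote dichotomy. Instead it first isolates, inside each cluster $a$, the vertices $T_a$ with the \emph{largest} $\Chernoff(i,z^*)$ (all but $\tfrac{n(1-\epsilon)\pi_{\min}}{4k}$ of them), fixes the labels of $T = \cup_a T_a$, and works in the restricted class $\tilde{\cZ}$ of labelings that agree with $z^*$ on $T$. Any two labelings in $\tilde{\cZ}$ differ on at most $n(1-\epsilon)\pi_{\min}/4$ vertices, which is less than half of any cluster size, so a known lemma forces the identity permutation to be the unique loss minimizer and the loss becomes the plain Hamming distance — no probabilistic dichotomy needed. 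This construction also does two more things you are missing: it makes the prefactor $\frac{|T^c|}{n} = \frac{(1-\epsilon)\pi_{\min}}{4}$ fall out as an exact combinatorial quantity (not something "absorbed from the truncation"), and it ensures the sum in the end runs over the $T^c$ vertices with the \emph{smallest} Chernoff divergences, which is what makes the average over $T^c$ bound the full average over $[n]$ from below.

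\textbf{Steps~2--3 contain the real gap.} The Bretagnolle--Huber inequality places the Kullback--Leibler divergence in the exponent, not the Chernoff information $\Delta(i,a) = \max_t (1-t) D_t$; swapping one for the other is not free. More importantly, even granting a two-point inequality of the form $\alpha + \beta \ge e^{-(1+o(1))\Delta(i,a)}$, that controls the \emph{sum} of the two error probabilities, not $\Prob_{z^*}(\hz_i \ne z^*_i)$ alone. Your "symmetrization" — applying the inequality again with $\tz^{ia}$ as reference — returns exactly the same bound on $\alpha + \beta$ and does not separate the two terms: an estimator can drive $\Prob_{z^*}(\hz_i \ne z^*_i)$ to zero at the cost of inflating $\Prob_{\tz^{ia}}(\hz_i \ne a)$, and the two-point bound cannot rule this out. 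The paper circumvents this entirely via the genie-aided reduction: the estimator is given $z^*_{-i}$ and must test among $k$ hypotheses for $z^*_i$; by the Neyman--Pearson lemma the MLE test minimizes the worst-case error, and Lemma~2 of Dreveton et al.\ supplies the matching large-deviation \emph{lower} bound for that optimal test, $\Prob(\phi^{\MLE}(A_{i\cdot}) \ne z^*_i) = e^{-(1+o(1))\Chernoff(i,z^*)}$. That lemma is precisely the non-textbook ingredient your Step~2 gestures at but does not supply, and the NP-optimality argument is what replaces your unsuccessful symmetrization.

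A final small note: the paper closes the argument with one more elementary but necessary move you omit — a Jensen's inequality step to pass from $\frac1n\sum_i e^{-(1+\eta)\Chernoff(i,z^*)}$ to $(\frac1n\sum_i e^{-\Chernoff(i,z^*)})^{1+\eta}$, which is the form in which the theorem is stated.
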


\begin{theorem}[Achievability]%[Upper-bound on the clustering error]
\label{thm:upper_bound}
Let $z^* \in \cZ_n(\pi,\epsilon)$ and $A$ being sampled from~\eqref{eq:def_adjacency_matrix_pabm}. Suppose Assumptions~\ref{assumption:scaling_parameters} and~\ref{assumption:scaling_chernoff} hold. Then, there exists an estimator $\hz$ such that 
 \[
   \E \left[ \loss(z^*,\hz ) \right] \wle \left( \frac1n \sum_{i \in [n] }  e^{-\Chernoff(i,z^*)} \right)^{1+\eta}
 \]
 for some $\eta = o(1)$
\end{theorem}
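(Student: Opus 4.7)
The plan is to match the lower bound of Theorem~\ref{thm:lower_bound} via a two-stage estimator: a consistent initialization $\hz^{(0)}$ followed by a vertex-wise local refinement based on the plug-in likelihood. The rationale is that $\Chernoff(i,z^*)$ was defined as exactly the Chernoff exponent for the hypothesis test of whether the observed row $(A_{ij})_{j\neq i}$ was generated under $z^*$ or under the alternative labeling $\tz^{ia}$; hence a refinement step that, holding the labels of all other vertices fixed, picks the cluster of~$i$ by maximizing the likelihood of its observed neighborhood should attain rate $e^{-\Chernoff(i,z^*)}$ per vertex. Averaging over $i$ then yields the desired bound.

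Concretely, first produce $\hz^{(0)}$ with misclassification rate $o(1)$ with high probability, using any consistent estimator for the PABM---for instance, the spectral method on the $k^2$ leading eigenvectors discussed in the introduction and used in Section~\ref{section:numerical_experiments}. From $\hz^{(0)}$, compute plug-in estimates $\hat B$ and $\hat\Lambda$ of $B$ and $\Lambda$ via ratios of block-edge counts and per-vertex degrees into each estimated block; under Assumption~\ref{assumption:scaling_parameters} a Bernstein-type concentration argument yields uniform relative error $o(1)$. Then define $\hz_i := \arg\max_{a \in [k]} \sum_{j \neq i} \log \Pr\bigl( A_{ij} \mid z_i = a,\, z_j = \hz^{(0)}_j;\, \hat B, \hat\Lambda\bigr)$. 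Conditioning on the ``good'' event that $\hz^{(0)}$ is correct on $[n]\setminus\{i\}$ and that $(\hat B, \hat\Lambda)$ is close to the truth, the event $\{\hz_i = a\}$ for $a \ne z_i^*$ implies that the likelihood ratio between $\tP^{ia}$ and $P^*$, restricted to row $i$, is at least one. The sparsity expansion already written in Section~\ref{subsection:info_theoretic_divergence} together with a standard Chernoff bound gives $\Pr(\hz_i = a) \le e^{-(1-o(1))\Delta(i,a)}$, and a union bound over the $O(1)$ choices of $a$ yields $e^{-(1-o(1))\Chernoff(i,z^*)}$. Summing over $i$ and taking expectations produces the claimed bound, with the ``bad'' complementary event absorbed into the $o(1)$ exponent via Assumption~\ref{assumption:scaling_chernoff}.

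The hard part is showing that errors in the initialization and in the plug-in parameters contaminate the per-vertex exponent only by a factor $e^{o(\Chernoff(i,z^*))}$. Because $\Chernoff(i,z^*)$ is only assumed to be $\omega(1)$, this contamination must be controlled very tightly. The standard remedy---analogous to what Gao et al.\ employ for the DCBM---is a double refinement: first run one auxiliary refinement pass to drive the initial misclassification rate well below any inverse polylogarithmic rate, and only then run the final likelihood refinement; combine this with a leave-one-out decoupling argument so that the randomness in $\hz^{(0)}_j$ is independent of the edge $A_{ij}$ used in the test for vertex $i$. Accounting for the resulting small distortions in $\hat B, \hat\Lambda$ via a Taylor expansion of the log-likelihood ratio---where the leading term remains the Chernoff exponent---produces the multiplicative $(1+\eta)$ slack in the final exponent and completes the matching upper bound.
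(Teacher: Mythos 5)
Your proposal takes a genuinely different route from the paper. You set up the Gao--Ma--Zhang--Zhou two-stage scheme (consistent initialization, plug-in parameter estimation, then vertex-wise likelihood refinement with leave-one-out decoupling), whereas the paper analyzes the global maximum likelihood estimator $\hz=\argmax_{z\in[k]^n}L(z)$ directly, with the model parameters $(\rho_n,\Lambda,B)$ treated as known. The paper bounds $\E[\loss(z^*,\hz)]\le \frac1n\bigl(m_0+\sum_{m>m_0} m\sum_{z\in\cZ_m}\pr(L(z)\ge L(z^*))\bigr)$, and the crux is a Chernoff bound on $\pr(L(z)\ge L(z^*))$ in which the exponent decomposes over the misclassified set into three pieces $T_1,T_2,T_3$, with a $z$-dependent choice of the Chernoff parameter $t$ (taken as the smallest of the per-vertex optimizers $t_p$) and a R\'enyi-monotonicity inequality (van Erven--Harremo\"es) to compare across $t$'s. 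This sidesteps parameter estimation entirely; your scheme, by contrast, is estimator-adaptive and would give a parameter-free procedure if it went through, which is more ambitious.

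The gaps in your sketch are, however, substantive. First, you invoke ``any consistent estimator for the PABM, for instance the spectral method on the $k^2$ leading eigenvectors''; this paper explicitly lists a theoretical understanding of OSC and subspace clustering as an open problem, so consistency of the initializer is not something you can lean on for free and would have to be proved or sourced. Second, PABM has $nk$ popularity parameters $\lambda_{ia}$ rather than the $n$ parameters of DCBM; each is estimated to relative accuracy of order $(n\rho_n)^{-1/2}$ while the target exponent $\Chernoff(i,z^*)$ is itself of order $n\rho_n$, so a naive worst-case propagation gives a distortion of order $\sqrt{n\rho_n}=\omega(1)$ in the exponent. Salvaging this requires a cancellation argument across $j$ in $\sum_{j\ne i}(\hat\lambda_{j a}-\lambda_{j a})$-type sums, combined with a leave-one-out construction so that $\hat\lambda_{j a}$ is independent of $A_{ij}$; this is exactly the part you flag as ``the hard part'' but do not carry out, and it is not a routine transplant of the DCBM argument because the contamination now also runs through the alternative labeling's $\lambda_{j\hz_i}$ entries, not just a single $\hat\theta_j$. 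Third, the step ``double refinement to drive the error below inverse polylog'' presupposes quantitative rates for the first refinement pass which would themselves need the very plug-in control you are trying to establish. Until those three pieces are filled in, the proposal is a plausible program rather than a proof, and it is a different program from the one the paper executes.
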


The gap between the lower bound (Theorem~\ref{thm:lower_bound}) and the achievability (Theorem~\ref{thm:upper_bound}) stems only from second-order terms. Indeed, the sequences $\eta$ appearing in Theorems~\ref{thm:lower_bound} and~\ref{thm:upper_bound} are not identical. Moreover, the multiplicative factor $\tfrac{ (1-\epsilon)\min_{a} \pi_a }{4}$ of constant order can be absorbed into the sequence~$\eta$, as the term $\tfrac{1}{n}\sum_{i \in [n]} e^{-\Chernoff(i,z^*)}$ vanishes as $n \to \infty$. We chose to display this factor explicitly in our bounds so that the sequence $\eta$ does not depend on the parameter $\epsilon$.

We show in Section~\ref{subsection:comparison_other_block_models} how Theorems~\ref{thm:lower_bound} and~\ref{thm:upper_bound} recover known results in SBM and DCBM, and in Section~\ref{subsection:optimal_rate_PABM}, how they reveal novel properties that did not exist in previous models, when they are specialized to PABM. Table~\ref{tab:block_model_variants} summarizes all three classes of block models considered in this paper.

\begin{table}[!ht]
 \centering
 {\footnotesize 
 \begin{tabular}{ c|c|c|c}
 \toprule 
%  \diagbox[]{ Interactions }{Model} 
 & SBM & DCBM & PABM \\ 
 \midrule 
 Homogeneous & 
 $P_{ij} = \begin{cases}
%     p_0 \rho_n & \text{ if } z_i = z_j, \\
%     q_0 \rho_n & \text{ otherwise.}
     p_0 \rho_n & ... \\
     q_0 \rho_n & ...
 \end{cases}$ & 
 $P_{ij} = \begin{cases}
%     \theta_i \theta_j p_0 \rho_n & \text{ if } z_i = z_j, \\
%     \theta_i \theta_j q_0 \rho_n & \text{ otherwise.} 
     \theta_i \theta_j p_0 \rho_n & ... \\
     \theta_i \theta_j q_0 \rho_n & ... 
 \end{cases}$ &
 $P_{ij} = \begin{cases}
     \lambdain_{i} \lambdain_j p_0 \rho_n & ... \text{ if } z_i = z_j, \\
     \lambdaout_i \lambdaout_j q_0 \rho_n & ...\text{ otherwise.}
 \end{cases}$
 \\ \midrule
 Heterogeneous & $P_{ij} = B_{z_i z_j} \rho_n $ & $P_{ij} = \theta_i \theta_j B_{z_i z_j} \rho_n $ & $P_{ij} = \lambda_{i z_j} \lambda_{j z_i} B_{z_i z_j} \rho_n $ \\
 \bottomrule 
\end{tabular}
}
 \caption{Different expressions of the elements of the matrix $P$ for the block model variants considered in this paper. The quantity $\rho_n$ is related to the graph sparsity (as the expected degree is of order $n \rho_n$). All other quantities are strictly positive and independent of $n$.}
 \label{tab:block_model_variants}
\end{table}

\subsection{Recovering Known Optimal Error Rates in SBM and DCBM}
\label{subsection:comparison_other_block_models}

\paragraph{Inhomogeneous SBM}
Let $\lambda_{ia} = 1$ for all $i \in [n]$ and $a \in [k]$, so that we recover the SBM with inhomogeneous interactions in which $P_{ij} = \rho_n B_{z_i^* z_j^*}$. By linearity of the \Renyi divergence, we have 
\begin{align*}
 \Delta(i,a) & \weq \max_{t \in (0,1)} (1-t) \sum_{b = 1}^k n \pi_b \, \dren_t \left( \Ber \left( \rho_n B_{a b} \right), \Ber \left( \rho_n B_{z^*_i b} \right) \right) \\
 & \weq (1+o(1)) \, n \rho_n \underbrace{ \max_{t \in (0,1)} (1-t) \sum_{b = 1}^k \pi_b \left( t B_{ab} + (1-t) B_{z_i^* b} - B_{ab}^t B_{z_i^* b}^{1-t} \right) }_{ \CH_{AS}(a,z_i^*) }. 
 %& \weq (1+o(1)) \, n \rho_n  \CH_{AS}(z_i^*,a) .
\end{align*}
The quantity $\CH_{AS}(a,b)$ is called the Chernoff-Hellinger divergence~\citep{abbe2015community}. 
%and defined by $$\CH_{AS}(a,b) \weq \sup_{t \in (0,1)} (1-t) \sum_{c = 1}^k \pi_c \left( t B_{ac} + (1-t) B_{bc} - B_{ac}^t B_{bc}^{1-t} \right). $$
When $n \rho_n = \omega(1)$, we observe that
\begin{align*}
 \frac{1}{n} \sum_{i \in [n]} e^{- n \rho_n \min_{ a \in [k] \setminus \{ z_i^*\} } \CH_{AS}(a,z_i^*) } \weq e^{- (1+o(1)) n \rho_n \min_{a \ne b \in [k]} \CH_{AS}(a,b) },
\end{align*}
and we recover the instance-optimal error rate in SBM with inhomogeneous interactions~\citep{yun2016optimal}. 
%Moreover, when $\rho_n = \log n / n$, we recover that exact recovery is possible when $\min\limits_{a \ne b \in [k]} \CH_{AS}(a,b) > 1$ and impossible otherwise if $\min\limits_{a \ne b \in [k]} \CH_{AS}(a,b) < 1$~\citep{abbe2015community}. 
Finally, the Chernoff-Hellinger divergence has a simple expression in the case of homogeneous interactions. Indeed, when $B_{ab} = p_0 \1(a=b) + q_0 \1(a\ne b)$ and the clusters are of equal-size ($\pi_a = 1/k$), the divergence simplifies to $\min\limits_{a \ne b \in [k]} \CH_{AS}(a,b) = \frac{ n \rho_n }{k}(\sqrt{p_0} - \sqrt{q_0})^2$. 
%we obtain $\Delta(i,a) = \frac{n}{k}(\sqrt{p} - \sqrt{q})^2$, and thus $\min\limits_{a \ne b \in [k]} \CH_{AS}(a,b) = \frac{n}{k}(\sqrt{p} - \sqrt{q})^2$. 

\paragraph{Degree-Corrected Block Model}
Suppose that $\lambda_{i z_j^*} = \theta_i$, so that the PABM boils down to a DCBM with homogeneous interactions in which $P_{ij} = \theta_i \theta_j B_{z_i^* z_j^*} \rho_n$. For the simplicity of the discussion, we consider cluster of equal-size (\textit{i.e.,} $\pi_a = 1/k$ for all $a \in [k]$), and homogeneous interactions (\textit{i.e.,} $B_{ab} = p_0 \1\{a = b \} + q_0 \1 \{a \ne b \}$). Consider a vertex $i$ in a cluster $z_i^*$ and let $a \in [k] \backslash \{ z_i^* \}$. 
We have $
  \Delta(i,a) \weq \theta_i \frac{ n \rho_n  }{ k } \left( \sqrt{p_0} - \sqrt{q_0} \right)^2,$
where we used $\sum_{i \colon z_i^* = a} \theta_i = 1$. Thus, we recover the asymptotic optimal error-rate $ \frac{1}{n} \sum_i e^{ - \theta_i \frac{ n \rho_n }{ k } \left( \sqrt{p_0} - \sqrt{q_0} \right)^2 }$ established in~\cite{gao2018community}.

\subsection{Optimal Error Rate in Homogeneous PABM}
\label{subsection:optimal_rate_PABM}
We now show how Theorems~\ref{thm:lower_bound} and~\ref{thm:upper_bound}, when applied to PABM, reveal novel phenomena. Suppose $B_{ab} = p_0 \1\{a = b \} + q_0 \1 \{a \ne b \}$ and $ \lambda_{i a} = \lambdain_i \1\{ z_i^* = a \} + \lambdaout_i \1\{ z_i^* \ne a \}$. We have 
\begin{align*}
%\label{eq:chernoff_homogeneous_pabm}
 \Chernoff(i, z^*) = \frac{n \rho_n}{2 k} \left( \delta_{z_i^*} + \min_{a \ne z_i^*} \delta_a \right) 
 \text{ where }  
 \delta_{b} = \frac{1}{n/k} \sum_{j \colon z_j^* = b} \left( \sqrt{\lambdain_i \lambdain_j p_0} - \sqrt{ \lambdaout_i \lambdaout_j q_0 } \right)^2.
\end{align*}

\begin{proposition}
\label{prop:recovery_rate_pabm}
 Consider a PABM with homogeneous interactions with $p_0 q_0 > 0$, and $k$~equal-size communities. 
 Suppose that $\lambdaout_1 = \cdots = \lambdaout_n = 1$ and that the coefficients $\lambdain_1, \cdots, \lambdain_n$ are sampled iid from $\Uni(1-c,1+c)$ with $c \in (0,1)$. Denote $\gamma_c  = \frac{1}{3c} \left( (1+c)^{3/2} - (1-c)^{3/2} \right)$. We have 
 $$
 \frac1n \sum_{i \in [n] }  e^{-\Chernoff(i,z^*)} \weq (1+o(1)) e^{ - \frac{n \rho_n}{k} q_0 (1-\gamma_c^2) } J_n,
 $$
 where  
 \begin{align*}
    J_n \weq  \frac{k }{ 2 c p_0 n \rho_n } \left( e^{-\frac{n \rho_n}{k} p_0 u_+^2} -e^{ - \frac{n \rho_n}{k} p_0 u_-^2} \right) + \frac{\gamma_c}{2c} \sqrt{ \frac{k \pi}{ n \rho_n q_0 } } \left (\erf \left( \frac{n \rho_n}{k} p_0 u_+ \right) - \erf\left( \frac{n \rho_n}{k} p_0 u_- \right) \right),
 \end{align*}
with $u_{\pm} = \sqrt{1\pm c} - \gamma_c \sqrt{q_0 / p_0}$ and $\erf(t) = 2 / \sqrt{\pi} \int_0^t e^{-t^2} \diff t$ is the Gauss error function. 
\end{proposition}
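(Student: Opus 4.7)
The plan is to carry out two successive concentration reductions and then to compute an explicit one-dimensional integral. The first reduction replaces the random per-vertex Chernoff divergence $\Chernoff(i,z^*)$ by a deterministic function of $\lambdain_i$ alone. With $\lambdaout_j=1$, expanding gives $\delta_b = \lambdain_i p_0\,\bar{\ell}_b - 2\sqrt{\lambdain_i p_0 q_0}\,\bar{s}_b + q_0$, where $\bar{\ell}_b$ and $\bar{s}_b$ denote the cluster-$b$ averages of $\lambdain_j$ and of $\sqrt{\lambdain_j}$ respectively. These are averages of $\Theta(n/k)$ i.i.d.\ bounded variables, so Hoeffding's inequality yields $\bar{\ell}_b\to 1$ and $\bar{s}_b\to\gamma_c$ uniformly over $b\in[k]$, where $\gamma_c = \frac{1}{2c}\int_{1-c}^{1+c}\sqrt{u}\,du = \frac{1}{3c}\bigl((1+c)^{3/2}-(1-c)^{3/2}\bigr)$. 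Substituting and completing the square yields $\delta_b = \delta(i) + o(1)$ uniformly in $b$, with $\delta(i) = p_0\bigl(\sqrt{\lambdain_i}-\gamma_c\sqrt{q_0/p_0}\bigr)^2 + q_0(1-\gamma_c^2)$. In particular, the minimum $\min_{a\ne z_i^*}\delta_a$ in the definition of $\Chernoff(i,z^*)$ becomes asymptotically redundant, so that $\Chernoff(i,z^*) = (1+o(1))\,(n\rho_n/k)\,\delta(i)$.

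The second reduction replaces the empirical average $\frac{1}{n}\sum_i$ by an expectation over $\lambdain\sim\Uni(1-c,1+c)$. Writing $\alpha = n\rho_n p_0/k$ and $\mu = \gamma_c\sqrt{q_0/p_0}$ and pulling out the $\lambdain_i$-independent factor $\exp(-\tfrac{n\rho_n}{k}q_0(1-\gamma_c^2))$, the remaining per-vertex quantity $\exp(-\alpha(\sqrt{\lambdain_i}-\mu)^2)$ lies in $[0,1]$, so Hoeffding on the i.i.d.\ $\lambdain_i$'s gives $\frac{1}{n}\sum_i \exp(-\alpha(\sqrt{\lambdain_i}-\mu)^2) = \E\bigl[\exp(-\alpha(\sqrt{\lambdain}-\mu)^2)\bigr] + o(1)$. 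The remaining expectation is then evaluated in closed form: the substitution $v=\sqrt{x}$ (inducing density $v/c$ on $[\sqrt{1-c},\sqrt{1+c}]$) turns it into $\frac{1}{c}\int_{\sqrt{1-c}}^{\sqrt{1+c}} v\,e^{-\alpha(v-\mu)^2}\,dv$, and the shift $u=v-\mu$ recasts it as $\frac{1}{c}\int_{u_-}^{u_+}(u+\mu)\,e^{-\alpha u^2}\,du$. The $u\,e^{-\alpha u^2}$ piece has elementary antiderivative $-\frac{1}{2\alpha}e^{-\alpha u^2}$ and contributes the exponential-difference term in $J_n$; the $\mu\,e^{-\alpha u^2}$ piece is a finite Gaussian integral producing the $\erf$ term. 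Assembling gives the claimed formula.

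The main technical obstacle is controlling the $o(1)$ error from the first reduction after exponentiation. Since $\Chernoff(i,z^*)$ scales like $n\rho_n/k$, which is $\omega(1)$ under Assumption~\ref{assumption:scaling_chernoff}, a naive Hoeffding fluctuation $\delta_b - \delta(i) = O(\sqrt{k\log n/n})$ gets amplified in the exponent to $O(\rho_n\sqrt{n\log n/k})$, which need not vanish in all regimes. To preserve the $(1+o(1))$ factor one may either refine the concentration of $\delta_b - \E\delta_b$ via a Bernstein-type estimate exploiting its zero-mean structure, or argue that vertices on which this approximation degrades contribute only a subleading fraction of the sum because their $\exp(-\Chernoff(i,z^*))$ is then already negligibly small. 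Either refinement preserves the asymptotic equivalence stated in the proposition.
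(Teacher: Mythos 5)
Your proposal follows essentially the same route as the paper: first a concentration step replacing the per-vertex Chernoff divergence $\Chernoff(i,z^*)$ by a deterministic function of $\lambdain_i$ (the paper invokes the law of large numbers in Lemma~\ref{lemma:chernoff_homogeneous_pabm}, you use Hoeffding), then a second concentration step replacing the empirical average $\frac1n\sum_i$ by an expectation over $\Uni(1-c,1+c)$ (the paper uses Kolmogorov's variance criterion, you again use Hoeffding), and finally the integral is evaluated by the substitution $v=\sqrt{x}$, completing the square, and splitting into an elementary $u\,e^{-\alpha u^2}$ piece and a Gaussian $\erf$ piece, exactly as in the paper's proof. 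Your closing paragraph flags a real subtlety that the paper's one-line proof of Lemma~\ref{lemma:chernoff_homogeneous_pabm} glosses over, namely that a multiplicative $(1+o(1))$ relative error in $\Chernoff(i,z^*)$ sits inside an exponential whose argument is $\omega(1)$, so that a Hoeffding-scale fluctuation $O(\sqrt{k\log n/n})$ could become non-negligible after multiplication by $n\rho_n/k$; your proposed fixes (a sharper Bernstein-type concentration, or an argument that vertices where the approximation degrades carry negligible weight) are both reasonable ways to close this gap and arguably make the proposal more complete than what the paper presents.
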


Although the expression of $J_n$ is quite involved, we can give two interesting particular cases. Firstly, to highlight the effect of degree heterogeneity, suppose that $p_0 = q_0$. In this extreme case where we expect the same number of interactions within and across clusters, the only information comes from the degree heterogeneity. Therefore, many existing graph clustering algorithms are expected to fail (as indeed shown in Section~\ref{section:numerical_experiments}). However, the quantity $\Chernoff(i, z^*)$ does not vanish. Therefore, even in this extreme setting, consistent recovery is possible, as highlighted in the following example. This stands out in stark contrast to the standard and degree-corrected block models, where setting $p_0 = q_0$ causes the model to collapse into an \Erdos-\Renyi graph and a Chung-Lu graph, respectively—both of which contain no information about the underlying cluster structure. 

\begin{example}
\label{example:recovery_pabm_degree}
 Consider the setting of Proposition~\ref{prop:recovery_rate_pabm} where $p_0 = q_0$. 
 For $c=0$ the model reduces to an \Erdos-\Renyi graph with edge-probability $p$, and thus no recovery is possible. 
 However, for $c>0$, 
 \begin{align*}
    \frac1n \sum_{i \in [n] } e^{-\Chernoff(i,z^*)} \weq (1+o(1)) e^{ - \frac{n \rho_n}{k} p_0 (1-\gamma_c^2)}. 
 \end{align*}
 Observe that $1 - \gamma_c^2$ is  strictly positive and increasing in $c \in (0,1]$. Thus, if $c > 0$, the optimal error rate satisfies $
    \frac1n \sum_{i \in [n] } e^{-\Chernoff(i,z^*)} \weq o(1)
$, yielding that cluster recovery is possible. Moreover, this rate is monotonically decreasing in $c$. 
\end{example}

In the following example, we fix $c \in (0,1)$ and $p_0 > 0$, and we let $q_0$ vary between $0$ and $p_0$.  

\begin{example}
\label{example:recovery_pabm_xi}
 Consider the setting of Proposition~\ref{prop:recovery_rate_pabm} with $\xi = q_0 / p_0 \in (0,1]$. The quantity $\exp({ - \frac{n \rho_n}{k} p_0 \xi (1-\gamma_c^2) }) \times J_n$ is \textit{not} monotonically increasing in $\xi$, but instead first \textit{increases} with $\xi$, reaches some maximum value, and then \textit{decreases}. We illustrate this in Figure~\ref{fig:optimal_error_rate_uniform_ones} in Appendix~\ref{appendix:discussion_examples}. 
\end{example}
The intuition behind Example~\ref{example:recovery_pabm_xi} is as follows. As $\xi=0$ the graph is disconnected and the $k$ largest components are aligned with the $k$ clusters. Hence, the difficulty of clustering is at its lowest and can only increases with $\xi$, as additional edges are inter-cluster edges and act as noise. However, when~$\xi$ becomes large enough, the difference between the intra- and inter-connectivity patterns, governed by the $\lambdain$ and $\lambdaout$, becomes more pronounced. As a result, this provides additional information that can be exploited for clustering (as in Example~\ref{example:recovery_pabm_degree}). This leads to a trade-off between the benefit brought by the absence of any inter-cluster edges (for learning from well-separated clusters) and the benefit brought by their presence in large numbers (for learning from popularity patterns). 
%benefiting from the absence of any inter-cluster edges (for learning from well-separated clusters) and benefiting from their presence (for learning from popularity patterns). %not wanting any inter-cluster edges (for learning from well-separated clusters) versus wanting them (for learning from popularity patterns). 
This non-monotonic behavior is specific to PABM and does not occur in~DCBM. 
Finally, this non-monotonicity is \textit{not} an artifact of setting $\lambdaout_i = 1$ and sampling the $\lambdain_i$ from a uniform distribution. This choice was made because of the difficulty to derive a closed-form expression for the optimal error rate. In Appendix~\ref{appendix:discussion_examples}, we show numerically that these phenomena persist under alternative distributions for the coefficients $\lambdain$ and~$\lambdaout$.

\begin{comment}
\begin{example}
\label{example:recovery_pabm}
 Consider a homogeneous PABM such that $p_0 = q_0$, equal size clusters $\pi_a = k^{-1}$ and $\lambdaout_j = 1$ for all $j \in [n]$. Suppose further that the $\lambdain_i$ are iid sampled from $\Uni(1-c, 1+c)$ for some $c \in [0,1)$. For $c=0$ the model reduces to an \Erdos-\Renyi graph with edge-probability $p$, and thus no recovery is possible. However, for $c>0$ we have (see computations in Appendix~\ref{}) 
 \begin{align*}
    \frac1n \sum_{i \in [n] } e^{-\Chernoff(i,z^*)} \weq (1+o(1)) e^{ - \frac{n \rho_n}{k} p_0 (1-\gamma_c^2)}
 \end{align*}
 where $\gamma_c = \frac{1}{3c} \left( (1+c)^{3/2} - (1-c)^{3/2} \right)$. Observe that $g(c) = 1 - \gamma_c^2$ is increasing and strictly positive for $c \in (0,1]$. Thus, if $c > 0$, the optimal error-rate satisfies 
 \begin{align*}
    \frac1n \sum_{i \in [n] } e^{-\Chernoff(i,z^*)} \weq o(1),
    %\begin{cases}
    %    1 & \text{ if } c = 0, \\
    %    o(1) & \text{ if } c > 0.
    %\end{cases}
 \end{align*}
 and almost exact recovery is possible. 
\end{example}
\end{comment}

\section{Numerical Experiments}
\label{section:numerical_experiments}

In this section, we numerically evaluate the performance of several existing variants of spectral clustering on both synthetic and real-world datasets.\footnote{Our code is available at \url{https://github.com/mdreveton/neurips-pabm}.} Specifically, we compare the following variants:
\begin{itemize} 
 \item \textit{sbm}: Lloyd's algorithm applied to the embedding formed by the $k$ largest (in magnitude) eigenvectors of the adjacency matrix (see Algorithm~\ref{algo:sc_sbm}); 
 \item \textit{dcbm}: Lloyd's algorithm applied to an estimate $\hP$ of the connectivity matrix $P$, constructed using the $k$ largest (in magnitude) eigenvectors of the adjacency matrix (see Algorithm~\ref{algo:sc_dcbm}); 
 \item \textit{pabm}: subspace clustering applied to the embedding formed by the $k^2$ largest (in magnitude) eigenvectors of the adjacency matrix (see Algorithm~\ref{algo:sc_pabm}); 
 \item \textit{osc}: the spectral clustering variant described in Algorithm~\ref{algo:osc}; 
 \item \textit{sklearn}: Lloyd's algorithm applied to the embedding formed by the $k$ smallest eigenvectors of the graph's normalized Laplacian, corresponding to the implementation available in the \textit{scikit-learn} library (see Algorithm~\ref{algo:sc_scikit-learn}). 
\end{itemize}

The \textit{sbm} and \textit{dcbm} variants are tailored for graphs generated from SBM and DCBM, respectively, and are known to recover clusters accurately under these models~\citep{zhang2024fundamental,gao2018community}. In contrast, PABM exhibits a more complex structure, as the rank of the matrix $P$ can exceed $k$, but cannot be greater than $k^2$. We refer to~\citep{koo2023popularity,noroozi2021estimation} and to Section~\ref{subsec:rank_pabm} of the Appendix for examples. To accommodate this higher-rank structure, the \textit{pabm} and \textit{osc} variants rely on an embedding based on $k^2$ eigenvectors rather than the traditional $k$, allowing them to capture the higher-rank structure of PABM more effectively.

In the finalization phase of the manuscript, we became aware of two more algorithms designed for community recovery in PABM, namely Thresholded Cosine Spectral Clustering (\textit{tcsc}) and a Greedy Subspace Projection Clustering (\textit{gspc}), introduced in~\cite{yuan2025strongly} and in~\cite{bhadra2025unified}, respectively. To avoid overburdening this section, we refer the interested reader to the Appendix~\ref{appendix:additional_clustering_algos} for a description of these algorithms. We also provide in the Appendix~\ref{appendix:description_algos} the pseudo-code of all the algorithms.

In all experiments, we report the accuracy, defined as one minus the loss in~\eqref{eq:def_haming_loss_function}. It is equal to the proportion of correctly clustered vertices. 

%\item Thresholded Cosine Spectral Clustering (\textit{tcsc}): Lloyd's algorithm applied on a $N$-by-$N$ matrix whose $(i,j)$-th element is related to the pairwise cosine similarity of $d$ leading eigenvectors of the adjacency matrix.

\subsection{Synthetic Data Sets}
\label{subsec:expe_synthetic}

We first consider homogeneous PABM whose interaction probabilities are given by 
\begin{align}
\label{eq:experiments_homogeneous_PABM}
 P_{ij} \weq 
 \begin{cases}
  \lambdain_i \lambdain_j \rho & \text{ if } z_i^* = z_j^*, \\ 
  \lambdaout_i \lambdaout_j \xi \rho & \text{ otherwise.}
 \end{cases}
\end{align}
 The parameter $\rho \in (0,1)$ controls the overall sparsity of the network, while the parameter $\xi \in [0,1]$ controls the fraction of edges across clusters (in particular, $\xi=0$ implies no inter-cluster edges while $\xi=1$ implies the same expected number of edges between any pair of clusters). As in Examples~\ref{example:recovery_pabm_degree} and~\ref{example:recovery_pabm_xi}, we let $\lambdaout_i = 1$ and sample the coefficients $\lambdain$ from the uniform distribution in $(1-c,1+c)$. 
 
 In Figure~\ref{fig:homogeneous_unif_one_varying_c}, we let $\xi = 1$ and vary $c$. This is precisely the setting of Example~\ref{example:recovery_pabm_degree}. We observe that \textit{pabm} and \textit{osc} variants, which are specifically designed for PABM, recover the clusters when $c$ is large enough, whereas the variants tailored for SBM and DCBM fail to do so. This illustrates that \textit{pabm} and \textit{osc} successfully learn the clusters without edge-density signal by using the difference in the individual vertex degree connectivity patterns. 
 In Figure~\ref{fig:homogeneous_unif_one_varying_xi}, we set $c = 0.8$ and let $\xi$ vary. We observe that the acuracy of \textit{pabm} and \textit{osc} is \textit{not} monotonically decreasing with $\xi$. In fact, it goes to a minimum value before increasing again. This illustrates the phenomenon described in Example~\ref{example:recovery_pabm_xi}. In contrast, the accuracy obtained by \textit{sbm} and \textit{dcbm} variants monotonically decreases, because increasing $\xi$ from $0$ to $1$ monotonically decreases the edge-density signal.\footnote{In both cases, the accuracy achieved by \textit{sklearn} matches that of \textit{dcbm} and is omitted from the figures.}

 \begin{figure}[!ht]
 \centering
 \begin{subfigure}[b]{0.4\textwidth}
  \includegraphics[width=1.0\linewidth]{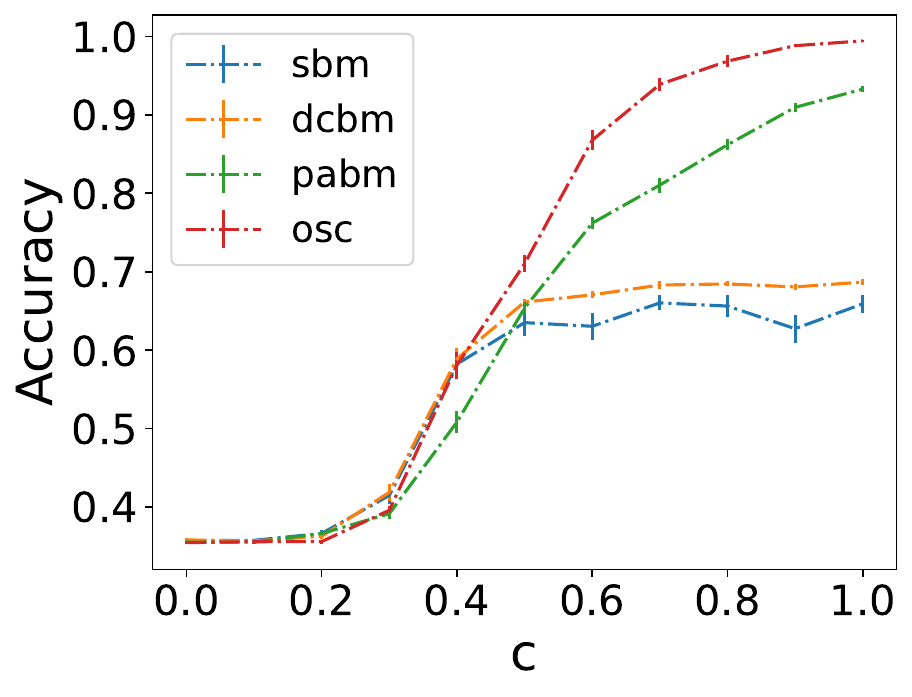}
  \caption{$\xi = 1$}
  \label{fig:homogeneous_unif_one_varying_c}
 \end{subfigure}
 \hfill 
 \begin{subfigure}[b]{0.4\textwidth}
  \includegraphics[width=1.0\linewidth]{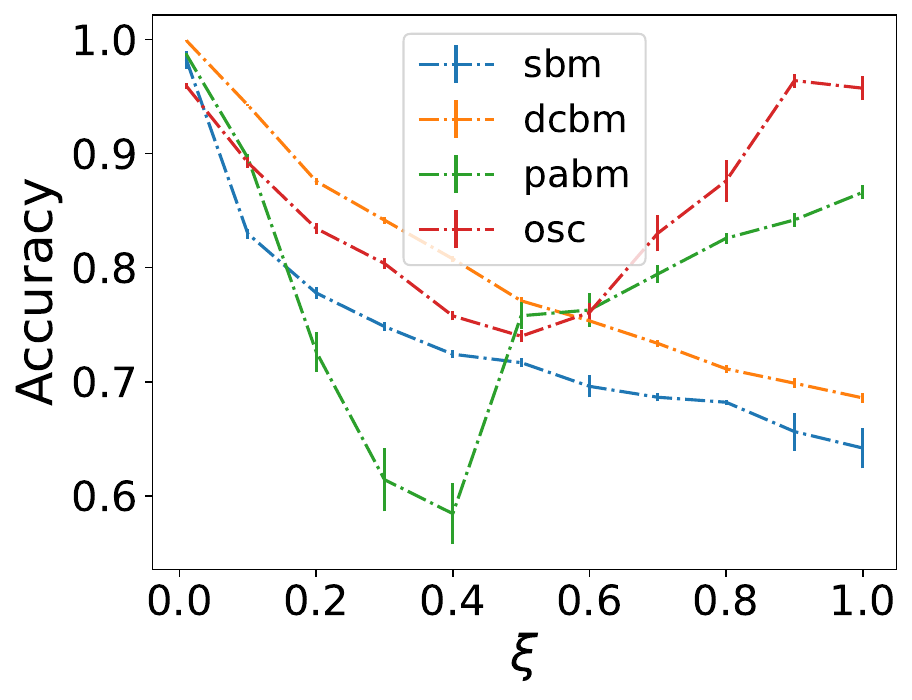}
  \caption{$c = 0.8$}
  \label{fig:homogeneous_unif_one_varying_xi}
 \end{subfigure}
 \caption{Performance of graph clustering on homogeneous PABM, where the matrix~$P$ is given in Equation~\eqref{eq:experiments_homogeneous_PABM}. We sampled graphs with $n=900$ vertices in $k=3$ clusters of same size, average edge density $\rho = 0.05$. In both figures, the $\lambdain_i$ are iid sampled from $\Uni(1-c,1+c)$ and $\lambdaout_i = 1$ for all $i$. Accuracy is averaged over 15 realizations, and error bars show the standard errors.}
 \label{fig:homogeneous_uniform}
\end{figure}

To further highlight the impact of the embedding dimension on the clustering accuracy, we plot in Figure~\ref{fig:homogeneous_uniform_embeddingDimension} the accuracy of the different spectral clustering variant as a function of embedding dimension~$d$. We observe that the performance of pabm and osc improves significantly as the dimension increases from $d=3$ to $d=6$, after which it reaches a plateau. In contrast, the performance of the sbm and dcbm variants remains unchanged with increasing $d$. 

 \begin{figure}[!ht]
 \centering
 \begin{subfigure}[b]{0.4\textwidth}
  \includegraphics[width=1.0\linewidth]{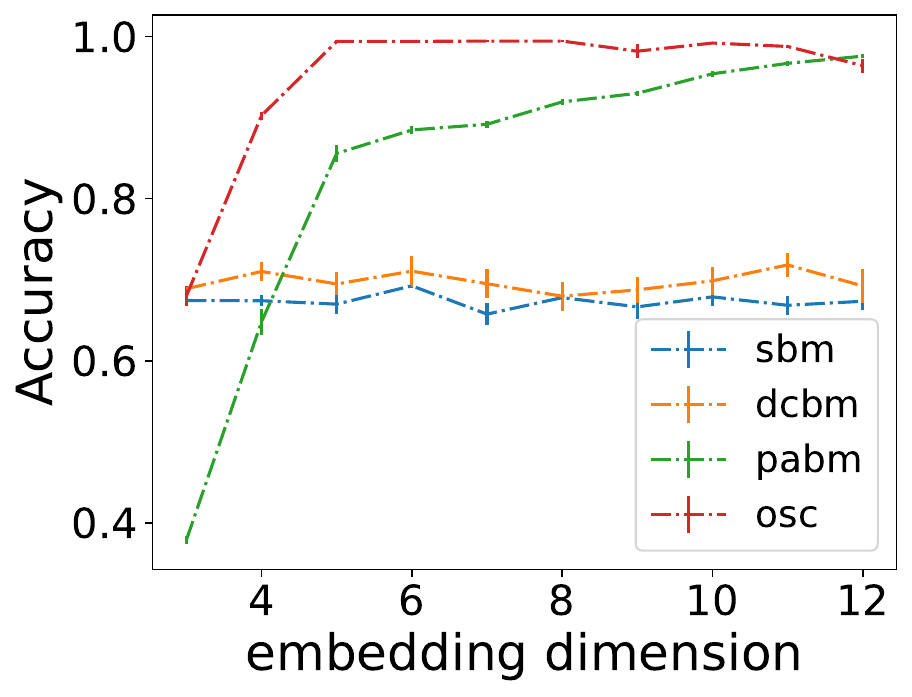}
  \caption{$\lambdaout_i = 1$}
 \end{subfigure}
 \hfill 
 \begin{subfigure}[b]{0.4\textwidth}
  \includegraphics[width=1.0\linewidth]{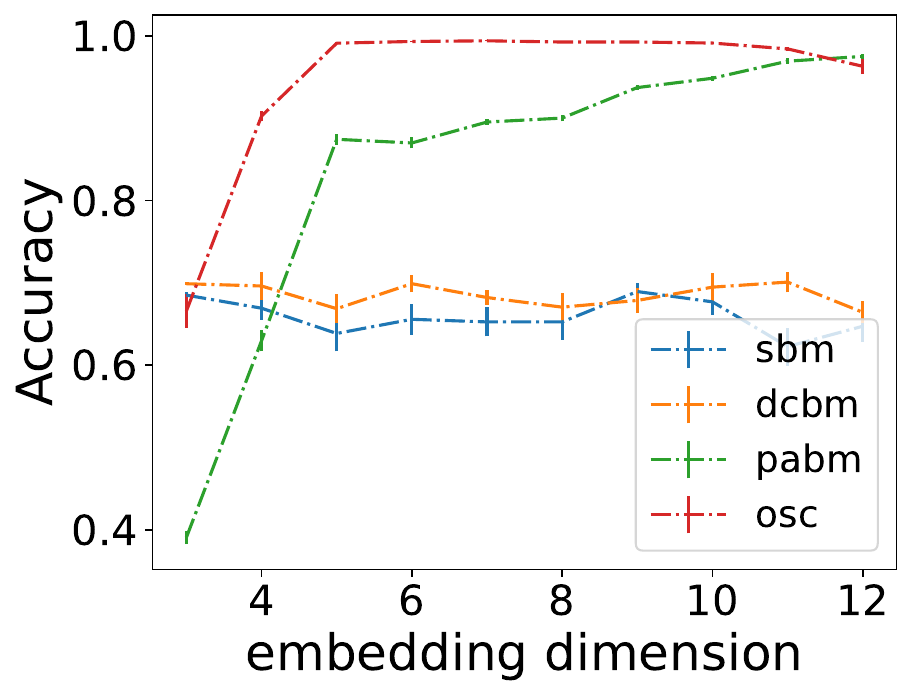}
  \caption{$\lambdaout_i \sim \Uni(0,2)$}
 \end{subfigure}
 \caption{Effect of the embedding dimension on the performance of graph clustering on homogeneous PABM, where the matrix~$P$ is given in Equation~\eqref{eq:experiments_homogeneous_PABM}. We sampled graphs with $n=900$ vertices in $k=3$ clusters of same size, average edge density $\rho = 0.05$. In both figures, the $\lambdain_i$ are iid sampled from $\Uni(0,2)$. Accuracy is averaged over 15 realizations, and error bars show the standard errors.}
 \label{fig:homogeneous_uniform_embeddingDimension}
\end{figure}

\subsection{Real Data Sets}

In this section, we show on real datasets that spectral algorithms that use more eigenvectors such as \textit{pabm} and \textit{osc} outperform the traditional variants that use only $k$ eigenvectors. Table~\ref{tab:statistics_real_datasets} in Appendix~\ref{appendix:real_datasets_description} summarizes some statistics of the dataset used. Table~\ref{tab:accuracy_real_datasets} shows the accuracy obtained by the different variants of spectral clustering on the real data sets. 
%Note that for the \textit{pabm} variant based on subspace clustering (Algorithm~\ref{algo:sc_pabm}), we also compared the accuracy obtained by using $k$, $2k$, and $k^2$ eigenvectors. We observe that for some data sets, using fewer than $k^2$ eigenvectors also yields good performance. This highlights a tradeoff: more eigenvectors provide more information, but also more noise. Moreover, taking fewer eigenvectors is computationally faster. 

\begin{table}[!ht]
 \centering
\begin{tabular}{l | rrrrrrr}
\toprule
 &  sbm &  dcbm &  pabm &  osc &  tcsc &  gspc &  sklearn \\
\midrule
politicalBlogs & 0.63 & \textbf{0.95} &  0.91 & \textbf{0.95} & 0.65 & \textbf{0.95} & 0.52 \\
liveJournal-top2 & 0.56 &  0.61 & \textbf{0.99} & 0.59 &  \underline{0.98} & 0.60 & \textbf{0.99} \\
citeseer & 0.27 & 0.38 & 0.45 & \underline{0.56} & 0.33 & 0.51 & \textbf{0.58} \\
cora & 0.34 & 0.37 & \textbf{0.47} & \textbf{0.47} & 0.30 & \underline{0.42} & 0.27 \\
mnist & 0.44 &  0.54 &  \textbf{0.88} & 0.74 & 0.11 & 0.79 & 0.78 \\
fashionmnist & 0.22 &  0.41 &  \textbf{0.63} & 0.61 & \underline{0.60} & 0.50 & \underline{0.60} \\
cifar10 & 0.17 & 0.43 & \textbf{0.74} & 0.58 & 0.49 & 0.62 & \underline{0.71} \\
\bottomrule
\end{tabular}
 \caption{Accuracy of several spectral clustering variants on real data sets.}
 \label{tab:accuracy_real_datasets}
\end{table}

\begin{comment}
\begin{table}[!ht]
 \centering
 \begin{tabular}{ c | c cc ccc c } \toprule
 \multirow{2}{*}{data set} & \multirow{2}{*}{SBM} & \multirow{2}{*}{DCBM} & \multicolumn{3}{c}{PABM} & \multirow{2}{*}{OSC} & \multirow{2}{*}{sklearn} \\
 \cline{4-6} & & & $r=k$ & $r = 2k$ & $r=k^2$ & \\ \midrule 
  political blog & 0.60 & \textbf{0.95} & 0.91 & 0.91 & 0.91 & \textbf{0.95} & 0.51 \\
  LiveJournal-top2 & 0.56 & 0.61 & 0.59 & \textbf{0.99} & \textbf{0.99} & 0.59 & \textbf{0.99} \\
  citeseer & 0.27 & 0.38 & 0.26 & 0.26 & 0.57 & 0.56 & \textbf{0.58} \\
  cora & 0.33 & 0.36 & 0.37 & 0.44 & 0.46 & \textbf{0.47} & 0.27 \\
%  Caltech & 0.57 & \textbf{0.72} & 0.66 & & 0.55 & 0.61 \\
%  MNIST & 0.49 & 0.75 & 0.78 & \textbf{0.90} & 0.79 & 0.77 & 0.79 \\
%  FashionMNIST & 0.38 & 0.59 & 0.57 & 0.58 & \textbf{0.60} & 0.50 & \textbf{0.61} \\
%  CIFAR-10 & 0.23 & 0.71 & 0.50 & 0.61 & \textbf{0.73} & 0.56 & 0.62 \\
  MNIST & 0.64 & 0.82 & \textbf{0.88} & 0.77 & 0.78 & 0.76 & 0.78 \\
  FashionMNIST & 0.42 & 0.57 & 0.47 & 0.53 & \textbf{0.61} & 0.48 & \textbf{0.61} \\
  CIFAR-10 & 0.26 & 0.72 & 0.51 & 0.61 & \textbf{0.73} & 0.56 & 0.62 \\
  \bottomrule
 \end{tabular}
 \caption{Accuracy of several spectral clustering variants on real data sets.}
 \label{tab:accuracy_real_datasets}
\end{table}
\end{comment}

\section{Related Work}
\label{section:related_work}

\paragraph{Optimal clustering error rate.}
A rich line of research focused on characterizing the optimal error rates for clustering in stochastic block models and their variants. Early results established the minimax error rate in the SBM~\citep{zhang2016minimax}, while later work extended these insights to more general models such as the degree-corrected block model~\citep{gao2018community}. Further developments have addressed more complex network structures, such as categorical edge types~\citep{yun2016optimal}, weighted interactions~\citep{Xu_Jog_Loh_2020}, and more general interaction patterns~\citep{avrachenkov2020community}. These studies leverage information-theoretic tools to derive minimax bounds and to uncover the fundamental limits of clustering error. 
Parallel developments have taken place in the mixture model literature, where optimal error rates have been studied extensively, particularly in Gaussian mixture models~\citep{lu2016statistical,cai2019chime,chen2024achieving} and in more general mixture models~\citep{dreveton2024universal}. In both settings, a central objective is to understand how the separation between components governs the intrinsic difficulty of the clustering task.

\paragraph{Clustering with higher-order eigenvectors.}
Several studies have identified the benefits of incorporating higher-order eigenvectors beyond the first $k$ in spectral graph clustering. In networks whose connections depend on both cluster membership and spatial position, \cite{avrachenkov2021higher} demonstrated that the second eigenvector of the graph Laplacian typically aligns with the geometric structure rather than with the cluster structure. As a result, traditional spectral methods that rely solely on the leading eigenvectors often produce geometric partitioning that fails to accurately capture the underlying cluster structure. Their analysis reveals that incorporating additional eigenvectors beyond the conventional first $k$ can provide crucial information for distinguishing between geometric proximity and actual cluster membership. 

In sparse networks with strong degree heterogeneity—where some vertices have significantly higher degree than others—spectral clustering based on the top $k$ eigenvectors of the adjacency matrix often fails. In such cases, the leading eigenvectors tend to localize around high-degree vertices, rather than capturing the underlying cluster structure. Trimming-based approaches have been proposed to mitigate this issue by down-weighting or removing influential high-degree vertices~\citep{le2017concentration}. Alternatively, using the normalized Laplacian shifts the problem: its leading eigenvectors may become concentrated on peripheral substructures, such as dangling trees, while the cluster signal may still lie in higher-order eigenvectors. To address this, regularization techniques have been introduced to stabilize the spectral embedding and improve clustering performance~\citep{qin2013regularized}. 

Although the previous paragraphs illustrate two different settings where higher-order eigenvectors are crucial for uncovering cluster structure, they also share a key limitation: the leading eigenvectors are largely uninformative, and only the higher-order ones carry meaningful clustering information. PABM is fundamentally different, as potentially all $k^2$ eigenvectors can be informative for clustering. This richer spectral structure opens new avenues for designing more effective spectral algorithms.

\section{Conclusion}
\label{section:conclusion}

We established the optimal error rate for clustering under the PABM, providing a precise information-theoretic characterization of the fundamental limits of clustering in this rich and flexible model. Our results highlight how heterogeneity in vertex popularity fundamentally alters the clustering landscape, and how this is reflected in the spectral structure of the network. While our analysis provides a solid theoretical foundation, several important questions remain open. A deeper theoretical understanding of practical algorithms such as OSC and subspace clustering remains a key challenge. %; although partial results exist in~\citep{koo2023popularity} and~\cite{noroozi2021estimation}, they only consider dense setting where the error rate is zero and fall short of matching the optimal rates. 
Another important direction for future work is model selection: developing principled methods to distinguish between models such as DCBM and PABM, and to infer key parameters like the number~$k$ of clusters or the rank of the connection probability matrix $P$. Addressing these challenges is essential to translate theoretical insights into robust, data-driven tools for network analysis. 

%\begin{ack}
%Do {\bf not} include this section in the anonymized submission, only in the final paper. You can use the \texttt{ack} environment provided in the style file to automatically hide this section in the anonymized submission.
%\end{ack}

\bibliographystyle{chicago}
\bibliography{biblio.bib}

\appendix
\section{Additional Discussions on the Theoretic Results}

\subsection{Instance-Optimal versus Minimax Setting}

 In our study of optimal clustering rates in PABM, we did not made any assumption on the matrix~$B$ (beyond being symmetric). 
 As a result, our analysis derives the optimal error rate for a specific instance of PABM (similar to the instance-wise analysis in~\cite{yun2016optimal} for the edge-labeled SBM) rather than a minimax error rate (as in~\cite{zhang2016minimax} for SBM and \cite{gao2018community} for DCBM). Both approaches are valuable: the minimax framework requires defining a parameter space to which $B$ belongs (typically the space of matrices having diagonal values larger than or equal to $p$ and off-diagonal values smaller than or equal to $q$, but offers no guarantees when the matrix $B$ lies outside this space, while the instance optimal-rate restricts to a specific but arbitrary matrix~$B$. 

 Moreover, we wish to emphasize an important point: a rate-optimal algorithm in the minimax setting may not be rate-optimal for specific instances, even when those instances fall within the defined parameter space. For example, Lloyd’s algorithm is minimax-optimal over the class of sub-Gaussian mixture models \cite{lu2016statistical}, but it fails to be instance-optimal for Gaussian mixture models with anisotropic covariance structures~\cite{chen2024achieving}. 

 For the parameter space described two paragraphs above (matrix $B$ with diagonal values larger than or equal to $p$ and off-diagonal values smaller than or equal to $q$), the worst-case rate for SBM and DCBM arises when $B_{aa} = p$ for all $a \in [k]$ and $B_{ab} = q$ for all $a \ne b$, leading to a minimax rate involving the term $(\sqrt{p} - \sqrt{q})^2$. In contrast, for PABM, the situation is more complex because of the additional dependence on the individual parameters $\lambda_{ia}$. As a result, we do not believe that a simple closed-form expression for the minimax rate in PABM is attainable. Indeed, as shown in Example~\ref{example:recovery_pabm_xi}, reducing the gap between $p$ and~$q$ does not necessarily increases the optimal error-rate.

\subsection{Overview of the Proofs}

The overall structure of the proofs for Theorems~1 and~2, which establish the optimal error rate, is similar to that used for SBM and DCBM (in~\cite{zhang2016minimax} and~\cite{gao2018community}, respectively). However, the PABM setting introduces additional technical complexity that requires a more refined analysis.

 (i) For the lower-bound, a first challenge is to address the minimum over all permutations in the definition of the error loss. Hence, rather than directly examining $\inf_{ \hat{z} \in [k]^n } \mathbb{E}[ n^{-1} \rm{loss}(z^*, \hat{z}) ]$, we follow previous works such as~\cite{zhang2016minimax,gao2018community} and focus on a sub-problem $\inf_{ \hat{z} \in \mathcal{Z} } \mathbb{E}[ \mathrm{loss}(z^*, \hat{z}) ]$, where $\mathcal{Z} \subset [k]^n$ is chosen such that $\mathrm{loss}(z^*, \hat{z}) =  \ham( z^*, \hat{z} ) / n$ for all  $z^*, \hat{z} \in \mathcal{Z}$. This sub-problem is simple enough to analyze, while still capturing the hardness of the original clustering problem. Next, we use a result from \cite{dreveton2024universal} to show that the Bayes risk $\inf_{\hat{z}_i} \pr ( \hat{z}_i \ne z_i^*)$ for the misclustering of a single vertex $i$ is asymptotically $e^{-(1+o(1)) \mathrm{ Chernoff}(i,z^*)}$. 

 More precisely, \cite[Lemma~2]{dreveton2024universal} establishes the worst-case error rate for a binary hypothesis testing problem where the observed random variable is drawn from either distribution $f_1$ or $f_2$ (corresponding to hypothesis $H_1$ and $H_2$, respectively). Both $f_1$ and $f_2$ are arbitrary and known probability density functions. By the Neyman–Pearson lemma, the likelihood ratio test (equivalent to the MLE in this context) minimizes the probability of error, thereby ruling out all other estimators for this problem. The error of the MLE is then upper-bounded using Chernoff’s method and lower-bounded using a large deviation argument. In our setting, the hypothesis is formulated in Equation~\eqref{eq:def_hypothesis_problem}, where $f_1$ and $f_2$ are product distributions of Bernoulli random variables.

 (ii) The proof of the achievability is however more involved, and required a new approach. It begins, similarly to prior work on block models, by upper-bounding $\E[ \mathrm{loss}(z^*, \hat{z}) ]$ (where $\hat{z}$ is the MLE) by $\sum_m \pr( \mathrm{Ham}(z^*, \hat{z}) = m)$. Thus, the core difficulty relies in upper-bound the quantities $\pr(L(z) > L(z^*))$ for any $z$ such that $\mathrm{Ham}(z^*, z) = m$ (where $L(z)$ denotes the likelihood of $z$ given an observation of $A$). This is more challenging in PABM than in SBM and DCBM. Indeed, unlike in SBM (and to some extent DCBM), the likelihood ratio $L(z)/L(z^*)$ cannot be easily simplified. As for SBM and DCBM, we rely on Chernoff bounds to obtain $\pr(L(z) > L(z^*)) \le \mathbb{E} [ e^{t \log (L(z) /  L(z^*) } ]$ for any $t>0$. But, in SBM and DCBM, one can use $t = 1/2$ and obtain clean exponential bounds whose terms are $\exp(- \theta_u \theta_v (\sqrt{p}-\sqrt{q})^2)$. For PABM, the optimal $t$ to use depends intricately on the misclassified set $\{ u \colon z_u \ne z_u^*\}$, and thus on $z$ itself. To address this, we adopt a more refined approach: we decompose the upper bound into three components $T_1(t)$, $T_2(t)$, and $T_3(t)$, and select a tailored value of $t$ for each labeling $z$. This additional complexity distinguishes our analysis from earlier work and reflects the greater structural richness of PABM compared to SBM and DCBM.

\subsection{Extension when Assumption~\ref{assumption:scaling_chernoff} Fails}

The situation when Assumption~\ref{assumption:scaling_chernoff} fails is slightly delicate. Suppose firstly that Assumption~\ref{assumption:scaling_chernoff} fails such that $\max_i \mathrm{Chernoff}(i,z^*) = O(1)$. In that case, the optimal clustering error cannot vanish. Indeed, using arguments similar to those in~\cite{zhang2016minimax}, we can establish that the optimal error rate is lower-bounded by a non-zero constant $c > 0$. More generally, we introduce the set $S = \{ i \in [n] \colon \mathrm{Chernoff}(i,z^*) = O(1) \}$ of vertices having a non-vanishing error of being misclustered. Assumption~\ref{assumption:scaling_chernoff} fails whenever $S \ne \emptyset$. By refining the proof of Theorem~\ref{thm:lower_bound}, we can obtain a lower-bound for the clustering error of any algorithm of the form: 
 \begin{align*}
   \inf_{\hz}  \E \left[ \loss(z^*, \hz ) \right] \wge \frac{ (1-\epsilon) \min_{a} \pi_a } { 4 } \left(   \frac{1}{|S^c|} \sum_{i \in S^c} e^{ - \mathrm{Chernoff}(i,z^*) } + c \frac{|S|}{n} \right)^{1+\eta}. 
 \end{align*}
 This decomposition reflects that a constant fraction of nodes (those in $S$) are intrinsically hard to classify, while the rest exhibit standard exponential error decay. When Assumption~\ref{assumption:scaling_chernoff} holds, $S = \emptyset$ and the lower bound matches the result in Theorem~\ref{thm:lower_bound}. (And observe that the case $\max_i \mathrm{Chernoff}(i,z^*) = O(1)$ discussed earlier is equivalent to $|S|=n$, and we recover $\inf_{\hz}  \E \left[ \loss(z^*, \hz ) \right] \ge c$ for some non-vanishing constant $c>0$.)

 Showing that the MLE attains this bound when $S \ne \emptyset$ appears plausible but requires additional technical work.

\section{Proof of Theorem~\ref{thm:lower_bound}}

\subsection{Clustering one Vertex at a Time: the Genie-aided Problem}

Let $i \in [n]$ and suppose a genie gives you $z^*_{-i}$, \textit{i.e.,} the community labels of all nodes but $i$. Denote $H_a^{(i)} \colon z_i^* = a$ the hypothesis that node $i$ belongs to the cluster $a \in [k]$. Letting $ X = A_{i\cdot}$ being the $i$-th row of the adjacency matrix, the hypothesis testing resumes to 
  \begin{align}
  \label{eq:def_hypothesis_problem}
    H_a^{(i)} \ \colon \ X \wsim \bigotimes_{j \ne i} \Ber \left( \rho_n \lambda_{i z^*_j} \lambda_{j a} B_{z^*_j a} \right).
  \end{align}
  The worst-case error of a testing procedure $\phi \colon \{0,1\}^{n-1} \to \{1, \cdots, k\}$ is  
  \[
   r(\phi) \weq \max_{a \ne b} \pr \left( \phi(X) = a \cond H_b \right). 
  \]
  By the Neyman-Pearson lemma, we have $\phi^{\MLE} = \argmin_{\phi} r(\phi)$ where
  \begin{align*}
    \phi^{\MLE}( A_{i\cdot} ) \weq \argmax_{ a \in [k] } \prod_{j \ne i} \left( 1 -\rho_n \lambda_{iz_j^*} \lambda_{ja} B_{z^*_j a} \right)^{1-A_{ij}}  \left( \rho_n \lambda_{iz_j^*} \lambda_{ja} B_{z^*_j a} \right)^{A_{ij}}. 
  \end{align*}
  Recall that the quantity $\Delta_{ia}(z^*,\Lambda)$ is defined in~\eqref{eq:def_deltaia} by  
 \begin{align*}
  \Delta(i,a) 
  & \weq \sup_{t \in (0,1)} (1-t) \dren_t \left( \bigotimes_{j \ne i} \Ber \left( \rho_n \lambda_{i z^*_j} \lambda_{j a} \right), \bigotimes_{j \ne i} \Ber \left( \rho_n \lambda_{i z^*_j} \lambda_{j z_i^*} \right) \right). 
  % C_{ib}(z^*, \Lambda) & \weq \sup_{t \in (0,1)} (1-t) \sum_{j \in [n] \backslash \{i\} } \dren_t \left( \lambda_{i z_j^*} \lambda_{j b} , \lambda_{i z_j^*} \lambda_{j z_i^*} \right) 
 \end{align*}
  \cite[Lemma~2]{dreveton2024universal} shows that for all $a \ne z_i^*$ we have 
  \begin{align*}
    \pr \left( \phi^{\MLE}( A_{i\cdot} ) = a \cond z_i^* \right) \weq e^{ -(1+o(1)) \Delta(i,a)},
  \end{align*}
  provided that $\Delta(i,a) = \omega(1)$. Furthermore, if $\Delta(i,a) = \omega(\log k)$, union bounds imply that 
  \begin{align}
  \label{eq:lower_bound_MLE_error}
    \pr \left( \phi^{\MLE} \left( A_{i\cdot} \right) \ne z_i^* \right) \weq e^{-(1+o(1)) \Chernoff(i, a) }
  \end{align}
  where   
  \begin{align*}
   \Chernoff(i, z^*) \weq \min_{a \ne z_i^*} \, \Delta(i,a)
  \end{align*}
  is the Chernoff information associated with this hypothesis testing problem. 

\begin{comment}
By linearity of the \Renyi divergence with respect to product measures and using a Taylor expansion as $\rho \| \Lambda \|_{\infty} = o(1)$, we also have 
\begin{align*}
 \Delta_{ia}(z^*, \Lambda) 
 \weq (1+o(1)) \rho_n \left( \sup_{t \in (0,1)} (1-t) \sum_{j \in [n] \backslash \{i\} } \left( \cdots \right) \right)
\end{align*}

\begin{lemma}
  
If $\Chernoff(i,z^*,\Lambda) = \omega(1)$, we have 
 \[
  \log \left( \inf_{\phi} \max_{ a \ne b \in [k] }\pr \left( \phi(X) = H_a \cond H_b \right) \right) \weq (1+o(1)) \min_{b \ne z_i^*} C_{ib}(z^*,\Lambda) 
 \]
 where the inf is taken over all statistical tests $\phi \colon X \mapsto \phi(X) \in \{H_1,\cdots, H_k\}$. 
 Moreover, if $\Chernoff(\cdots) = O(1)$, there exists a constant $c \in (0,1)$ such that $\inf_{\phi} \max_{ a \ne b \in [k] }\pr_{ \phi(X) = H_a \cond H_b} \ge c$. 
\end{lemma}
\end{comment}

\subsection{Lower-bounding the Optimal Error}

\begin{proof}[Proof of Theorem~\ref{thm:lower_bound}]
 For simplicity, we shorten $\cZ_n(\pi, \epsilon)$ by $\cZ$. Let $z^* \in \cZ$ be the true cluster membership vector. We denote the set of vertices in cluster $a$ by $\Gamma_a(z^*) = \{ i \in [n] \colon z^*_i = a \}$.  Following the same proof strategy as previous works on clustering block models~\citep{gao2018community,dreveton2024universal}, we define a clustering problem over a subset of $[k]^n$ to avoid the issues of label permutations in the definition of the loss function~\eqref{eq:def_haming_loss_function}. 
 For every cluster $a \in [k]$, 
 we define the set $T_a$ of the $|\Gamma_a(z^*)| - \frac{n (1-\epsilon) \pi_{\min}}{4k}$ vertices belonging to cluster~$a$ and having the largest $\Chernoff(i,z^*)$.  
 %we collect the indices of the $|\Gamma_a(z^*)| - \frac{n (1-\epsilon) \pi_{\min}}{4k}$ vertices $i \in [n]$ with the largest $\Chernoff(i,z^*)$ in $\Gamma_a(z^*) = \{ i \in [n] \colon z^*_i = a \}$ into a set~$T_a$. 
 We motivate this as follows. A vertex $i$ with a large $\Chernoff(i,z^*)$ implies that if a genie provides~$z_{-i}^*$ (the community labels of all vertices but $i$), the inference of $z_i^*$ is easy. Hence, the set $T_a$ contains the vertices belonging to the cluster $a$ that are the easiest to cluster, and therefore a good estimator $\hz$ should correctly infer the vertices belonging to $T_a$. In contrast, vertices with small $\Chernoff(i,z^*)$ may be impossible to cluster, even with the best estimator, and these are the vertices that matter in deriving the lower-bound. 
 Let $T = \cup_{a\in[k]} T_a$ and define a new parameter space $\tcZ \subseteq \cZ$ 
 \[
  \tcZ \weq \{ z \colon z_i = z^*_i \text{ for all } i \in T \quad \text{ and } \quad \frac{ | \Gamma_a(z) | }{n} \in [(1-\epsilon)\pi_a, (1+\epsilon)\pi_a] \}. 
 \]
 This new space $\tcZ$ is composed of all vectors $z \in \cZ$ that only differ from~$z^*$ on the indices $i$'s that do not belong to $T$. By definition of $T$, these vertices are the hardest to cluster. By construction of $\tcZ$, we have for any $z, z' \in \tcZ$  
\begin{align*}
 \ham(z,z') \weq \sum_{i=1}^n \1\{z_i \ne z'_i\} \wle \left|T^c\right| \weq k \frac{ n (1-\epsilon) \pi_{\min} }{ 4k }.
\end{align*}
Because $z \in \tcZ \subset \cZ$, we have by definition of $\cZ$ that $\min_{a \in [k]} |\Gamma_a(z)| \ge (1-\epsilon) n \pi_{\min}$. Therefore, the previous inequality ensures that $\ham(z,z') < 2^{-1} \min_{a \in [k]} |\Gamma_a(z)|$ for all $z,z' \in \cZ$. We can thus apply Lemma~\ref{lemma:unique_permutation_minimiser} to establish that 
\begin{align}
\label{eq:loss_in_reduceSpace}
 \forall z, z' \in \tcZ \ \colon \quad \loss(z,z') \weq \frac1n \ham(z,z') \weq \frac1n \sum_{i \in T^c} \1\{ z_i \ne z'_i \}. 
\end{align}
For any estimator $\hz$, we can build an estimator $\hz' \in \tcZ$ such that 
\[
 \hz_i' \weq 
 \begin{cases}
  z_i^* & \text{ if } i \in T, \\
  \hz_i & \text{ otherwise,}   
 \end{cases}
\]
and this estimator satisfies $\loss(z^*,\hz') \le \loss(z^*,\hz)$.
Therefore, 
\begin{align*}
 \inf_{\hz \in \cZ} \E \, \loss(z^*, \hz) 
 \wge \inf_{\hz' \in \tcZ} \E \, \loss(z^*, \hz')
 \weq \frac1n \inf_{\hz' \in \tcZ} \E \, \ham(z^*, \hz),
\end{align*}
where the last equality follows from~\eqref{eq:loss_in_reduceSpace}. 
Hence, we obtain 
\begin{align*}
 \inf_{\hz \in \cZ} \E \, \loss(z^*, \hz) 
 \wge \frac1n \inf_{\hz} \sum_{i \in T^c} \pr \left( \hz_i \ne z_i^* \right)
 \wge \frac1n \sum_{i \in T^c} \inf_{\hz_i} \pr \left( \hz_i \ne z_i^* \right). 
\end{align*}
From Equation~\eqref{eq:lower_bound_MLE_error}, we have 
\begin{align*}
 \inf_{\hz_i} \pr \left( \hz_i \ne z_i^* \right) \wge e^{-(1+\eta_i) \Chernoff(i, z^*) },
\end{align*}
for some $\eta_i = o(1)$. Let $\eta = \max_i \eta_i$. We obtain 
\begin{align*}
 \inf_{\hz \in \cZ} \E \, \loss(z^*, \hz) 
 & \wge \frac{|T^c|}{n} \frac{1}{|T^c|} \sum_{i \in T^c} e^{-(1+\eta) \Chernoff(i, z^*)} \\
 & \wge \frac{|T^c|}{n} \frac1n \sum_{i \in [n]} e^{-(1+\eta) \Chernoff(i, z^*)} \\
 & \weq \frac{ (1-\epsilon) \pi_{\min} }{ 4 } \frac1n \sum_{i \in [n]} e^{-(1+\eta) \Chernoff(i, z^*)}, 
\end{align*}
where the second inequality uses the fact that $T^c$ collects the indices of the vertices with the smallest $\Chernoff(i, z^*)$, and the last line uses $\frac{|T^c|}{n} = \frac{ \alpha (1-\epsilon) \pi_{\min} }{ 4 }$ (by definition of $T$). 

Finally, note that we can always chose $\eta$ to be nonnegative and thus the function $x \mapsto x^{1+\eta}$ is convex. Hence, by Jensen's inequality, we have 
\begin{align*}
\frac1n \sum_{i \in [n]} \left( e^{-\Chernoff(i, z^*)} \right)^{1+\eta}
\wge \left( \frac1n \sum_{i \in [n]} e^{-\Chernoff(i, z^*)} \right)^{1+\eta}. 
\end{align*}

\begin{comment}
Finally, we have, by assumption, that $\Chernoff(i, z^*) = \Theta( n \rho_n )$ for all $i$. Thus, there exists two strictly positive constants $c$ and $C$ such that 
\begin{align*}
 c n \rho_n \wle \Chernoff(i, z^*) \wle C n \rho_n. 
\end{align*}
Hence, 
\begin{align*}
 \frac1n \sum_{i \in [n]} e^{-(1+\eta) \Chernoff(i, z^*)} 
 \wge \left( \frac1n \sum_{i \in [n]} e^{-\Chernoff(i, z^*)} \right) e^{-\eta c n \rho_n }
\end{align*}
\end{comment}

\end{proof}

\subsection{Additional Lemmas}

\begin{lemma}[Lemma~C.5 in~\cite{avrachenkov2020community}] 
\label{lemma:unique_permutation_minimiser}
 Let $z_1, z_2 \in [k]^n$ such that $\ham(z_1, \tau^* \circ z_2) < \frac12 \min_{a \in [k]} |\Gamma_a(z_1)|$ for some $\tau^* \in \Sym(k)$. Then $\tau^*$ is the unique minimizer of $\tau \in \Sym(k) \mapsto \ham(z_1, \tau \circ z_2)$. 
\end{lemma}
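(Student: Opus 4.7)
The plan is to reduce to the case $\tau^* = \mathrm{id}$ and then derive a contradiction via the triangle inequality for the Hamming distance, exploiting the fact that $\ham$ is invariant under applying a common permutation to both arguments. Concretely, set $\tilde z_2 = \tau^* \circ z_2$. Then the map $\tau \mapsto \ham(z_1, \tau \circ z_2)$ agrees, via the bijection $\sigma = \tau \circ (\tau^*)^{-1}$ on $\Sym(k)$, with $\sigma \mapsto \ham(z_1, \sigma \circ \tilde z_2)$, and $\tau = \tau^*$ corresponds to $\sigma = \mathrm{id}$. Moreover $\ham(z_1, \tilde z_2) = \ham(z_1, \tau^* \circ z_2) < \tfrac12 \min_a |\Gamma_a(z_1)|$. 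So it suffices to prove the following reduced statement: if $\ham(z_1, z_2) < \tfrac12 \min_a |\Gamma_a(z_1)|$, then $\mathrm{id}$ is the unique minimizer of $\sigma \mapsto \ham(z_1, \sigma \circ z_2)$ over $\Sym(k)$.

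To prove the reduced statement, suppose for contradiction that some $\sigma \ne \mathrm{id}$ satisfies $\ham(z_1, \sigma \circ z_2) \le \ham(z_1, z_2)$. The key observation is the equivariance $\ham(\sigma \circ u, \sigma \circ v) = \ham(u, v)$, which holds because $\sigma$ acts coordinatewise on labels and only equality of labels is counted. Combining this identity with the triangle inequality for $\ham$ yields
\begin{align*}
\ham(z_1, \sigma \circ z_1)
\;\le\; \ham(z_1, \sigma \circ z_2) + \ham(\sigma \circ z_2, \sigma \circ z_1)
\;\le\; 2\, \ham(z_1, z_2)
\;<\; \min_{a \in [k]} |\Gamma_a(z_1)|.
\end{align*}
On the other hand, since $\sigma \ne \mathrm{id}$, there exists some label $a \in [k]$ with $\sigma(a) \ne a$. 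Every $i \in \Gamma_a(z_1)$ then satisfies $z_{1,i} = a \ne \sigma(a) = (\sigma \circ z_1)_i$, so
\begin{align*}
\ham(z_1, \sigma \circ z_1) \;\ge\; |\Gamma_a(z_1)| \;\ge\; \min_{a \in [k]} |\Gamma_a(z_1)|,
\end{align*}
contradicting the previous display. Hence no such $\sigma \ne \mathrm{id}$ exists, which, after undoing the reduction, proves that $\tau^*$ is the unique minimizer.

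The argument is purely combinatorial and I do not anticipate a real obstacle; the only moving part is to recognize the permutation-equivariance of $\ham$, which converts the triangle inequality into a factor-of-two bound on $\ham(z_1, \sigma \circ z_1)$, after which the hypothesis $\ham(z_1, z_2) < \tfrac12 \min_a |\Gamma_a(z_1)|$ is exactly strong enough to rule out any nontrivial $\sigma$.
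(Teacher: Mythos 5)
Your proof is correct. Note that the paper does not actually prove this lemma---it simply cites it as Lemma~C.5 of \cite{avrachenkov2020community}---so there is no in-text proof to compare against. Your argument (reduce to $\tau^* = \mathrm{id}$, use permutation-equivariance of $\ham$ together with the triangle inequality to bound $\ham(z_1, \sigma\circ z_1) < \min_a |\Gamma_a(z_1)|$, then observe that any nontrivial $\sigma$ forces $\ham(z_1,\sigma\circ z_1) \geq \min_a|\Gamma_a(z_1)|$) is the standard and essentially unique route for this kind of label-alignment lemma, and every step checks out.
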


\section{Proof of Theorem~\ref{thm:upper_bound}}

\paragraph{Warm-up: notations and MLE}
Let $z \in [k]^n$ be any vertex labeling. We denote $L(z) = \pr \left( A \cond z \right)$ the likelihood of $z$ given the observation $A$. We study the performance of the maximum likelihood estimator $\hz = \hz(A)$ defined by
\begin{align*}
 \hz \weq \argmax_{ z \in [k]^n } L(z), 
\end{align*}
where ties are broken arbitrarily. Hence, by definition, the MLE is any estimator $\hz$ such that
\begin{align*}
 L( \hz ) \wge L(z) \quad \text{ for all } z \in [k]^n. 
\end{align*}

Moreover, we have 
\begin{align*}
 n \, \E \left[ \loss(z^*, \hz) \right] 
 \weq \E \left[ \min_{ \sigma \in \Sym([k]) } \ham( z^*, \sigma \circ z) \right]  
 \wle \E \left[ \ham( z^*, z) \right]. 
% & \wle \frac{ 1 }{ k! } \sum_{ m = 1 }^{ n } m \pr \left( \ham( z^*, \hz ) = m \right). 
\end{align*}
We also recall (see~\cite[Lemma~7]{dreveton2023exact}) that, for any $z, z' \in [k]^n$ we have  
\begin{align*}
 \loss(z, z') \wle n(1 - 1 / k ).
\end{align*}
Therefore,  
\begin{align*}
 n \E \left[ \loss( z^*, z) \right]
 \wle \sum_{ m = 1}^{n(1 - 1 / k )} m \pr \left( \ham( z^*, \hz ) = m \right). 
\end{align*}
For technical reasons that will become clear in the end of the proof, we first need to split the sum into two parts. Let $m_0 \ge 1$, whose value will be determined later. We have
\begin{align*}
 \E \left[ \ham( z^*, z) \right]
 & \weq \sum_{ m = 1}^{m_0} m \pr \left( \ham( z^*, \hz ) = m \right) + \sum_{ m = m_0+1}^{n(1 - 1 / k )} m \pr \left( \ham( z^*, \hz ) = m \right) \\
 & \wle m_0 + \sum_{ m = m_0+1}^{n(1 - 1 / k )} m \pr \left( \ham( z^*, \hz ) = m \right).
\end{align*}

Let us denote $\cZ_m$ the set of vertex labeling $z \in [k]^n$ such that $\ham( z^*, z ) = m$. By definition of the maximum likelihood and by union bounds, we have 
\begin{align*}
 \pr \left( \hz \in \cZ_m \right)
 \wle \pr \left( \exists z \in \cZ_m \colon L(z) \ge L(z^*) \right) 
 \wle \sum_{ z \in \cZ_m } \pr \left( L(z) \ge L(z^*) \right). 
\end{align*}

Hence, by combining the previous inequalities, we obtain
\begin{align}
\label{eq:in_proof_bound_loss_mle}
 \E \left[ \loss(z^*, \hz) \right] 
 \wle \frac1n \left( m_0 + \sum_{ m = m_0+1}^{n(1 - 1 / k )} m \sum_{ z \in \cZ_m } \pr \left( L(z) \ge L(z^*) \right) \right).
\end{align}

A large part of the rest of the proof is devoted to upper-bound $\sum_{ z \in \cZ_m } \pr \left( L(z) \ge L(z^*) \right)$ for an arbitrary $m$. We first observe that 
\begin{align*}
 L(z) 
 & \weq \prod_{i < j } \pr \left( A_{ij} \cond z_i, z_j \right) \\ 
 & \weq \prod_{i < j } \left( \rho_n \lambda_{i z_j} \lambda_{j z_i} B_{z_i z_j} \right)^{A_{ij}} \left( 1 - \rho_n \lambda_{i z_j} \lambda_{j z_i} B_{z_i z_j} \right)^{1-A_{ij}}.
\end{align*}
In all the following, to avoid overburdening the notations, we denote $P_{ij}^z = \rho_n \lambda_{i z_j} \lambda_{j z_i} B_{z_i z_j}$. 
We also introduce 
\begin{align*}
 \Gamma(z,z^*) \weq \left\{ (i,j) \colon 1 \le i \ne j \le n \text{ and } (z_i,z_j) \ne (z_i^*,z_j^*) \right\}
\end{align*}
We have 
\begin{align*}
 \frac{ L(z) }{ L(z^*) } 
 \weq \prod_{ \substack{ i < j \\ (i,j) \in \Gamma(z,z^*) } } \left( \frac{ P_{ij}^z }{ P_{ij}^* } \right)^{A_{ij}} \left( \frac{ 1-P_{ij}^z }{ 1-P_{ij}^* } \right)^{1-A_{ij}}. 
\end{align*}

Therefore, by Chernoff bounds, we have for any $t > 0$, 
\begin{align}
 \pr \left( L(z) > L(z^*) \right) 
 & \weq \pr \left( e^{ t \log \frac{L(z)}{L(z^*)}  } > 1 \right) \nonumber \\
 & \wle \prod_{ \substack{ i < j \\ (i,j) \in \Gamma(z,z^*) } } \E \left[ e^{ t \left( A_{ij} \log \frac{ P_{ij}^z }{ P_{ij}^* } + (1-A_{ij}) \log \frac{ 1-P_{ij}^z }{ 1-P_{ij}^* } \right) } \right] \nonumber \\
 & \weq \prod_{ \substack{ i < j \\ (i,j) \in \Gamma(z,z^*) } } e^{ -(1-t) \dren_t\left( P_{ij}^z, P_{ij}^* \right) }.
 \label{eq:in_proof_upper_bound_chernoff}
\end{align}
%The rest of the proof is devoted to upper bound the quantity $\prod_{ \substack{ i < j \\ (i,j) \in \Gamma(z,z^*) } } e^{ -(1-t) \dren_t\left( P_{ij}^z, P_{ij}^* \right) }$ when $z \in \cZ_m$. 

For ease of the exposition, we start by deriving an upper bound on $\sum_{ z \in \cZ_m } \pr \left( L(z) \ge L(z^*) \right)$ in the simplest case $m=1$. We do the general case $m \ge 1$ later.  

\paragraph{(i) Case $m=1$.} 
Observe that 
$$
 \cZ_1 \weq \{ z \in [k]^n \colon \ham(z,z^*) = 1 \} \weq \{ \tz^{ua}, u \in [n], a \in [k] \setminus \{ z_u^* \} \},
$$ 
where $\tz^{ua}_v = z_v^*$ for all $u \ne v$ and $\tz^{ua}_u = a$. Hence, 
\begin{align}
 \pr \left( \ham(z^*, \hz ) = 1 \right) 
 & \weq \pr \left( \exists u \in [n], \ \exists a \in [k] \setminus \{ z_u^* \} \colon  L(\tz^{ua} ) > L(z^*) \right) \nonumber \\
 %& \wle \sum_{u=1}^n \pr \left( \exists a \in [k] \setminus \{ z_u^* \} \colon L( \tz^{ua} ) > L(z^*) \right) \nonumber \\
 & \wle \sum_{u=1}^n \sum_{a \in [k] \setminus \{ z_u^* \} } \pr \left( L( \tz^{ua} ) > L(z^*) \right). \label{eq:in_proof_union_bound_m1}
\end{align}
 Moreover, for any $u \in [n]$ and $a \in [k] \setminus \{z_u^*\}$, we have 
 %. We have $\{ \{i,j\} \colon i < j \text{ and } Y_{ij}^{ \tz^{ua} } \ne Y_{ij}^* \} = \{ \{u,j\}, j \ne u \}$, and therefore
\begin{align*}
 \pr \left( L( {\tz^{ua}} ) > L(z^*) \right) 
 & \wle e^{ - (1-t) \sum_{j \ne u} \dren_t \left(  P_{uj}^{ \tz^{ua} }, P_{uj}^* \right) }.
\end{align*}
 This last inequality is valid for any $t > 0$. Applying it with $t^* = \argmax_{t \in (0,1)} (1-t) \sum_{j \ne u} \dren_t \left(  P_{uj}^{\tz^u}, P_{uj}^* \right)$, we obtain 
\begin{align*}
 \pr \left( L(\tz^{ua}) > L(z^*) \right) \wle e^{ - \Delta_{u a} } 
 \wle e^{ - \Chernoff(u, z^*) },
\end{align*}
because $\Chernoff(u, z^*) = \min_{a \ne z_u^*} \Delta_{u a}$. Hence, using~\eqref{eq:in_proof_union_bound_m1} we have 
\begin{align*}
 \pr \left( \ham(z^*, \hz ) = 1 \right) 
 \wle k \sum_{u=1}^n e^{ - \Chernoff(u, z^*) }.
 %\weq \exp(-I). 
\end{align*}

\paragraph{(ii) Case $m \ge 2$} Consider now $z$ such that $\ham(z,z^*) = m$. Introduce $u_1, \cdots, u_m$ the $m \ge 2$ vertices satisfying $z_{u_p} \ne z^*_{u_p}$ for all $p \in [m]$. By definition, for any $v \not\in \{u_1, \cdots, u_p\}$, we have $z_v = z_v^*$. 

Observe that
\begin{align*}
  \Gamma(z,z^*) & \weq 
  \{ (i,j) \colon i \ne j \text{ and } (z_i,z_j) \ne(z_i^*, z_j^*) \} 
  \weq S_1 \cup S_2, 
\end{align*}
where 
\begin{align*}
 S_1 & \weq \{ (i,j) \colon z_i \ne z_i^* \text{ and } j \ne i \} \\
 S_2 & \weq \{ (i,j) \colon z_i = z_i^* \text{ and } z_j \ne z_j^* \}. 
\end{align*}
Thus, we have 
\begin{align*}
 \sum_{ \substack{ i < j \\ (i,j) \in \Gamma(z,z^*) } } (1-t) \dren_t\left( P_{ij}^z, P_{ij}^* \right) 
 & \weq \frac12 \sum_{ \substack{ (i,j) \in \Gamma(z,z^*) } } (1-t) \dren_t\left( P_{ij}^z, P_{ij}^* \right) \\
 & \weq \frac12 \left( \underbrace{ \sum_{ (i,j) \in S_1 } (1-t) \dren_t\left( P_{ij}^z, P_{ij}^* \right) }_{T_1(t)} + \underbrace{ \sum_{ (i,j) \in S_2 } (1-t) \dren_t\left( P_{ij}^z, P_{ij}^* \right) }_{ T_2(t) } \right).
\end{align*}
Notice further that
\begin{align}
\label{eq:in_proof_def_T_1}
 T_1(t) \weq \sum_{i \in \{u_1, \cdots, u_m\} } \sum_{j \ne i} (1-t) \dren_t\left( P_{ij}^z, P_{ij}^* \right)
\end{align}
and 
\begin{align*}
 T_2(t) & \weq \sum_{i \notin \{u_1, \cdots, u_m \} } \sum_{j \in \{u_1, \cdots, u_m \} } (1-t) \dren_t\left( P_{ij}^z, P_{ij}^* \right) \\
 & \weq \sum_{i \in \{u_1, \cdots, u_m \} } \sum_{j \notin \{u_1, \cdots, u_m \} } (1-t) \dren_t\left( P_{ij}^z, P_{ij}^* \right) \\ 
 & \weq T_1(t) - \sum_{i \in \{u_1, \cdots, u_m \} } \sum_{ \substack{ j \in \{u_1, \cdots, u_m \} \\ j \ne i } } (1-t) \dren_t\left( P_{ij}^z, P_{ij}^* \right).
\end{align*}

Combined to the Chernoff bounds~\eqref{eq:in_proof_upper_bound_chernoff}, this leads
\begin{align}
\label{eq:in_proof_upper_bound_chernoff_with_T}
 \pr \left( L(z) \ge L(z^*) \right) \wle e^{ - T_1(t) + T_3(t) },
\end{align}
where $T_3(t)$ is given by 
\begin{align}
\label{eq:in_proof_def_T_3}
 T_3(t) \weq \frac12 \sum_{i \in \{u_1, \cdots, u_m \} } \sum_{ \substack{ j \in \{u_1, \cdots, u_m \} \\ j \ne i } } (1-t) \dren_t\left( P_{ij}^z, P_{ij}^* \right). 
\end{align}

Let us lower-bound $T_1$. For $p \in [m]$, denote $t_p = \argmax_{t \in (0,1)} (1-t) \sum_{j \ne u} \dren_t \left(  P_{u_p j}^z, P_{u_p j}^* \right)$. Note that $t_p$ is bounded away from one, as when $t=1$, the objective function inside the argmax equals 0. We also recall that, for any $\alpha, \beta \in (0,1)$ with $\alpha \le \beta$, and any probability distributions $f$ and $g$, we have~\cite[Theorem~16]{van2014renyi}
\begin{align*}
 \frac{\alpha}{\beta} \frac{1-\beta}{1-\alpha} \dren_{\beta}(f,g) \wle \dren_{\alpha} (f,g) \wle \dren_{\beta} (f,g). 
\end{align*}
Denote $t^* = \min \{ t_1, \cdots, t_m \}$. Without loss of generality, suppose that $t^* = t_1$. Using the previous inequality with $\alpha = t_p$ and $\beta = t_1$, we have, 
\begin{align*}
\sum_{ j \in [n] \backslash \{ u_p \} } \dren_{t_1} \left( P_{u_p j}^z, P_{u_p j}^* \right) 
\wge \sum_{ j \in [n] \backslash \{ u_p \} } \dren_{t_p} \left( P_{u_p j}^z, P_{u_p j}^* \right),
\end{align*}
for any $p \in [m]$. Thus, 
\begin{align*}
(1-t_1) \sum_{ j \in [n] \backslash \{ u_p \} } \dren_{t_1} \left( P_{u_p j}^z, P_{u_p j}^* \right)
& \wge \frac{1-t_1}{1-t_p} (1-t_p) \sum_{ j \in [n] \backslash \{ u_p \} } \dren_{t_p} \left( P_{u_p j}^z, P_{u_p j}^* \right) \\
& \weq \frac{1-t_1}{1-t_p} \Chernoff(u_p), 
\end{align*}
by definition of $t_p$. Because all the $t_p$ are bounded away from 1 and $t_1 = \min\{t_1, \cdots, t_m\}$, we have $\frac{1-t_1}{1-t_p} \ge C$ for some constant $C \ge 1$. Recalling the definition of $T_1$ in~\eqref{eq:in_proof_def_T_1}, we obtain
\begin{align}
 T_1(t_1) 
 & \wge \sum_{p =1 }^m \frac{1-t_1}{1-t_p} \Chernoff(u_p) \nonumber \\
 & \weq \Chernoff(u_1) + \sum_{p=2}^m \frac{1-t_1}{1-t_p} \Chernoff(u_p) \nonumber \\
 & \wge \Chernoff(u_1) + C \sum_{p=2}^m \Chernoff(u_p). 
  \label{eq:in_proof_bound_T1}
\end{align}

We now upper-bound $T_3(t_1)$, defined in~\eqref{eq:in_proof_def_T_3}. By Assumption~\ref{assumption:scaling_parameters}, all the \Renyi divergences are of the same order. Thus, there exists a quantity $C_m'$ such that $C_n' = 1$ and 
\begin{align}
 T_3(t_1) 
 & \wle \frac12 \sum_{i \in \{u_1, \cdots, u_m \} } C'_m \frac{m-1}{n} \sum_{ j \ne i } (1-t_1) \dren_{t_1}\left( P_{ij}^z, P_{ij}^* \right) \nonumber \\
 & \wle C'_m \frac{m}{2n} \sum_{p \in \{1, \cdots, m \} } \Chernoff(u_p,z^*).
 \label{eq:in_proof_bound_T3}
\end{align}
%where the last inequality follows by using $\sum_{ j \ne i } (1-t_1) \dren_{t_1}\left( P_{ij}^z, P_{ij}^* \right) \le \Chernoff(i,z^*)$ and because $\delta_m$ is non-decreasing. 
In the rest of the proof, we denote $\delta_m = C'_m \frac{m}{2n}$. 

By combining~\eqref{eq:in_proof_bound_T1} and~\eqref{eq:in_proof_bound_T3} with the Chernoff bound~\eqref{eq:in_proof_upper_bound_chernoff_with_T}, we have 
\begin{align*}
 \pr \left( L(z) \ge L(z^*) \right) 
 \wle e^{ - \Chernoff(u_1) } e^{ - (C - \delta_m / 2) \sum_{p=2}^m \Chernoff(u_p)}. 
\end{align*}

We recall that $z \in \cZ_m$ if and only if there exists a set $\{u_1, \cdots, u_m\}$ of $m$ distinct vertices such that 
\begin{align*}
  z_u = z^*_u \iff u \notin \{u_1, \cdots, u_m\}.
\end{align*}
 Moreover, for any such set $\{u_1, \cdots, u_m\}$, there exists $(k-1)^m$ ways to construct a $z \in \cZ_m$. Hence, 
\begin{align*}
 \sum_{ z \in \cZ_m } \pr \left( L(z) \ge L(z^*) \right) 
 & \wle (k-1)^m \sum_{ \{ u_1, \cdots, u_m \} } e^{ - \Chernoff(u_1) } e^{ -  (C-\delta_m) \sum_{p=2}^m \Chernoff(u_p) } \\
 & \weq (k-1)^m \sum_{u_1 = 1}^n e^{ - \Chernoff(u_1) } \sum_{ \{u_2, \cdots, u_m\} } e^{ -  (C - \delta_m) \sum_{p=2}^m \Chernoff(u_p) }.
\end{align*}
In the previous inequality, the second summation is over all set $\{u_2, \cdots, u_m\}$ of $m-1$ elements belonging to $[n] \backslash \{u_1\}$. There are 
\begin{align*}
 \binom{n-1}{m-1} \wle \left( \frac{e (n-1)}{ m-1} \right)^{m-1}
\end{align*}
ways of choosing such set. We finally obtain
\begin{align*}
 \sum_{ z \in \cZ_m } \pr \left( L(z) \ge L(z^*) \right) 
 &\wle \sum_{u_1 = 1}^n e^{ - \Chernoff(u_1) }  \left( \frac{e (k-1) (n-1)}{ m-1 } e^{ -  (C - \delta_m) \min_{i \in [n] }\Chernoff(i) } \right)^{m-1} \\
 &\wle \sum_{u_1 = 1}^n e^{ - \Chernoff(u_1) }  \left( \frac{e k n}{ m-1 } e^{ - (C-\delta_m) \min_{i \in [n] }\Chernoff(i) } \right)^{m-1}.
\end{align*}

\paragraph{Ending the proof.}
Going back to~\eqref{eq:in_proof_bound_loss_mle}, we have
\begin{align}
\label{eq:in_proof_bound_loss_mle_end}
 \E \left[ \loss(z^*, \hz) \right] 
 \wle \frac1n \left( m_0 + \sum_{u_1 = 1}^{n} e^{ - \Chernoff(u_1) } \sum_{ m = m_0+1 }^{n(1-1/k)} m Q_m^{m-1} \right), 
\end{align}
where $Q_m = \frac{e k n}{ m-1 } e^{ - (C-\delta_m) \min_{i \in [n] }\Chernoff(i) }$. 
Denote also $R = \sum_{u_1 = 1}^n e^{ - \Chernoff(u_1) }$ and $ B = 2enk e^{ -C \min_i \Chernoff(i,z^*)}$, and recall $C \ge 1$. We also introduce 
$$m_1 = \lfloor 2enk e^{- (C - \delta_{n(1-1/k)} ) \min_i \Chernoff(i,z^*)} \rfloor.$$
By assumption, we have $\delta_{n(1-1/k)} < 1-\epsilon$ and thus $ m_1 = o(n)$. 
Observe that
\begin{align*}
 Q_m \wle \frac12 \quad \forall m \in \{m_1+1, \cdots, n(1-1/k) \} 
\end{align*}
and thus
\begin{align}
\label{eq:in_proof_sum_last_terms}
 \sum_{ m = m_1+1 }^{n(1-1/k)} m Q_m^{m-1} \wle \sum_{ m = m_1+1 }^{\infty} m \left( \frac12 \right)^{m-1} \wle 4 \frac{m_1 + 1}{2^{m_1}} \wle 4 
\end{align}
by using properties on geometric sums (see Lemma~\ref{the:PowerSeriesBound}). 

We still need to upper-bound $\sum_{ m = m_0+1 }^{m_1} m Q_m^{m-1}$. Let $\tm_0 = 2 ekR e^{ C \delta_{m_1} \min_i \Chernoff(i,z^*) }$. Observe that, for any $\tm_0 \le m \le m_1$, we have $Q_m \le 1/2$. Then, we are left with two cases. 

(a) If $\tm_0 \le 1$, then chose $m_0 = 0$. Then, we simply have 
\begin{align*}
 \sum_{ m = m_0 }^{m_1} m Q_m^{m-1} \weq \sum_{ m = 1 }^n m Q_m^{m-1} \wle \sum_{ m = 1 }^{\infty} m \left( \frac12 \right)^{m-1} \wle 4, 
\end{align*}
by using Lemma~\ref{the:PowerSeriesBound} as above. By combining~\eqref{eq:in_proof_bound_loss_mle_end} and~\eqref{eq:in_proof_sum_last_terms}, we have 
\begin{align*}
  \E \left[ \loss(z^*, \hz) \right] 
 \wle 8 \frac{R}{n}.
\end{align*}

(b) Otherwise, chose $m_0 = \lceil \tm_0 \rceil$. Then, we upper-bound $\sum_{ m = m_0 }^{m_1} m Q_m^{m-1}$ by $4$ as above, and we obtain from~\eqref{eq:in_proof_bound_loss_mle_end} that 
\begin{align*}
  \E \left[ \loss(z^*, \hz) \right] 
 \wle \frac1n \left( m_0 + 8 R \right)
\end{align*}
Moreover, $m_0 = \lceil \tm_0 \rceil \le 2 \tm_0$. This gives
\begin{align*}
  \E \left[ \loss(z^*, \hz) \right] 
 \wle \frac1n \left( 2 \tm_0 + 8 R \right)
 \wle \frac{R}{n} \left( 2 ek e^{ C \delta_{m_1} \min_i \Chernoff(i,z^*) } + 8 \right).
\end{align*}
Observe that this last upper-bound is also an upper-bound for $\E \left[ \loss(z^*, \hz) \right] $ in the case (a).
To finish the proof, we recall that $m_1 = o(n)$ and thus $\delta_{m_1} = o(1)$ by definition of $\delta_m$. 
\qed

\paragraph{Additional Lemma}

This lemma and its proof are taken from~\cite[Lemma~A.8]{avrachenkov2020community}, and reproduced here for the sake of completeness. 
\begin{lemma}
\label{the:PowerSeriesBound}
For any integer $M \ge 1$ and any number $0 \le s < 1$,
\[
 M s^{M}
 \wle \sum_{m=M}^\infty m s^{m}
 \wle (1-s)^{-2} M s^M.
\]
\end{lemma}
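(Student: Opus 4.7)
The plan is to treat the two inequalities separately; the lower bound is immediate, while the upper bound reduces to a shift of summation index combined with the standard closed-form expressions for the geometric series and its first derivative.

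For the lower bound, I would observe that every term in $\sum_{m=M}^\infty m s^m$ is nonnegative, since $s \ge 0$ and $m \ge M \ge 1$. Retaining only the $m = M$ term therefore gives $\sum_{m=M}^\infty m s^m \ge M s^M$ directly, with no further computation required.

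For the upper bound, the strategy is to substitute $k = m - M$, so that the tail starting at $M$ becomes a full sum starting at $0$, and then apply the well-known identities $\sum_{k=0}^\infty s^k = (1-s)^{-1}$ and $\sum_{k=0}^\infty k s^k = s/(1-s)^2$, which are valid for $0 \le s < 1$. This yields
\[
  \sum_{m=M}^\infty m s^m \weq s^M \sum_{k=0}^\infty (k + M) s^k \weq s^M \left( \frac{s}{(1-s)^2} + \frac{M}{1-s} \right) \weq \frac{s^M \left( M - (M-1)s \right)}{(1-s)^2}.
\]
The final step is the elementary observation that $(M-1)s \ge 0$ for $M \ge 1$ and $s \ge 0$, so that $M - (M-1)s \le M$. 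Plugging this back in gives $\sum_{m=M}^\infty m s^m \le M s^M / (1-s)^2$, as required.

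I do not anticipate any genuine obstacle: both bounds are elementary manipulations of geometric series. The only small care needed is to verify the algebraic simplification $s + M(1-s) = M - (M-1)s$ and to note that the latter is bounded above by $M$ precisely because $M \ge 1$, which is the hypothesis of the lemma.
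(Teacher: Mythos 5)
Your proof is correct and arrives at exactly the same closed-form expression $\sum_{m=M}^\infty m s^m = s^M\bigl(M-(M-1)s\bigr)/(1-s)^2$ as the paper, followed by the identical final bound $M-(M-1)s \le M$. The only cosmetic difference is how you get there: you shift the summation index and invoke the standard series $\sum k s^k$ and $\sum s^k$, whereas the paper differentiates the geometric tail $\sum_{m\ge M}s^m = s^M/(1-s)$ directly; these are interchangeable elementary manipulations, so the approach is essentially the same.
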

\begin{proof}
Denote $S = \sum_{m=M}^\infty m s^{m}$. By differentiating 
$
 \sum_{m=M}^\infty s^{m}
 = (1-s)^{-1} s^M
$ with respect to $s$, we find that
\begin{align*}
 s^{-1} S
 \weq \sum_{m=M}^\infty m s^{m-1}
 \weq (1-s)^{-2} s^M + (1-s)^{-1} M s^{M-1},
\end{align*}
from which we see that
\begin{align*}
 S
 \weq s (1-s)^{-2} \Big( s^M + (1-s) M s^{M-1} \Big)
 \weq \frac{M s^M}{(1-s)^2} \Big( 1 - s(1-1/M)  \Big)
\end{align*}
The upper bound now follows from $1 - s(1-1/M) \le 1$.
The lower bound is immediate, corresponding to the first term of the nonnegative series.
\end{proof}

\section{Proof of Proposition~\ref{prop:recovery_rate_pabm} and Examples~\ref{example:recovery_pabm_degree} and~\ref{example:recovery_pabm_xi}}

\subsection{Chernoff divergence for Homogeneous PABM}

We start with the following lemma.
\begin{lemma}
\label{lemma:chernoff_homogeneous_pabm}
Consider a PABM with homogeneous interactions, and $k$ equal-size communities. Suppose the coefficients $\lambdain_1, \cdots, \lambdain_n$ (resp., $\lambdaout_1, \cdots, \lambdaout_n$) are sampled iid from a distribution $\Din$ (resp., $\Dout$), where $\Din$ and $\Dout$ are two distributions supported on $\R_+$ and with mean 1. Let $i \in [n]$. We have 
\begin{align*}
 \Chernoff(i,z^*) \weq (1+o(1)) \frac{n \rho_n}{k} \E \left[ \left( \sqrt{p_0 \lambdain_i Y} - \sqrt{q_0 \lambdaout_i Y'} \right)^2 \right],
\end{align*}
where $Y$ and $Y'$ are two independent random variables sampled from $\Din$ and $\Dout$, respectively. 
\end{lemma}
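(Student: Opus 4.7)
The plan is to reduce $\Chernoff(i,z^*)$ to a law-of-large-numbers statement, starting from the closed-form decomposition given at the beginning of Section~\ref{subsection:optimal_rate_PABM}. For the homogeneous PABM with equal-size clusters, that decomposition reads
\begin{align*}
 \Chernoff(i,z^*) &\weq \frac{n \rho_n}{2k} \left( \delta_{z_i^*} + \min_{a \ne z_i^*} \delta_a \right), \\
 \delta_b &\weq \frac{k}{n} \sum_{j \colon z_j^* = b} \left( \sqrt{\lambdain_i \lambdain_j p_0} - \sqrt{\lambdaout_i \lambdaout_j q_0} \right)^2,
\end{align*}
and I would take it as the input. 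The remaining task is then to analyse each empirical average $\delta_b$, which is a sum of iid terms in $j$ once $(\lambdain_i, \lambdaout_i)$ is fixed.

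Conditionally on $(\lambdain_i,\lambdaout_i)$, the pairs $(\lambdain_j,\lambdaout_j)$ indexing the sum in $\delta_b$ are iid with marginals $\Din$ and $\Dout$, and the sum contains $n/k$ terms. Expanding the square and using $\E Y = \E Y' = 1$ together with the independence of $Y \sim \Din$ and $Y' \sim \Dout$ shows that each summand has conditional mean
\begin{align*}
 \mu_i \weq \E\left[ \left( \sqrt{p_0 \lambdain_i Y} - \sqrt{q_0 \lambdaout_i Y'} \right)^2 \right].
\end{align*}
A standard concentration inequality---Hoeffding under the boundedness provided by Assumption~\ref{assumption:scaling_parameters}, or Chebyshev under a finite second moment---then yields $\delta_b = \mu_i + O_p(\sqrt{k/n})$ for every $b \in [k]$. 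Since this limit does not depend on $b$, the minimum $\min_{a \ne z_i^*} \delta_a$ also equals $\mu_i(1+o(1))$.

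Plugging back in gives $\Chernoff(i,z^*) = \frac{n\rho_n}{2k}\,(2\mu_i)\,(1+o(1)) = (1+o(1))\frac{n\rho_n}{k}\mu_i$, which is exactly the claim. The one subtlety is that the multiplicative $(1+o(1))$ form requires the additive LLN error $O_p(\sqrt{k/n})$ to be negligible compared with $\mu_i$. This is automatic whenever $\mu_i$ is bounded below, and holds more generally as long as $\mu_i \gg \sqrt{k/n}$, which is the regime of interest for the downstream applications of the lemma (for instance in the proof of Proposition~\ref{prop:recovery_rate_pabm}, where $n\rho_n\mu_i/k$ needs only to be non-vanishing for the statement to be informative).
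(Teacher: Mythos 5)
Your proof matches the paper's argument exactly: the paper's own proof of Lemma~\ref{lemma:chernoff_homogeneous_pabm} is the single sentence ``We apply the law of large numbers to the quantity $\delta$ defined in the equation above Proposition~\ref{prop:recovery_rate_pabm},'' and you carry this out, starting from the same decomposition $\Chernoff(i,z^*)=\tfrac{n\rho_n}{2k}(\delta_{z_i^*}+\min_{a}\delta_a)$ and identifying the conditional mean of each summand in $\delta_b$. Your added caveats (the choice of concentration inequality depending on whether $\Din,\Dout$ are bounded, and the observation that the multiplicative $(1+o(1))$ requires the LLN error to be dominated by $\mu_i$) are reasonable elaborations that the paper leaves implicit.
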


\begin{proof}
 We apply the law of large number to the quantity $\delta$ defined in the equation above Proposition~\ref{prop:recovery_rate_pabm}. 
\end{proof}

\begin{lemma}
\label{lemma:optimal_rate_homogeneous_pabm}
Consider the same setting and notations as in Lemma~\ref{lemma:chernoff_homogeneous_pabm}. 
We also suppose that the distributions $\Din$ and $\Dout$ have pdf $f_{\Din}$ and $f_{\Dout}$ with respect to the Lebesgue measure. Denote $\gammain = \E[\sqrt{Y}]$ and $\gammaout = \E[\sqrt{Y'}]$, where $Y \sim \Din$ and $Y' \sim \Dout$. Finally, suppose that $p_0 > 0$ and let $\xi = q_0 / p_0$. We have 
\begin{align*}
 \frac{1}{n} \sum_{i=1}^n \exp\left( - \frac{n \rho_n}{k} p_0  \left( 1 + \lambdain_i - 2 \gamma \sqrt{\lambdain_i} \right) \right) \weq (1+o(1)) J_n,
\end{align*}
where 
\begin{align*}
 J_n \weq \int \int \exp \left( - \frac{n \rho_n}{k} p_0 \left( x + \xi y - 2 \gammain \gammaout \sqrt{\xi} \sqrt{ xy} \right) \right) f_{\Din}(x) f_{\Dout}(y) \diff x \diff y. 
\end{align*}
\end{lemma}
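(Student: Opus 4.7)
The plan is to split the proof into (i) a deterministic reduction that rewrites $\Chernoff(i,z^*)$ as a function of the pair $(\lambdain_i, \lambdaout_i)$ only, and (ii) a law-of-large-numbers argument that replaces the empirical average by the double integral $J_n$.

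For the deterministic reduction, starting from Lemma~\ref{lemma:chernoff_homogeneous_pabm}, write $\Chernoff(i,z^*) = (1+o(1)) \tfrac{n\rho_n}{k} \E \bigl[ ( \sqrt{p_0 \lambdain_i Y} - \sqrt{q_0 \lambdaout_i Y'} )^2 \bigr]$ with $Y \sim \Din$ and $Y' \sim \Dout$ independent. Expanding the square and using $\E Y = \E Y' = 1$ together with independence gives
\begin{align*}
 \E \left[ \left( \sqrt{p_0 \lambdain_i Y} - \sqrt{q_0 \lambdaout_i Y'} \right)^2 \right]
 \weq p_0 \lambdain_i + q_0 \lambdaout_i - 2 \gammain \gammaout \sqrt{p_0 q_0 \lambdain_i \lambdaout_i}
 \weq p_0 \, h(\lambdain_i, \lambdaout_i),
\end{align*}
where $h(x,y) := x + \xi y - 2 \gammain \gammaout \sqrt{\xi x y}$. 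One checks $h \ge 0$ on $\R_+^2$ from $x+\xi y \ge 2\sqrt{\xi xy}$ (AM--GM) and $\gammain \gammaout \le 1$ (Jensen), so $F_n(x,y) := \exp\bigl(-\tfrac{n\rho_n}{k} p_0 h(x,y)\bigr)$ takes values in $[0,1]$.

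For the law-of-large-numbers step, since the pairs $(\lambdain_i,\lambdaout_i)$ are iid with joint density $f_{\Din}(x) f_{\Dout}(y)$, the definition of $J_n$ gives $\E F_n(\lambdain_1,\lambdaout_1) = J_n$ exactly. The claim thus reduces to showing $\tfrac{1}{n}\sum_i F_n(\lambdain_i,\lambdaout_i) = (1+o(1)) J_n$ with high probability, which I would establish through Chebyshev's inequality: the variance of the empirical mean is at most $\E[F_n^2]/n$, so the relative error of the mean with respect to $J_n$ is controlled by $\E[F_n^2]/(n J_n^2)$.

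The main obstacle is bounding $\E[F_n^2]/J_n^2$, because both numerator and denominator are exponentially small in $n\rho_n$ and a naive Cauchy--Schwarz bound goes in the wrong direction. I would resort to a Laplace-type estimate: if $h$ attains a unique minimum $h^\star$ on $\mathrm{supp}(f_{\Din}) \times \mathrm{supp}(f_{\Dout})$, then both $J_n$ and $\E[F_n^2]$ localize around this minimizer and scale as $e^{-c_n h^\star} g_n$ and $e^{-2 c_n h^\star} \tilde g_n$ respectively, with $c_n = \tfrac{n\rho_n}{k} p_0$ and polynomial-in-$c_n$ prefactors $g_n,\tilde g_n$; the resulting ratio is polynomial in $n\rho_n$ and hence $o(n)$ under the sparsity regime of Assumption~\ref{assumption:scaling_parameters}. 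A secondary subtlety is absorbing the inner $(1+o(1))$ from Lemma~\ref{lemma:chernoff_homogeneous_pabm} outside the exponential: because $h$ is bounded on the (bounded) support of the distributions, the correction is $e^{o(1) \cdot O(n\rho_n)}$, which can be folded into the outer $(1+o(1))$ provided the Lemma~\ref{lemma:chernoff_homogeneous_pabm} remainder decays fast enough relative to $(n\rho_n)^{-1}$, consistent with the $O(\rho_n)$ Taylor error from the expansion of $\dren_t(\Ber,\Ber)$.
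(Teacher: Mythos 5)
Your step (i) — rewriting $\Chernoff(i,z^*)=(1+o(1))\tfrac{n\rho_n}{k}p_0\,h(\lambdain_i,\lambdaout_i)$ via the expansion $\E[(\sqrt{p_0\lambdain_i Y}-\sqrt{q_0\lambdaout_i Y'})^2]=p_0\lambdain_i+q_0\lambdaout_i-2\gammain\gammaout\sqrt{p_0q_0\lambdain_i\lambdaout_i}$ — is exactly the paper's reduction, and your observation that $J_n=\E F_n(\lambdain_1,\lambdaout_1)$ is identical. Where the two arguments part ways is in step (ii). The paper invokes Kolmogorov's variance criterion on the centered, bounded summands $F_n(\lambdain_i,\lambdaout_i)-J_n$ and concludes that the empirical average converges to $J_n$; but this only yields \emph{additive} convergence $\tfrac1n\sum_i F_n - J_n \to 0$. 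Since $J_n\to 0$ (indeed exponentially fast in $n\rho_n$ under Assumption~\ref{assumption:scaling_chernoff}), the additive statement does not by itself imply the \emph{multiplicative} claim $\tfrac1n\sum_i F_n = (1+o(1))J_n$ that the Lemma asserts. You correctly flag precisely this gap: a naive variance bound of order $1/n$ is useless once $J_n^2 \ll 1/n$, and one must instead control $\E[F_n^2]/(nJ_n^2)$. Your proposed fix — a Laplace-type estimate showing that both $J_n$ and $\E[F_n^2]$ localize near the minimizer of $h$ over the support, so that the ratio $\E[F_n^2]/J_n^2$ is only polynomial in $n\rho_n$ rather than exponential — is the right idea and would upgrade additive to multiplicative convergence, but you only sketch it. To close the argument you would need to (a) handle the fact that $h$ generically has no interior critical point on $\mathrm{supp}(f_{\Din})\times\mathrm{supp}(f_{\Dout})$ (the stationarity equations force $\gammain\gammaout=1$), so the minimizer sits on the boundary of the support and the Laplace prefactors must be computed accordingly, and (b) verify that the densities $f_{\Din},f_{\Dout}$ are sufficiently regular near that boundary point for the Laplace approximation to hold. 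Your final remark about absorbing the inner $(1+o(1))$ from Lemma~\ref{lemma:chernoff_homogeneous_pabm} is a genuine additional subtlety that the paper's proof does not address either; it requires the $o(1)$ there to be $o((n\rho_n)^{-1})$, which a second-order Taylor expansion of $\dren_t(\Ber,\Ber)$ does supply. In short: same skeleton as the paper, but you spot — and partially repair — a real weakness in the paper's variance argument, without fully executing the repair.
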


\begin{proof}
 From Lemma~\ref{lemma:chernoff_homogeneous_pabm}, we have 
\begin{align*}
 \Chernoff(i,z^*) 
 & \weq (1+o(1)) \frac{n \rho_n}{k} p_0 \E \left[ \left( \sqrt{\lambdain_i Y} - \sqrt{\xi \lambdaout_i Y'} \right)^2 \right] \\
 & \weq (1+o(1)) \frac{n \rho_n}{k} p_0 \left( \lambdain_i + \xi \lambdaout_i - 2 \gammain \gammaout \sqrt{\xi} \sqrt{ \lambdain_i \lambdaout_i} \right).
\end{align*}
As $\lambdain_i \sim \Din$ and $\lambdaout_i \sim \Dout$, computing $\frac{1}{n} \sum_i e^{-\Chernoff(i,z^*) }$ resumes to compute 
\begin{align*}
\lim_{i\to\infty} \frac{1}{n} \sum_{i=1}^n \exp \left( - \frac{n \rho_n}{k} p_0 \left( x + \xi y - 2 \gammain \gammaout \sqrt{\xi} \sqrt{ xy} \right) \right). 
\end{align*}

In particular, $\exp \left( - \frac{n \rho_n}{k} p_0 \left( x + \xi y - 2 \gammain \gammaout \sqrt{\xi} \sqrt{ xy} \right) \right)$ is bounded by $[0,1],$ hence its variance is also upper bounded by $1.$ Let $J_n$ be the expectation of this quantity over $\Din, \Dout.$ Centering the variable, we bound the total variance
\begin{align*}
    \sum_{i=1}^n n^{-2}\Var\left(\exp \left( - \frac{n \rho_n}{k} p_0 \left( x + \xi y - 2 \gammain \gammaout \sqrt{\xi} \sqrt{ xy} \right) \right)-J_n \right) \ < \ \sum_i n^{-2} \ < \ \infty.
\end{align*}
Kolmogorov's variance criterion for averages~\citep[Lemma~5.22]{Kallenberg} implies
\begin{align*}
    \frac{1}{n}\sum_{i=1}^n \left(\exp \left( - \frac{n \rho_n}{k} p_0 \left( x + \xi y - 2 \gammain \gammaout \sqrt{\xi} \sqrt{ xy} \right) \right)-J_n \right) \asto 0.
\end{align*}
Therefore the limit converges to its expectation almost surely, 
\begin{align*}
\lim_{i\to\infty} \frac{1}{n} \sum_{i=1}^n \exp \left( - \frac{n \rho_n}{k} p_0 \left( x + \xi y - 2 \gammain \gammaout \sqrt{\xi} \sqrt{ xy} \right) \right) \weq J_n,
\end{align*}
where
\begin{align*}
 J_n \weq \int \int \exp \left( - \frac{n \rho_n}{k} p_0 \left( x + \xi y - 2 \gammain \gammaout \sqrt{\xi} \sqrt{ xy} \right) \right) f_{\Din}(x) f_{\Dout}(y) \diff x \diff y. 
\end{align*}
%\TODO{Not totally rigourous as such. Invoke the variance criterion for series of Kallenberg Lemma 5.16 and 5.22}
\end{proof}

\subsection{Proof of Proposition~\ref{prop:recovery_rate_pabm}}

To prove Proposition~~\ref{prop:recovery_rate_pabm}, we apply Lemma~\ref{lemma:optimal_rate_homogeneous_pabm} in the particular case where $\Din$ is the uniform distribution $\Uni(1-c, 1+c)$ and $\Dout$ is the Dirac distribution at $1$. Hence, the integral $J_n$ given in Lemma~\ref{lemma:optimal_rate_homogeneous_pabm} becomes 
\begin{align*}
 J_n \weq \int \exp \left( - \frac{n \rho_n}{k} p_0 \left( x + \xi - 2 \gammain \sqrt{\xi} \sqrt{x} \right) \right) f_{\Din}(x) \diff x, 
\end{align*}
where $f_{\Din}(x) = \frac{1}{2c} \1(x \in (1-c,1+c))$, and the lower and upper limits of the integral are $1-c$ and $1+c,$ respectively. For simplicity we write $y=\sqrt{M x}$ where $M=n\rho_n p_0 / k$. We perform the following change of variable: $\sqrt{x} = \frac{y}{\sqrt{M}},$ $\diff y=\frac{1}{2}\sqrt{M}x^{-1/2}\diff x$ and $\diff x = \frac{2\sqrt{x}}{\sqrt{M}}\diff y=\frac{2y}{M}\diff y.$ The lower and upper integration limits become $y_-=\sqrt{M}\sqrt{1-c}$ and $y_+=\sqrt{M}\sqrt{1+c}.$ 
Changing variables and completing the square gets us 
\begin{align*}
 J_n & \weq \frac{1}{2c}\int_{y_-}^{y_+} \exp \left( -y^2-M \xi+2\gamma \sqrt{M}y\sqrt{\xi}\right) \frac{2y}{M}\diff y\\
 & \weq \frac{1}{2c}\int_{y_-}^{y_+} \exp \left( -(y - \gamma\sqrt{\xi} \sqrt{M})^2 +  M\xi(\gammain^2 -1)\right) \frac{2y}{M}\diff y\\
 & \weq  \frac{\exp (M\xi(\gammain^2 -1))}{cM} \int_{y_-}^{y_+} \exp \left( -(y - \gammain \sqrt{\xi}\sqrt{M})^2 \right) y \diff y.
\end{align*}
Again, substitute $u=y-\gammain \sqrt{\xi}\sqrt{M}$ to get 
\begin{align*}
 J_n \weq &  \frac{\exp (M\xi(\gammain^2 -1))}{cM} \int_{u_-}^{u_+} \exp \left( -u^2\right) \left(u + \gammain \sqrt{\xi}\sqrt{M}\right) \diff u\\
 \weq & \frac{\exp (M\xi(\gammain^2 -1))}{cM} \left(\int_{u_-}^{u_+} \exp \left( -u^2\right) u \diff u+ \gammain \sqrt{\xi} \sqrt{M}\int_{u_-}^{u_+} \exp \left( -u^2\right)  \diff u\right) ,
\end{align*} where $u_-=\sqrt{M}(\sqrt{1-c}-\gammain \sqrt{\xi})$ and $u_+=\sqrt{M}(\sqrt{1+c}-\gammain \sqrt{\xi}).$ The first integral can by solved  by-parts, and the later we recognize as the Gauss error function. Hence,
\begin{equation*}
 J_n \weq \frac{\exp (M\xi(\gammain^2-1))}{cM}\left(\frac{1}{2}(\exp(-u_+^2)-\exp(-u_-^2))+\frac{1}{2} \gammain \sqrt{\xi M}\sqrt{\pi}(\erf(u_+)-\erf(u_-))\right),
\end{equation*}
where $\erf(t) = 2 / \sqrt{\pi} \int_0^t e^{-t^2} \diff t.$ Moreover, the quantity $\gammain = \E_{Y \sim \Din} [ \sqrt{Y} ]$ can be computed explicitly. We obtain $\gammain = \frac{1}{2c} \int_{1-c}^{1+c} \sqrt{x} \diff x = \frac{1}{3c} \left( (1+c)^{\frac{3}{2}} - (1-c)^{\frac{3}{2}} \right)$. We denote this last quantity by $\gamma_c$, to emphasize that it depends only on $c$.
\qed

\subsection{Discussion Relative to Examples~\ref{example:recovery_pabm_degree} and~\ref{example:recovery_pabm_xi}}
\label{appendix:discussion_examples}

This involved expression of $J_n$ computed in Proposition~\ref{prop:recovery_rate_pabm} is well-behaved and practically interesting for particular values of $\xi$ and $c$. As such, a few remarks are in order. 

First (resuming Example~\ref{example:recovery_pabm_degree}), when $\xi=1$, we have $u_-=\sqrt{1-c}-\gamma<0$ and $u_+=\sqrt{1+c}-\gamma>0$ for all $c \in (0,1]$. Moreover, $M\to\infty$ as $n\to\infty,$ and 
 $u_{\mp}\to \mp\infty.$ So $J$ simplifies to the much simpler expression
\begin{align*}
  J_n \weq \frac{\gamma_c}{c} \sqrt{ \frac{k \pi}{ n \rho_n p_0}} \ \exp\left( - \frac{n\rho_n}{k}p_0 (1-\gamma_c^2) \right),
\end{align*} 
which only depends on $c$ (recall $\gamma_c = \frac{1}{3c} \left( (1+c)^{\frac{3}{2}} - (1-c)^{\frac{3}{2}} \right)$) and is monotonically decreasing over $c\in (0,1]$. This agrees with the following intuitive fact: as $c$ increases, the higher variance in the popularity heterogeneity aids recovery. 

Another interesting case is when we fix $c$ as in Example~\ref{example:recovery_pabm_xi}. As $\xi$ increases from $0$ to $1,$ $J$ first monotonically increase, then monotonically decrease and approaches $0$ as $\xi\to 1.$ In particular, $\xi=0$ corresponds to disconnected communities, hence clustering is trivial. As $\xi$ increases, the additional inter-cluster edges act as noise to our classification task. On the other hand, a very large $\xi$ allows us to better learn from the popularity patterns as $q_0$ gets closer and closer to $p_0,$ and leverage from the variance introduced by $c.$ Especially, $u_{\mp}\to \mp\infty$ when $\xi > \xi_0$ for some constant $c_0\in (0,1).$ Hence in this regime, classification is easy, as $J_n = \frac{\gamma_c\sqrt{\xi\pi}\exp (-M\xi(1-\gamma_c^2))}{c\sqrt{M}} \to 0$ as $M\to \infty$ and $\gamma_c^2-1 < 0$. This phenomena illustrates an interesting duality of the role of inter-cluster edges\textemdash they act as noise below a threshold $\xi_0,$ yet serves to emphasize the popularity variance introduced by $c$ above the same threshold.

To better illustrate this two phenomenon, we plot in Figure~\ref{fig:optimal_error_rate_uniform_ones} the error rates obtained for homogeneous PABM where $\lambdaout_i = 1$ and the $\lambdain$ are sampled from $\Uni(1-c,1+c)$. This illustrate the phenomenon highlighted by the Examples~\ref{example:recovery_pabm_degree} and~\ref{example:recovery_pabm_xi}: (i) the error rate do not vanish when the edge-density signal disappear and (ii) the error rate is not monotonously decreasing with the edge-density signal. 

 \begin{figure}[!ht]
 \centering
 \begin{subfigure}[b]{0.4\textwidth}
  \includegraphics[width=1.0\linewidth]{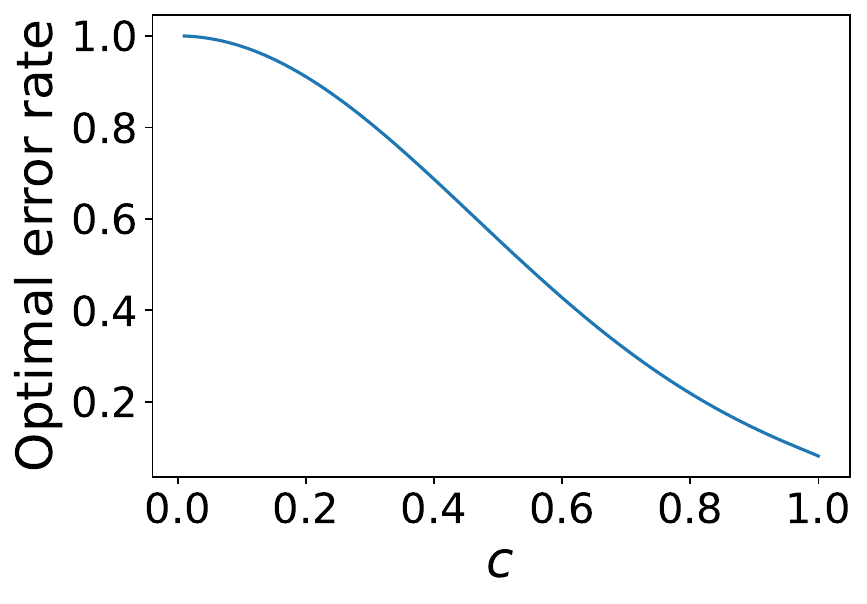}
  \caption{$\xi = 1$}
  \label{fig:optimal_error_rate_uniform_ones_varying_c}
 \end{subfigure}
 \hfill 
 \begin{subfigure}[b]{0.4\textwidth}
  \includegraphics[width=1.0\linewidth]{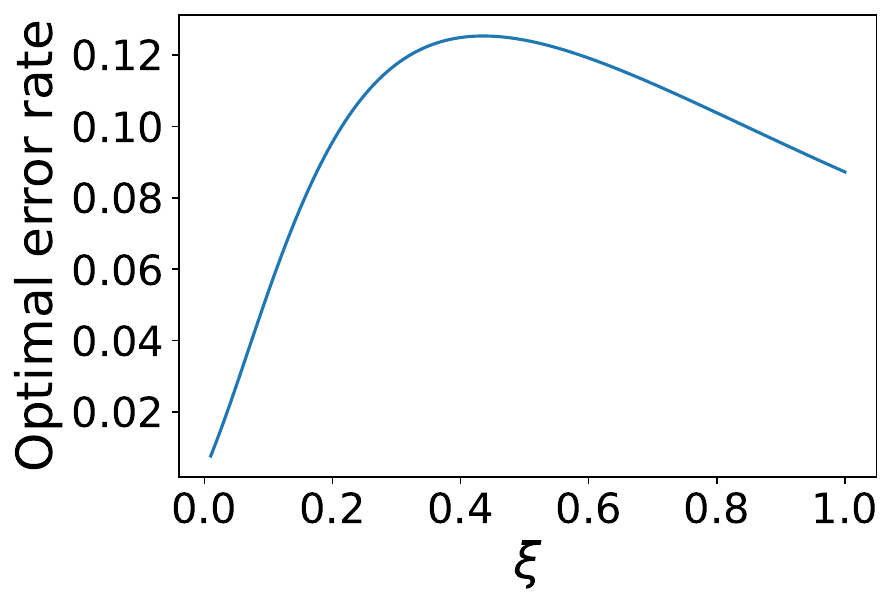}
  \caption{$c = 0.8$}
  \label{fig:optimal_error_rate_uniform_ones_varying_xi}
 \end{subfigure}
 \caption{Optimal error rate on PABM with homogeneous interactions. The matrix $P$ is given in Equation~\eqref{eq:experiments_homogeneous_PABM}, and we let $n=900$ vertices, $k=3$ clusters of same size, average edge density $\rho = 0.05$, and interaction probabilities $p = \rho$ and $q = \xi p$. In both figures, the quantities $\lambdain_i$ are iid sampled from $\Din = \Uni(1-c,1+c)$ and the $\lambdaout_i$ are all equal to one. In Figure~\ref{fig:optimal_error_rate_uniform_ones_varying_c}, we let $\xi = 1$ an vary $c$, while in Figure~\ref{fig:optimal_error_rate_uniform_ones_varying_xi} we let $c = 0.8$ and we vary $\xi$. The optimal error rates are computed using the formula obtained in Proposition~\ref{prop:recovery_rate_pabm}.
 }
 \label{fig:optimal_error_rate_uniform_ones}
\end{figure}

To show that these phenomena are not artifact of setting the $\lambdaout$ all equal to 1 and sampling the $\lambdain$ from a particular distribution, we also provide in Figure~\ref{fig:optimal_error_rate_other_distributions} plot of the optimal error rate (as given by the formula derived in Proposition~\ref{prop:recovery_rate_pabm}) when the coefficients $\lambdain$ and $\lambdaout$ are sampled from different distributions. 

 \begin{figure}[!ht]
 \centering
 \begin{subfigure}[b]{0.4\textwidth}
  \includegraphics[width=1.0\linewidth]{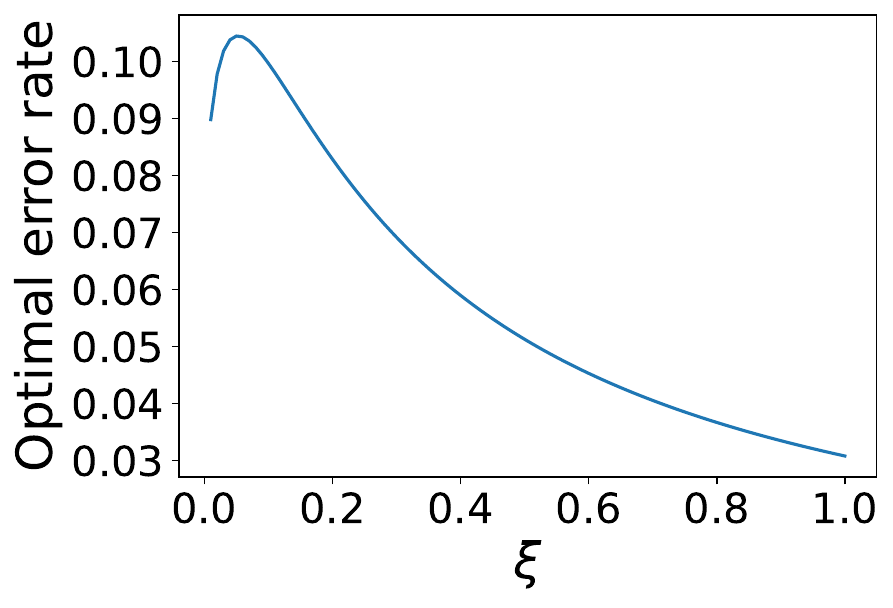}
  \caption{exponential}
  \label{fig:optimal_error_rate_other_distributions_exponential}
 \end{subfigure}
 \hfill 
 \begin{subfigure}[b]{0.4\textwidth}
  \includegraphics[width=1.0\linewidth]{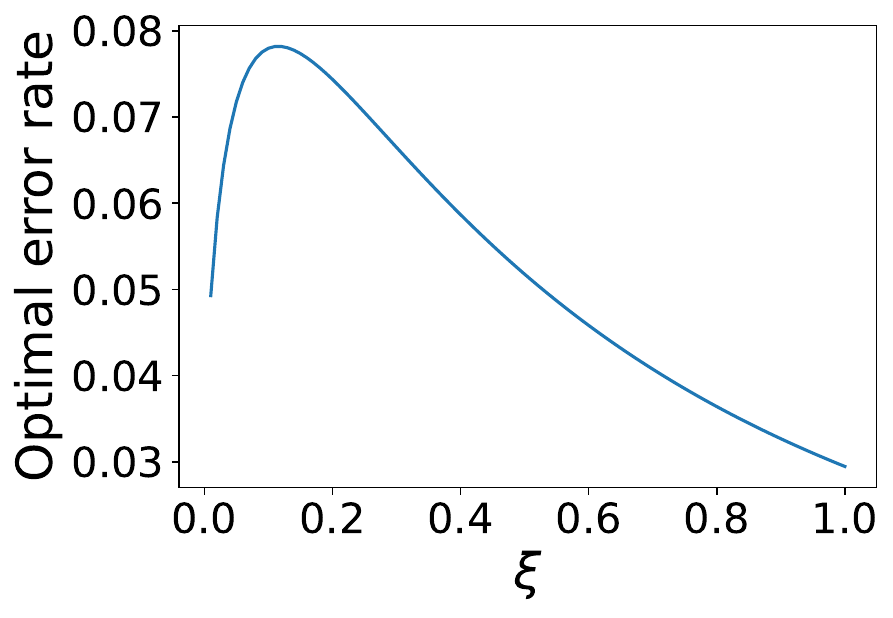}
  \caption{lognormal}
  \label{fig:optimal_error_rate_other_distributions_logNormal}
 \end{subfigure}
 \caption{Numerical values obtained for the optimal error rate $\frac{1}{n} \sum_i \exp(- \Chernoff(i,z^*))$ on PABM with homogeneous interactions. The matrix $P$ is given in Equation~\eqref{eq:experiments_homogeneous_PABM}, and we let $n=900$ vertices, $k=3$ clusters of same size, average edge density $\rho = 0.05$, and interaction probabilities $p = \rho$ and $q = \xi p$. In both figures, the quantities $\lambdain_i$ and $\lambdaout$ are iid sampled from a distribution $\cD$.
 Figure~\ref{fig:optimal_error_rate_other_distributions_exponential}: $\cD$ is the exponential distribution with mean 1. Figure~\ref{fig:optimal_error_rate_other_distributions_logNormal}: $\cD$ is the log-normal distribution with parameters $(\mu, \sigma) = (-1/2,1)$ (chosen so that the mean of the distribution is $1$). }
 \label{fig:optimal_error_rate_other_distributions}
\end{figure}

\section{Description of the Algorithms}
\label{appendix:description_algos}

\subsection{Variants of Spectral Clustering with \texorpdfstring{$k$}{k} Eigenvectors}

Algorithms~\ref{algo:sc_scikit-learn}, \ref{algo:sc_sbm}, and \ref{algo:sc_dcbm} provide the \textit{sklearn}, \textit{sbm}, and \textit{dcbm} variants of spectral clustering, respectively.

\begin{algorithm}[!ht]
\caption{Spectral Clustering: scikit-learn}
\label{algo:sc_scikit-learn}
\KwInput{ Adjacency matrix $A \in \R_+^{n \times n}$, number of clusters $k$ }
\KwOutput{ Predicted community memberships $\hz \in [k]^n$ }

Let $D = \diag( D 1_n )$ be the degree matrix

Compute the normalized Laplacian $\cL = I_n - D^{-1/2} A D^{-1/2}$ 

Compute the $k$ eigenvectors of $\cL$ associated to its $k$ smallest eigenvalues. Construct $V \in \R^{n \times k}$ using these eigenvectors as its columns. 
  
Let $\hz \in [k]^n$ be the output of Lloyd's algorithm (to solve the $k$-means problem) on the cloud of $k$-dimensional points $V_{i \cdot})_{i \in [n]}$. 
\end{algorithm}

\begin{algorithm}[!ht]
\caption{Spectral Clustering: standard block model variant}
\label{algo:sc_sbm}
\KwInput{ Adjacency matrix $A \in \R_+^{n \times n}$, number of clusters $k$ }
\KwOutput{ Predicted community memberships $\hz \in [k]^n$ }

Compute the $k$ eigenvectors $v_1, \cdots, v_k$ of $A$ associated to its $k$ largest eigenvalues (in absolute value) $|\sigma_1| \ge \cdots \ge |\sigma_k|$. Let $V = (v_1, \cdots, v_k) \in \R^{n \times k}$ and $\Sigma = \diag( \sigma_1, \cdots, \sigma_k) \in \R^{k \times k}$. 
  
Let $\hz \in [k]^n$ be the output of Lloyd's algorithm (to solve the $k$-means problem) on the cloud of $k$-dimensional points $( (V \Sigma)_{i \cdot})_{i \in [n]}$. 
\end{algorithm}

\begin{algorithm}[!ht]
\caption{Spectral Clustering: degree-corrected block model variant}
\label{algo:sc_dcbm}
\KwInput{Adjacency matrix $A \in \R_+^{n \times n}$, number of clusters $k$}
\KwOutput{ Predicted community memberships $\hz \in [k]^n$ }

Compute the $k$ eigenvectors $v_1, \cdots, v_k$ of $A$ associated to its $k$ largest eigenvalues (in absolute value) $|\sigma_1| \ge \cdots \ge |\sigma_k|$. Let $V = (v_1, \cdots, v_k) \in \R^{n \times k}$ and $\Sigma = \diag( \sigma_1, \cdots, \sigma_k) \in \R^{k \times k}$. 

Let $\hP = V \Sigma V^T$

Let $S_0 = \{ i \in [n] \colon \| P_{i\cdot} \|_1 = 0 \}$. Define $\tP_{i\cdot } = P_{i\cdot} / \| P_{i\cdot} \|_1 $ for $i \in S_0^c$ and $\tP_{i\cdot } = P_{i\cdot} $ for $i \in S_0$. 

Let $\hz \in [k]^n$ be the output of Lloyd's algorithm (to solve the $k$-means problem) on the cloud of $n$-dimensional points $( \hP_{i \cdot})_{i \in S_0^c}$ (note that we assign the vertices of $S_0$ arbitrarily). 
\end{algorithm}

\subsection{Variants of Spectral Clustering with \texorpdfstring{$k^2$}{k2} Eigenvectors}

In this section, we describe two algorithms proposed in the litterature for clustering PABM. 

\paragraph{Orthogonal Spectral Clustering} \cite{koo2023popularity} observed that PABM is a special case of the Generalized Random Dot Product Graph (GRDPG) for which the latent position vectors lie in distinct orthogonal subspaces, each subspace corresponding to a community. This leads to Algorithm~\ref{algo:osc}. 

\begin{algorithm}[!ht]
\caption{Orthogonal Spectral Clustering}
\label{algo:osc}
\KwInput{Adjacency matrix $A \in \R_+^{n \times n}$, number of clusters $k$ }
\KwOutput{Predicted clusters $\hz \in [k]^n$ }

Compute the eigenvectors of $A$ associated to its $k(k + 1)/2$ most positive eigenvalues and $k(k - 1)/2$ most negative eigenvalues. Construct $V \in \R^{n \times k^2}$ using these eigenvectors as its columns. 

Compute $B = |nV V^T | \in \R^{n \times n}$, applying $| \cdot |$ entry-wise. 
  
Let $\hz \in [k]^n$ be the output of spectral clustering (see Algorithm~\ref{algo:sc_scikit-learn}) applied on the graph whose adjacency matrix is $B$. 
\end{algorithm}

%\cite[Theorem~3]{koo2023popularity} establishes that Algorithm~\ref{algo:osc} is strongly consistent if $n \rho_n = \omega(\log^4 n)$. 
%In the following theorem, we reduce this sparsity requirement to $n \rho_n = \omega(\log n)$. 

%\begin{theorem}
% Suppose $n \rho_n = \omega(\log n)$. Then, Algorithm~\ref{algo:osc} exactly recovers $z^*$.
%\end{theorem}

\paragraph{Subspace Spectral Clustering}

\cite{noroozi2021estimation} proposes another approach to cluster PABM. In particular, they notice that the expected adjacency matrix of a PABM has a rank between $k$ and~$k^2$ and is composed of subspaces. In particular, two vertices in the same community belong to the same subspace. This motivates the usage of subspace clustering, as opposed to $k$-means, for clustering the cloud of point obtained via the spectral embedding. For subspace clustering, we use the implementation provided in~\cite{you2016scalable} and available at \url{https://github.com/ChongYou/subspace-clustering}, and we refer to~\cite{elhamifar2013sparse} for an introduction on (sparse) subspace clustering. We summarized this in Algorithm~\ref{algo:sc_pabm}. 

\begin{algorithm}[!ht]
\caption{Subspace Clustering on Spectral Embedding}
\label{algo:sc_pabm}
\KwInput{Adjacency matrix $A \in \R_+^{n \times n}$, number of clusters $k$, embedding dimension $d$ (default: $d = k^2$) }
\KwOutput{Predicted clusters $\hz \in [k]^n$ }

Compute the $d$ eigenvectors $v_1, \cdots, v_d$ of $A$ associated to its $d$ largest eigenvalues (in absolute value) $|\sigma_1| \ge \cdots \ge |\sigma_d|$. Construct $V = (v_1, \cdots, v_d) \in \R^{n \times k}$ and $\Sigma = \diag( \sigma_1, \cdots, \sigma_d)$. 

Let $\hz \in [k]^n$ be the output of \textit{subspace clustering} on the cloud of $d$-dimensional points $( (V \Sigma)_{i \cdot})_{i \in [n]}$. 
\end{algorithm}

\subsection{Additional Clustering Algorithms}
\label{appendix:additional_clustering_algos}

The algorithm from~\cite{bhadra2025unified} is an iterative community detection method designed for the Popularity-Adjusted Block Model (PABM). It begins by computing an adjacency spectral embedding of the network into a low-dimensional space of dimension $d$ (where typically $d = k^2$). For each tentative community, a subspace is estimated via singular value decomposition of the node embeddings in that cluster. The algorithm then greedily reassigns nodes to the community whose subspace yields the smallest projection error, thereby minimizing the objective function. This process iterates until node assignments stabilize, yielding a community structure tailored to the PABM. Although the original paper does not assign a name to the algorithm, we refer to it as Greedy Subspace Projection Clustering (\textit{gspc}). Algorithm~\ref{alg:gspc} provides the pseudo-code.

\begin{algorithm}[!ht]
\caption{Greedy Subspace Projection Clustering (\textit{gspc})}
\label{alg:gspc}
\KwIn{Adjacency matrix $A \in \mathbb{R}^{n \times n}$, number of communities $K$, embedding dimension $d$ (default: $d=k^2$), initial cluster labels $z^{(0)} \in [k]^n$}
\KwOut{Final cluster labels $\hz \in [k]^n$}

Compute adjacency spectral embedding $X \in \mathbb{R}^{n \times d}$ from $A$\;
Initialize cluster labels $\hz \gets z^{(0)}$\;

\Repeat{no label changes or maximum iterations reached}{
    \For{$k \gets 1$ \KwTo $K$}{
        Extract $X_k \gets \{x_i : \ell_i = k\}$\;
        Compute leading $d$ left singular vectors $U_k$ of $X_k$\;
    }
    \For{$i \gets 1$ \KwTo $n$}{
        \For{$k \gets 1$ \KwTo $K$}{
            Compute projection loss $L_{ik} \gets \|x_i - U_k U_k^\top x_i\|^2$\;
        }
        Update $\hz_i \gets \arg\min_k L_{ik}$\;
    }
}
\Return $\hz$\;
\end{algorithm}

Thresholded Cosine Spectral Clustering (\textit{tcsc}), proposed in~\cite{yuan2025strongly}, begins by computing the top $k^2$ eigenvectors of the adjacency matrix to capture structural information. Cosine similarities between eigenvector rows are then calculated and thresholded to suppress noise. Finally, Lloyd’s algorithm is applied to the thresholded similarity representation to output the predicted cluster labels. 
Finally, \cite{yuan2025strongly} also proposes to refine the cluster labels obtained by \textit{tcsc}. This leads to Refined Thresholded Cosine Spectral Clustering (\textit{r-tcsc}), which improve upon the initial labels from \textit{tcsc} by re-estimating block connection probabilities and then reassigning vertices to clusters according to a profile likelihood criterion. This refinement step reduces misclassifications and yields more accurate community recovery. Pseudo-code for \textit{tcsc} is provided in Algorithm~\ref{algo:tcsc}, and the reader is refered to \cite[Theorem~2]{yuan2025strongly} for the refinement step. 

\begin{algorithm}[!ht]
\caption{Thresholded Cosine Spectral Clustering (\textit{tcsc})}
\label{algo:tcsc}
\KwIn{Adjacency matrix $A \in \mathbb{R}^{n \times n}$, number of communities $k$}
\KwOut{Predicted clusters $\hz \in [k]^n$}

\BlankLine
Compute the top-$K^2$ eigenvectors of $A$ and form $U \in \mathbb{R}^{n \times K^2}$. \\

For each pair of rows $U_i, U_j$, compute the cosine similarity
$
S_{ij} = \frac{\langle U_i, U_j \rangle}{\|U_i\| \, \|U_j\|}.
$

Apply thresholding: set $S_{ij} = 0$ if $S_{ij} < \tau$, where $\tau$ is a data-driven threshold. \\

Apply Lloyd’s algorithm (\(k\)-means) to the rows of $S$ to obtain the cluster labels $\hz$.
\end{algorithm}

\subsection{Rank Analysis in PABM}
\label{subsec:rank_pabm}

For simplicity, let us consider a PABM with $k=3$ blocks, and suppose that vertices are ordered such that the first $n_1$ vertices are in the first cluster, the next $n_2$ vertices are in the second cluster, and the last $n_3 = n - n_1 - n_2$ vertices are in the third cluster. For any vertex $i \in [n]$, we denote by $r_i$ its rank-indexing of its cluster (that is, $r_i = i$ if $i$ is in cluster 1, $r_i = i-n_1$ if $i$ is in cluster 2, and $r_i = i-n_1-n_2$ if $i$ is in cluster 3). Denote $\Lambda^{(a,b)}$ the matrix of size $n_a$-by-$1$ such that $\Lambda^{(a,b)}_{r_i} = \lambda_{i b}$. We also assume that $B_{ab} = p 1\{a=b\} + q 1\{a \ne b \}$ with $p \ne q$. Then, the matrix $P$ is given by 
 \begin{align*}
  P = 
  \begin{pmatrix}
    p \Lambda^{(1,1)} (\Lambda^{(1,1)} )^T & 
    q \Lambda^{(1,2)} (\Lambda^{(2,1)} )^T & 
    q \Lambda^{(1,3)} (\Lambda^{(3,1)} )^T \\
    q \Lambda^{(2,1)} (\Lambda^{(1,2)} )^T & 
    p \Lambda^{(2,2)} (\Lambda^{(2,2)} )^T & 
    q \Lambda^{(2,3)} (\Lambda^{(3,2)} )^T \\
    q \Lambda^{(3,1)} (\Lambda^{(1,3)} )^T & 
    q \Lambda^{(3,2)} (\Lambda^{(2,3)} )^T & 
    p \Lambda^{(3,3)} (\Lambda^{(3,3)} )^T
  \end{pmatrix}.
 \end{align*}
 Thus, the matrix $P$ is composed of $k^2 = 9$ blocks of rank one. Excluding trivial cases, the rank of $P$ can take any value between $k = 3$ and $k^2 = 9$. 
 For example, if all the vectors $\Lambda^{(a,b)}$ are all-1 vectors, then $P$ has rank $1$. But, if $\Lambda^{(1,1)}$ contains entries that are not all equal to 1, the rank of $P$ increases to~$4$. Similarly, if both $\Lambda^{(1,1)}$ and $\Lambda^{(1,2)}$ contain non-constant entries, the rank of $P$ becomes 5, and so~on.

\section{Additional Numerical Experiments}

\subsection{Performance of \textit{tcsc} and \textit{gspc} }

In this section, we compare the accuracy obtained by \textit{tcsc} and \textit{gspc} with the accuracy of \textit{pabm} and \textit{osc} (and of \textit{sklearn} as a baseline). We sample PABM with homogeneous interactions, and take the same parameters as in Section~\ref{subsec:expe_synthetic}.

 \begin{figure}[!ht]
 \centering
 \begin{subfigure}[b]{0.4\textwidth}
  \includegraphics[width=1.0\linewidth]{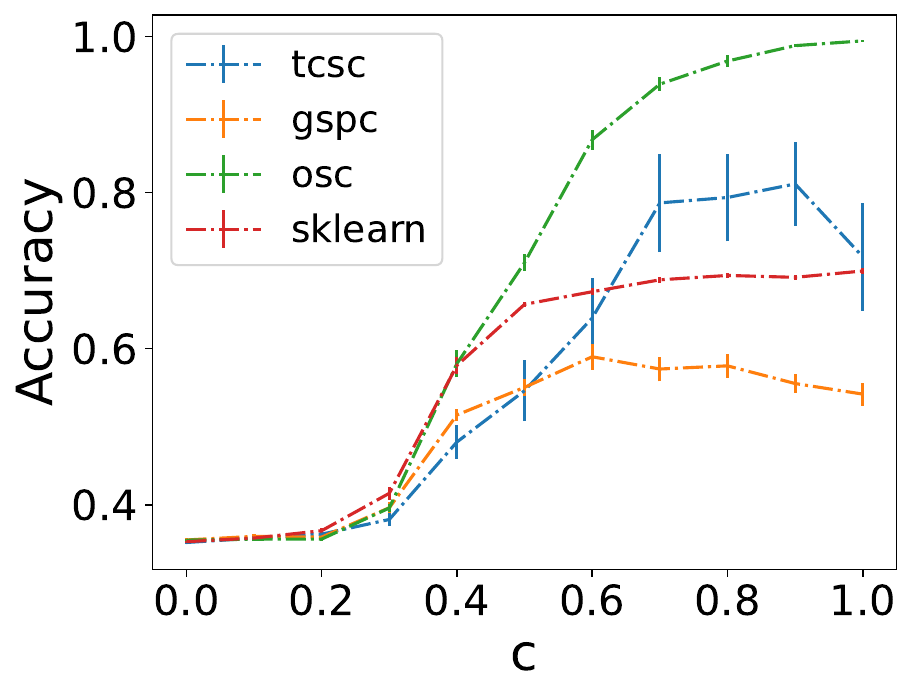}
  \caption{$\xi = 1$}
 \end{subfigure}
 \hfill 
 \begin{subfigure}[b]{0.4\textwidth}
  \includegraphics[width=1.0\linewidth]{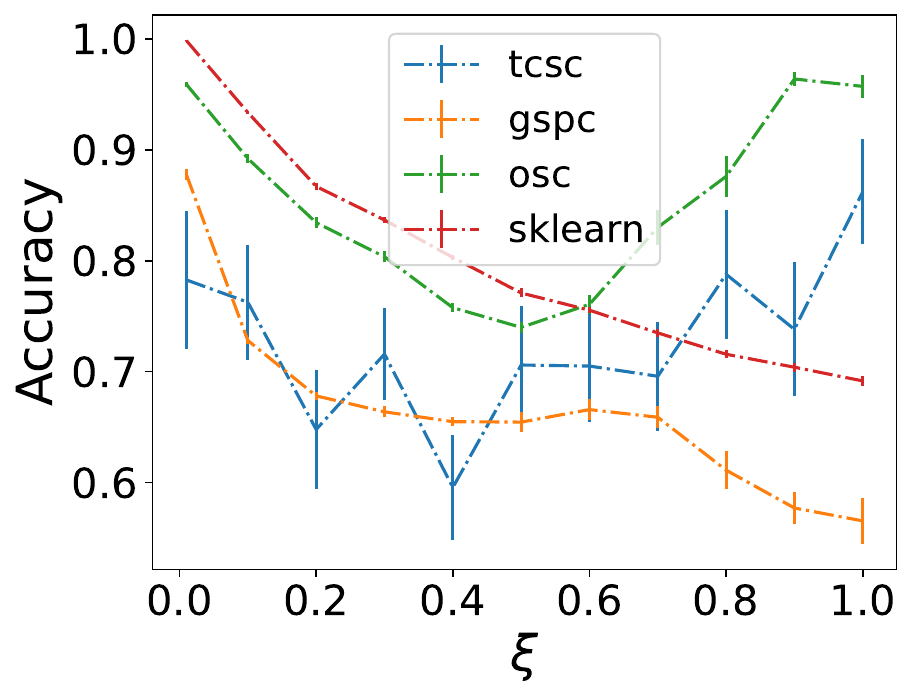}
  \caption{$c = 0.8$}
 \end{subfigure}
 \caption{Performance of graph clustering on homogeneous PABM, where the matrix~$P$ is given in Equation~\eqref{eq:experiments_homogeneous_PABM}. We sampled graphs with $n=900$ vertices in $k=3$ clusters of same size, average edge density $\rho = 0.05$. In both figures, the $\lambdain_i$ are iid sampled from $\Uni(1-c,1+c)$ and $\lambdaout_i = 1$ for all $i$. Accuracy is averaged over 15 realizations, and error bars show the standard errors.}
 \label{fig:homogeneous_uniform_pabm_algos}
\end{figure}

 \begin{figure}[!ht]
 \centering
 \begin{subfigure}[b]{0.4\textwidth}
  \includegraphics[width=1.0\linewidth]{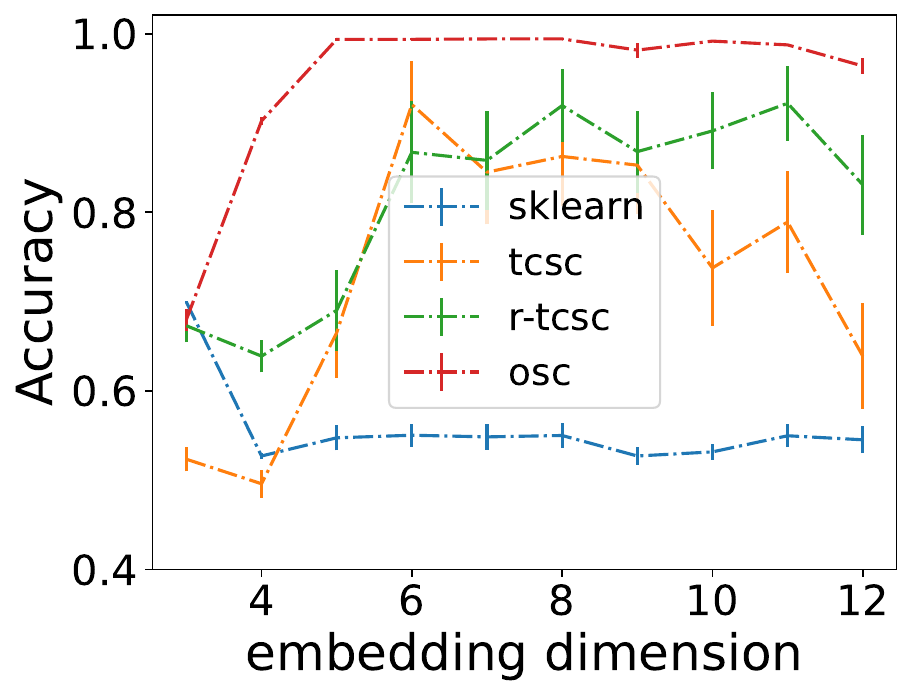}
  \caption{$\lambdaout_i = 1$}
 \end{subfigure}
 \hfill 
 \begin{subfigure}[b]{0.4\textwidth}
  \includegraphics[width=1.0\linewidth]{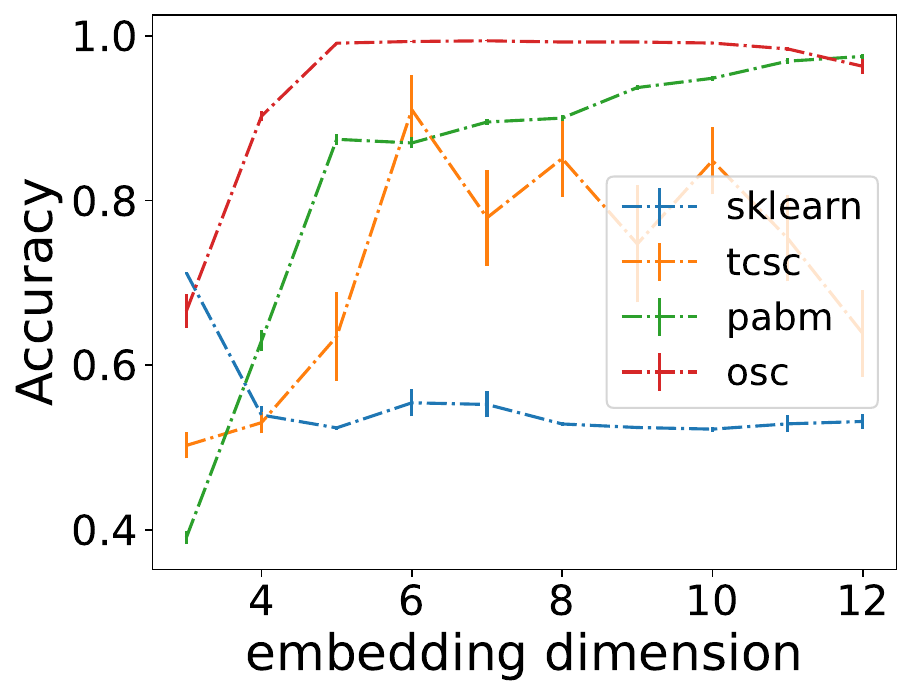}
  \caption{$\lambdaout_i \sim \Uni(0,2)$}
 \end{subfigure}
 \caption{Effect of the embedding dimension on the performance of graph clustering on homogeneous PABM, where the matrix~$P$ is given in Equation~\eqref{eq:experiments_homogeneous_PABM}. We sampled graphs with $n=900$ vertices in $k=3$ clusters of same size, average edge density $\rho = 0.05$. In both figures, the $\lambdain_i$ are iid sampled from $\Uni(0,2)$. Accuracy is averaged over 15 realizations, and error bars show the standard errors.}
 %\label{fig:homogeneous_uniform_embeddingDimension_pabm_algos}
\end{figure}

\subsection{Numerical Experiments on Heterogeneous PABM}

%\label{subsection:experiments_heterogeneous}
 We generate the coefficients $(\lambda_{ia})_{ i \in [n], a \in [k]}$ independently from each other and from a distribution with mean $1$ and bounded support so that $\sup_{i,a} \lambda_{ia} < 1 / \sqrt{ \rho }$, and let 
\begin{align}
\label{eq:experiments_heterogeneous_PABM}
 P_{ij} \weq 
 \begin{cases}
  \lambda_{iz_j^*} \lambda_{j z_i^*} \rho & \text{ if } z_i^* = z_j^*, \\ 
  \lambda_{iz_j^*} \lambda_{iz_i^*} \xi \rho & \text{ otherwise.}
 \end{cases}
\end{align}
%The parameter $\rho$ controls the overall sparsity of the network, while the parameter $\xi$ controls the fraction of intra-cluster edges. 
To generate the $\lambda_{ia}$, we consider the following three distributions: Pareto with exponent $1.5$, log-normal with location $0$ and shape $1$ and exponential with parameter $1$. The support of these distributions is unbounded. To avoid having values too low and too large for the coefficients $\lambda_{ia}$, we sample a random variable $v_{ia}$ following one of these three distributions, and let %$u_{ia} = v_{ia}$ if $ v_{ia} \in [\tau_{\min}, \tau_{\max}]$, $u_{ia} = \tau_{\min}$ if $v_{ia} < \tau_{\min}$, and $u_{ia} = \tau_{\max}$ if $u_{ia} > \tau_{\max}$. 
\[
 \lambda_{ia} \weq
 \begin{cases}
   v_{ia} & \text{ if } v_{ia} \in [\tau_{\min}, \tau_{\max}], \\
   \tau_{\min} & \text{ if } v_{ia} < \tau_{\min}, \\
   \tau_{\max} & \text{ if } v_{ia} > \tau_{\max}.
 \end{cases} 
\]
In all experiments, we set $\tau_{\min} = 0.05$ and $\tau_{\max} = 5$. Finally, we normalize the $\lambda_{ia}$ to ensure that $\sum_i \lambda_{ia} = 1$ for all $a\in[k]$. 
Figure~\ref{fig:heterogeneous_xi} show that \textit{pabm} and \textit{osc} almost always outperform the \textit{sbm} and \textit{dcbm} variants. 

\begin{figure}[!ht]
 \centering
 \begin{subfigure}[b]{0.32\textwidth}
  \includegraphics[width=1.0\linewidth]{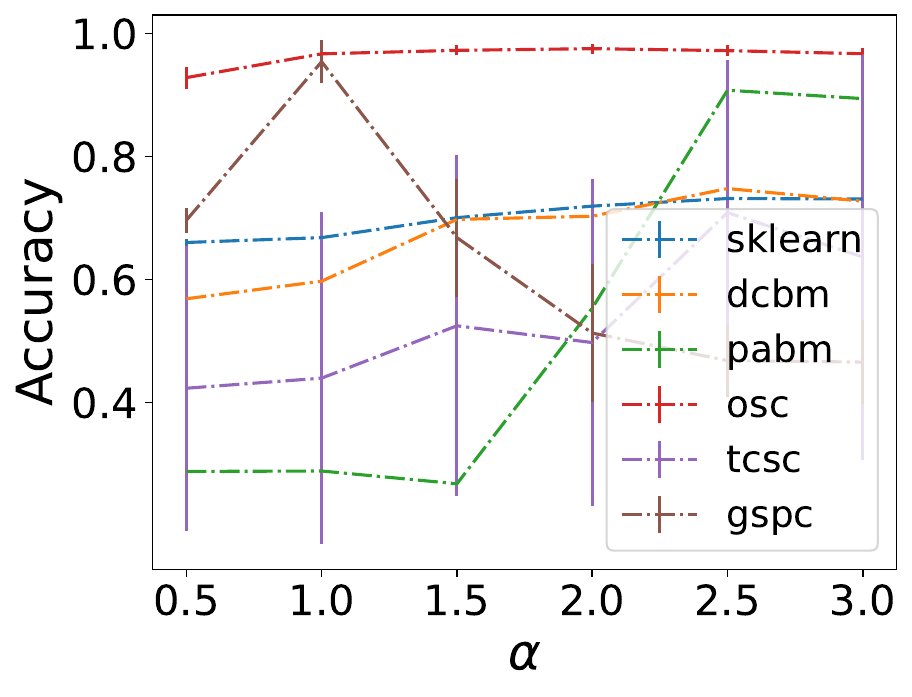}
  \caption{Pareto}
 \end{subfigure}
 \hfill 
 \begin{subfigure}[b]{0.32\textwidth}
  \includegraphics[width=1.0\linewidth]{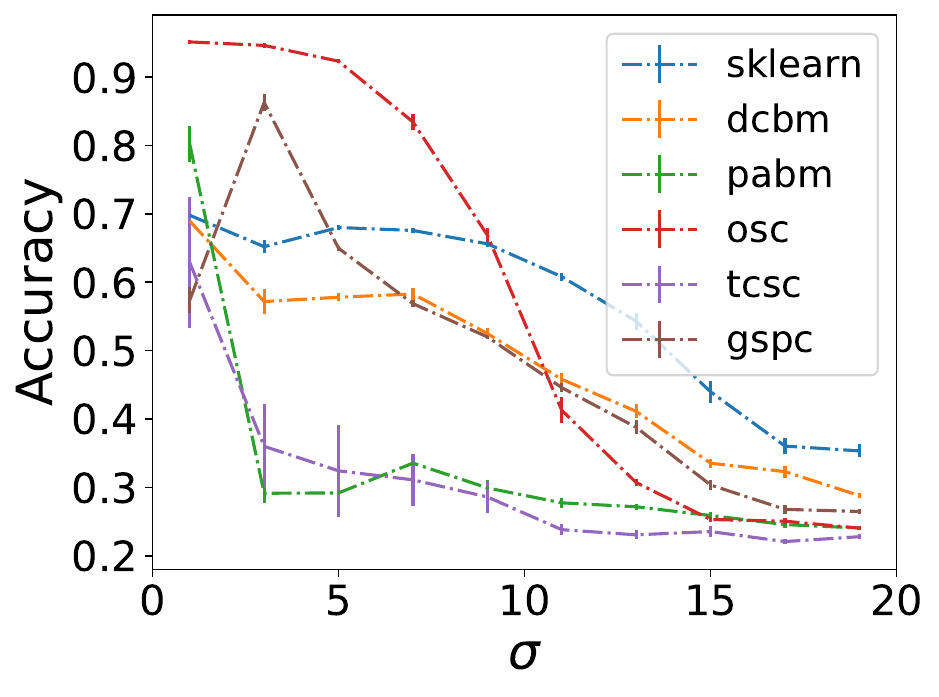}
  \caption{log-normal}
 \end{subfigure}
 \hfill 
 \begin{subfigure}[b]{0.32\textwidth}
  \includegraphics[width=1.0\linewidth]{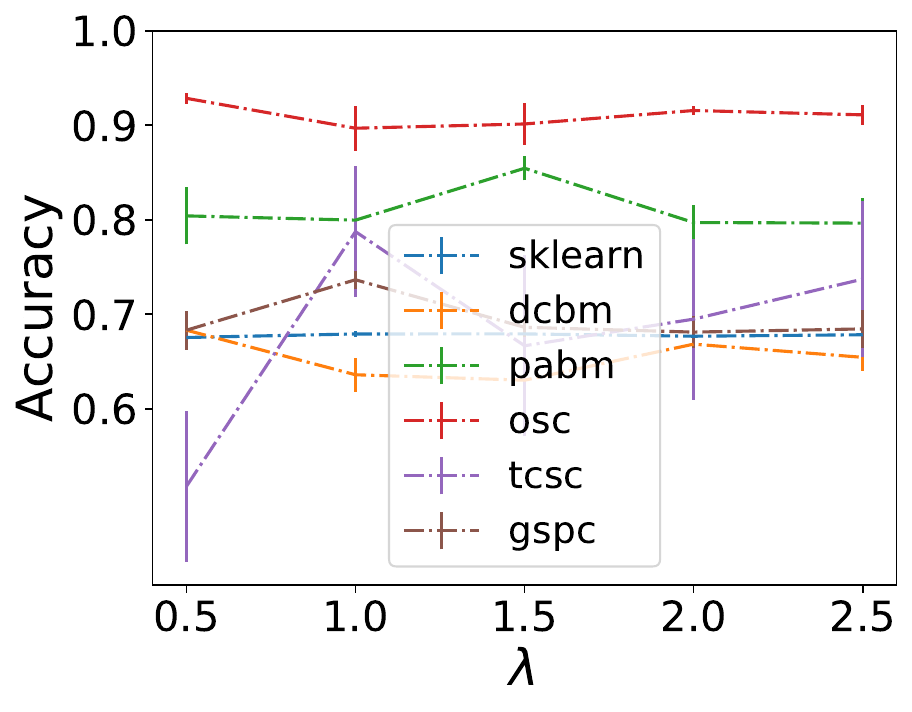}
  \caption{exponential}
 \end{subfigure}
 \caption{Performance of clustering algorithms on heterogeneous PABM, where the matrix $P$ is given in~\eqref{eq:experiments_heterogeneous_PABM} with $\rho = 0.05$, and the $\lambda_{ia}$ coefficients are sampled as described in the text. The curve show the average accuracy on 10 realization of PABM with $n=2000$ vertices in $k=5$ clusters of same size. Error bars show the standard errors (over 15 realizations).}
 \label{fig:heterogeneous_xi}
\end{figure}

\subsection{Real Data Sets Description}

Table~\ref{tab:statistics_real_datasets} provides some statistics about the graph used. For all graphs, we only considered the largest connected components. Moreover, for \textit{LiveJournal} data set, we extract the two largest clusters. Finally, for MNIST, FashionMNIST and Cifar10, we first embed the images into a low-dimensional space and we consider the $k$-nearest neighbor graph (with $k=10$) obtained from $n=10,000$ images. We use the embedding provided in the \textit{graphlearning} package.\footnote{\url{https://pypi.org/project/graphlearning/}.}

\label{appendix:real_datasets_description}
\begin{table}[!ht]
 \centering
 \begin{tabular}{ c ccccc c } \toprule
  data set & $n$ & $|E|$ & $k$ & $\bd$ & $\sqrt{ \overline{d^2} - (\bd)^2 }$ & Reference \\ \midrule
  political blog & 1,222 & 16,714 & 2 & 27.3 & 38.4 & \cite{adamic2005political} \\
  LiveJournal-top2 & 2,766 & 24,138 & 2 & 17.5 & 31.8 & \cite{backstrom2006group} \\
  citeseer & 2,110 & 3,668 & 6 & 3.5 & 4.0 & \cite{Getoor2005} \\
  cora & 2,485 & 5,069 & 7 & 4.1 & 5.4 & \cite{Getoor2005} \\
  %Caltech & 590 & 12,822 & 8 & 43.5 \\ 
  MNIST & 10,000 & 85,938 & 10 & 17.2 & 5.0 & \cite{mnist} \\
  FashionMNIST & 10,000 & 83,486 & 10 & 16.7 & 4.0 & \cite{xiao2017fashion} \\
  CIFAR-10 & 10,000 & 97,044 & 10 & 19.4 & 8.8 & \cite{cifar} \\ \bottomrule
 \end{tabular}
 \caption{Summary of some statistics of the real data sets considered. The quantities $n$, $|E|$, and $k$ refer to the number of vertices $n$, of edges, and of clusters. The quantities $\bd$ and $\sqrt{ \overline{d^2} - (\bd)^2 }$ refer to the average and standard deviation of the degrees, respectively.}
 \label{tab:statistics_real_datasets}
\end{table}

%%%%%%%%%%%%%%%%%%%%%%%%%%%%%%%%%%%%%%%%%%%%%%%%%%%%%%%%%%%%

\newpage
\section*{NeurIPS Paper Checklist}

\begin{enumerate}

\item {\bf Claims}
    \item[] Question: Do the main claims made in the abstract and introduction accurately reflect the paper's contributions and scope?
    \item[] Answer: \answerYes{} % Replace by \answerYes{}, \answerNo{}, or \answerNA{}.
    \item[] Justification: The abstract and the introduction clearly state all the results of the paper.
    \item[] Guidelines:
    \begin{itemize}
        \item The answer NA means that the abstract and introduction do not include the claims made in the paper.
        \item The abstract and/or introduction should clearly state the claims made, including the contributions made in the paper and important assumptions and limitations. A No or NA answer to this question will not be perceived well by the reviewers. 
        \item The claims made should match theoretical and experimental results, and reflect how much the results can be expected to generalize to other settings. 
        \item It is fine to include aspirational goals as motivation as long as it is clear that these goals are not attained by the paper. 
    \end{itemize}

\item {\bf Limitations}
    \item[] Question: Does the paper discuss the limitations of the work performed by the authors?
    \item[] Answer: \answerYes{} % Replace by \answerYes{}, \answerNo{}, or \answerNA{}.
    \item[] Justification: We discuss some limitations in the related work section as well as in the conclusion. We also mention in the numerical section than \textit{pabm} and \textit{osc} tend to be more computationally intensive than the other spectral clustering variants. 
    \item[] Guidelines:
    \begin{itemize}
        \item The answer NA means that the paper has no limitation while the answer No means that the paper has limitations, but those are not discussed in the paper. 
        \item The authors are encouraged to create a separate "Limitations" section in their paper.
        \item The paper should point out any strong assumptions and how robust the results are to violations of these assumptions (e.g., independence assumptions, noiseless settings, model well-specification, asymptotic approximations only holding locally). The authors should reflect on how these assumptions might be violated in practice and what the implications would be.
        \item The authors should reflect on the scope of the claims made, e.g., if the approach was only tested on a few datasets or with a few runs. In general, empirical results often depend on implicit assumptions, which should be articulated.
        \item The authors should reflect on the factors that influence the performance of the approach. For example, a facial recognition algorithm may perform poorly when image resolution is low or images are taken in low lighting. Or a speech-to-text system might not be used reliably to provide closed captions for online lectures because it fails to handle technical jargon.
        \item The authors should discuss the computational efficiency of the proposed algorithms and how they scale with dataset size.
        \item If applicable, the authors should discuss possible limitations of their approach to address problems of privacy and fairness.
        \item While the authors might fear that complete honesty about limitations might be used by reviewers as grounds for rejection, a worse outcome might be that reviewers discover limitations that aren't acknowledged in the paper. The authors should use their best judgment and recognize that individual actions in favor of transparency play an important role in developing norms that preserve the integrity of the community. Reviewers will be specifically instructed to not penalize honesty concerning limitations.
    \end{itemize}

\item {\bf Theory assumptions and proofs}
    \item[] Question: For each theoretical result, does the paper provide the full set of assumptions and a complete (and correct) proof?
    \item[] Answer: \answerYes{} % Replace by \answerYes{}, \answerNo{}, or \answerNA{}.
    \item[] Justification: All theorems are carefully stated and the assumptions are also explained and discussed. 
    \item[] Guidelines:
    \begin{itemize}
        \item The answer NA means that the paper does not include theoretical results. 
        \item All the theorems, formulas, and proofs in the paper should be numbered and cross-referenced.
        \item All assumptions should be clearly stated or referenced in the statement of any theorems.
        \item The proofs can either appear in the main paper or the supplemental material, but if they appear in the supplemental material, the authors are encouraged to provide a short proof sketch to provide intuition. 
        \item Inversely, any informal proof provided in the core of the paper should be complemented by formal proofs provided in appendix or supplemental material.
        \item Theorems and Lemmas that the proof relies upon should be properly referenced. 
    \end{itemize}

    \item {\bf Experimental result reproducibility}
    \item[] Question: Does the paper fully disclose all the information needed to reproduce the main experimental results of the paper to the extent that it affects the main claims and/or conclusions of the paper (regardless of whether the code and data are provided or not)?
    \item[] Answer: \answerYes{} % Replace by \answerYes{}, \answerNo{}, or \answerNA{}.
    \item[] Justification: All information to reproduce the experimental results is available in the paper (some details are in the Appendix). 
    \item[] Guidelines:
    \begin{itemize}
        \item The answer NA means that the paper does not include experiments.
        \item If the paper includes experiments, a No answer to this question will not be perceived well by the reviewers: Making the paper reproducible is important, regardless of whether the code and data are provided or not.
        \item If the contribution is a dataset and/or model, the authors should describe the steps taken to make their results reproducible or verifiable. 
        \item Depending on the contribution, reproducibility can be accomplished in various ways. For example, if the contribution is a novel architecture, describing the architecture fully might suffice, or if the contribution is a specific model and empirical evaluation, it may be necessary to either make it possible for others to replicate the model with the same dataset, or provide access to the model. In general. releasing code and data is often one good way to accomplish this, but reproducibility can also be provided via detailed instructions for how to replicate the results, access to a hosted model (e.g., in the case of a large language model), releasing of a model checkpoint, or other means that are appropriate to the research performed.
        \item While NeurIPS does not require releasing code, the conference does require all submissions to provide some reasonable avenue for reproducibility, which may depend on the nature of the contribution. For example
        \begin{enumerate}
            \item If the contribution is primarily a new algorithm, the paper should make it clear how to reproduce that algorithm.
            \item If the contribution is primarily a new model architecture, the paper should describe the architecture clearly and fully.
            \item If the contribution is a new model (e.g., a large language model), then there should either be a way to access this model for reproducing the results or a way to reproduce the model (e.g., with an open-source dataset or instructions for how to construct the dataset).
            \item We recognize that reproducibility may be tricky in some cases, in which case authors are welcome to describe the particular way they provide for reproducibility. In the case of closed-source models, it may be that access to the model is limited in some way (e.g., to registered users), but it should be possible for other researchers to have some path to reproducing or verifying the results.
        \end{enumerate}
    \end{itemize}

\item {\bf Open access to data and code}
    \item[] Question: Does the paper provide open access to the data and code, with sufficient instructions to faithfully reproduce the main experimental results, as described in supplemental material?
    \item[] Answer: \answerYes{} % Replace by \answerYes{}, \answerNo{}, or \answerNA{}.
    \item[] Justification: The code to reproduce the experiments is available. Furthermore, all datasets considered are fairly standard (and they are directly available in the code we provide). 
    \item[] Guidelines:
    \begin{itemize}
        \item The answer NA means that paper does not include experiments requiring code.
        \item Please see the NeurIPS code and data submission guidelines (\url{https://nips.cc/public/guides/CodeSubmissionPolicy}) for more details.
        \item While we encourage the release of code and data, we understand that this might not be possible, so “No” is an acceptable answer. Papers cannot be rejected simply for not including code, unless this is central to the contribution (e.g., for a new open-source benchmark).
        \item The instructions should contain the exact command and environment needed to run to reproduce the results. See the NeurIPS code and data submission guidelines (\url{https://nips.cc/public/guides/CodeSubmissionPolicy}) for more details.
        \item The authors should provide instructions on data access and preparation, including how to access the raw data, preprocessed data, intermediate data, and generated data, etc.
        \item The authors should provide scripts to reproduce all experimental results for the new proposed method and baselines. If only a subset of experiments are reproducible, they should state which ones are omitted from the script and why.
        \item At submission time, to preserve anonymity, the authors should release anonymized versions (if applicable).
        \item Providing as much information as possible in supplemental material (appended to the paper) is recommended, but including URLs to data and code is permitted.
    \end{itemize}

\item {\bf Experimental setting/details}
    \item[] Question: Does the paper specify all the training and test details (e.g., data splits, hyperparameters, how they were chosen, type of optimizer, etc.) necessary to understand the results?
    \item[] Answer: \answerYes{} % Replace by \answerYes{}, \answerNo{}, or \answerNA{}.
    \item[] Justification: Yes, all details regarding the experiments are specified in the paper. 
    \item[] Guidelines:
    \begin{itemize}
        \item The answer NA means that the paper does not include experiments.
        \item The experimental setting should be presented in the core of the paper to a level of detail that is necessary to appreciate the results and make sense of them.
        \item The full details can be provided either with the code, in appendix, or as supplemental material.
    \end{itemize}

\item {\bf Experiment statistical significance}
    \item[] Question: Does the paper report error bars suitably and correctly defined or other appropriate information about the statistical significance of the experiments?
    \item[] Answer: \answerYes{} % Replace by \answerYes{}, \answerNo{}, or \answerNA{}.
    \item[] Justification: All numerical results are presented with error bars indicating the standard error of the mean. 
    \item[] Guidelines:
    \begin{itemize}
        \item The answer NA means that the paper does not include experiments.
        \item The authors should answer "Yes" if the results are accompanied by error bars, confidence intervals, or statistical significance tests, at least for the experiments that support the main claims of the paper.
        \item The factors of variability that the error bars are capturing should be clearly stated (for example, train/test split, initialization, random drawing of some parameter, or overall run with given experimental conditions).
        \item The method for calculating the error bars should be explained (closed form formula, call to a library function, bootstrap, etc.)
        \item The assumptions made should be given (e.g., Normally distributed errors).
        \item It should be clear whether the error bar is the standard deviation or the standard error of the mean.
        \item It is OK to report 1-sigma error bars, but one should state it. The authors should preferably report a 2-sigma error bar than state that they have a 96\% CI, if the hypothesis of Normality of errors is not verified.
        \item For asymmetric distributions, the authors should be careful not to show in tables or figures symmetric error bars that would yield results that are out of range (e.g. negative error rates).
        \item If error bars are reported in tables or plots, The authors should explain in the text how they were calculated and reference the corresponding figures or tables in the text.
    \end{itemize}

\item {\bf Experiments compute resources}
    \item[] Question: For each experiment, does the paper provide sufficient information on the computer resources (type of compute workers, memory, time of execution) needed to reproduce the experiments?
    \item[] Answer: \answerYes{} % Replace by \answerYes{}, \answerNo{}, or \answerNA{}.
    \item[] Justification: We only used a laptop (CPU, no GPU) to perform the experiments. Some information on the time of execution are provided in the Appendix. 
    \item[] Guidelines:
    \begin{itemize}
        \item The answer NA means that the paper does not include experiments.
        \item The paper should indicate the type of compute workers CPU or GPU, internal cluster, or cloud provider, including relevant memory and storage.
        \item The paper should provide the amount of compute required for each of the individual experimental runs as well as estimate the total compute. 
        \item The paper should disclose whether the full research project required more compute than the experiments reported in the paper (e.g., preliminary or failed experiments that didn't make it into the paper). 
    \end{itemize}
    
\item {\bf Code of ethics}
    \item[] Question: Does the research conducted in the paper conform, in every respect, with the NeurIPS Code of Ethics \url{https://neurips.cc/public/EthicsGuidelines}?
    \item[] Answer: \answerYes{} % Replace by \answerYes{}, \answerNo{}, or \answerNA{}.
    \item[] Justification: Our work fully conforms with the NeurIPS Code of Ethics.
    \item[] Guidelines:
    \begin{itemize}
        \item The answer NA means that the authors have not reviewed the NeurIPS Code of Ethics.
        \item If the authors answer No, they should explain the special circumstances that require a deviation from the Code of Ethics.
        \item The authors should make sure to preserve anonymity (e.g., if there is a special consideration due to laws or regulations in their jurisdiction).
    \end{itemize}

\item {\bf Broader impacts}
    \item[] Question: Does the paper discuss both potential positive societal impacts and negative societal impacts of the work performed?
    \item[] Answer: \answerNA{} % Replace by \answerYes{}, \answerNo{}, or \answerNA{}.
    \item[] Justification: We do not directly discuss these impacts in the paper, as our main contribution is mostly a theoretic one. However, the impacts are the same as any (theoretic or applied) work on unsupervised learning. Indeed, graph clustering enhances the understanding of complex network structures in areas such as social sciences, biology, and information systems, potentially aiding in areas like public health interventions or knowledge discovery. However, we also acknowledge potential negative impacts, including privacy concerns and the risk of misuse in surveillance or profiling, especially when applied to social or communication networks without appropriate safeguards. These considerations highlight the importance of ethical deployment and transparency when applying such techniques. 
    \item[] Guidelines:
    \begin{itemize}
        \item The answer NA means that there is no societal impact of the work performed.
        \item If the authors answer NA or No, they should explain why their work has no societal impact or why the paper does not address societal impact.
        \item Examples of negative societal impacts include potential malicious or unintended uses (e.g., disinformation, generating fake profiles, surveillance), fairness considerations (e.g., deployment of technologies that could make decisions that unfairly impact specific groups), privacy considerations, and security considerations.
        \item The conference expects that many papers will be foundational research and not tied to particular applications, let alone deployments. However, if there is a direct path to any negative applications, the authors should point it out. For example, it is legitimate to point out that an improvement in the quality of generative models could be used to generate deepfakes for disinformation. On the other hand, it is not needed to point out that a generic algorithm for optimizing neural networks could enable people to train models that generate Deepfakes faster.
        \item The authors should consider possible harms that could arise when the technology is being used as intended and functioning correctly, harms that could arise when the technology is being used as intended but gives incorrect results, and harms following from (intentional or unintentional) misuse of the technology.
        \item If there are negative societal impacts, the authors could also discuss possible mitigation strategies (e.g., gated release of models, providing defenses in addition to attacks, mechanisms for monitoring misuse, mechanisms to monitor how a system learns from feedback over time, improving the efficiency and accessibility of ML).
    \end{itemize}
    
\item {\bf Safeguards}
    \item[] Question: Does the paper describe safeguards that have been put in place for responsible release of data or models that have a high risk for misuse (e.g., pretrained language models, image generators, or scraped datasets)?
    \item[] Answer: \answerNA{} % Replace by \answerYes{}, \answerNo{}, or \answerNA{}.
    \item[] Justification: All datasets used are already available in the literature. 
    \item[] Guidelines:
    \begin{itemize}
        \item The answer NA means that the paper poses no such risks.
        \item Released models that have a high risk for misuse or dual-use should be released with necessary safeguards to allow for controlled use of the model, for example by requiring that users adhere to usage guidelines or restrictions to access the model or implementing safety filters. 
        \item Datasets that have been scraped from the Internet could pose safety risks. The authors should describe how they avoided releasing unsafe images.
        \item We recognize that providing effective safeguards is challenging, and many papers do not require this, but we encourage authors to take this into account and make a best faith effort.
    \end{itemize}

\item {\bf Licenses for existing assets}
    \item[] Question: Are the creators or original owners of assets (e.g., code, data, models), used in the paper, properly credited and are the license and terms of use explicitly mentioned and properly respected?
    \item[] Answer: \answerYes{} % Replace by \answerYes{}, \answerNo{}, or \answerNA{}.
    \item[] Justification: We cite the original paper that produced the code and dataset that we used. 
    \item[] Guidelines:
    \begin{itemize}
        \item The answer NA means that the paper does not use existing assets.
        \item The authors should cite the original paper that produced the code package or dataset.
        \item The authors should state which version of the asset is used and, if possible, include a URL.
        \item The name of the license (e.g., CC-BY 4.0) should be included for each asset.
        \item For scraped data from a particular source (e.g., website), the copyright and terms of service of that source should be provided.
        \item If assets are released, the license, copyright information, and terms of use in the package should be provided. For popular datasets, \url{paperswithcode.com/datasets} has curated licenses for some datasets. Their licensing guide can help determine the license of a dataset.
        \item For existing datasets that are re-packaged, both the original license and the license of the derived asset (if it has changed) should be provided.
        \item If this information is not available online, the authors are encouraged to reach out to the asset's creators.
    \end{itemize}

\item {\bf New assets}
    \item[] Question: Are new assets introduced in the paper well documented and is the documentation provided alongside the assets?
    \item[] Answer: \answerYes{} % Replace by \answerYes{}, \answerNo{}, or \answerNA{}.
    \item[] Justification: Our code is well documented. 
    \item[] Guidelines:
    \begin{itemize}
        \item The answer NA means that the paper does not release new assets.
        \item Researchers should communicate the details of the dataset/code/model as part of their submissions via structured templates. This includes details about training, license, limitations, etc. 
        \item The paper should discuss whether and how consent was obtained from people whose asset is used.
        \item At submission time, remember to anonymize your assets (if applicable). You can either create an anonymized URL or include an anonymized zip file.
    \end{itemize}

\item {\bf Crowdsourcing and research with human subjects}
    \item[] Question: For crowdsourcing experiments and research with human subjects, does the paper include the full text of instructions given to participants and screenshots, if applicable, as well as details about compensation (if any)? 
    \item[] Answer: \answerNA{} % Replace by \answerYes{}, \answerNo{}, or \answerNA{}.
    \item[] Justification: The paper does not involve crowdsourcing nor research with human subjects.
    \item[] Guidelines:
    \begin{itemize}
        \item The answer NA means that the paper does not involve crowdsourcing nor research with human subjects.
        \item Including this information in the supplemental material is fine, but if the main contribution of the paper involves human subjects, then as much detail as possible should be included in the main paper. 
        \item According to the NeurIPS Code of Ethics, workers involved in data collection, curation, or other labor should be paid at least the minimum wage in the country of the data collector. 
    \end{itemize}

\item {\bf Institutional review board (IRB) approvals or equivalent for research with human subjects}
    \item[] Question: Does the paper describe potential risks incurred by study participants, whether such risks were disclosed to the subjects, and whether Institutional Review Board (IRB) approvals (or an equivalent approval/review based on the requirements of your country or institution) were obtained?
    \item[] Answer: \answerNA{} % Replace by \answerYes{}, \answerNo{}, or \answerNA{}.
    \item[] Justification: The paper does not involve crowdsourcing nor research with human subjects.
    \item[] Guidelines:
    \begin{itemize}
        \item The answer NA means that the paper does not involve crowdsourcing nor research with human subjects.
        \item Depending on the country in which research is conducted, IRB approval (or equivalent) may be required for any human subjects research. If you obtained IRB approval, you should clearly state this in the paper. 
        \item We recognize that the procedures for this may vary significantly between institutions and locations, and we expect authors to adhere to the NeurIPS Code of Ethics and the guidelines for their institution. 
        \item For initial submissions, do not include any information that would break anonymity (if applicable), such as the institution conducting the review.
    \end{itemize}

\item {\bf Declaration of LLM usage}
    \item[] Question: Does the paper describe the usage of LLMs if it is an important, original, or non-standard component of the core methods in this research? Note that if the LLM is used only for writing, editing, or formatting purposes and does not impact the core methodology, scientific rigorousness, or originality of the research, declaration is not required.
    %this research? 
    \item[] Answer: \answerNA{} % Replace by \answerYes{}, \answerNo{}, or \answerNA{}.
    \item[] Justification: The core method development in this research does not involve LLMs as any important, original, or non-standard components. 
    \item[] Guidelines:
    \begin{itemize}
        \item The answer NA means that the core method development in this research does not involve LLMs as any important, original, or non-standard components.
        \item Please refer to our LLM policy (\url{https://neurips.cc/Conferences/2025/LLM}) for what should or should not be described.
    \end{itemize}

\end{enumerate}

\end{document}